\let\hat\widehat
\let\tilde\widetilde
\newcommand{\vb}{\mathbf{v}}
\newcommand{\bu}{\bm{u}}
\newcommand{\bv}{\bm{v}}
\newcommand{\bx}{\bm{x}}
\newcommand{\by}{\bm{y}}
\newcommand{\bz}{\bm{z}}
\newcommand{\bH}{\bm{H}}
\newcommand{\bI}{\bm{I}}
\newcommand{\cA}{\mathcal{A}}
\newcommand{\cB}{\mathcal{B}}
\newcommand{\cD}{\mathcal{D}}
\newcommand{\cL}{\mathcal{L}}
\newcommand{\cN}{\mathcal{N}}
\newcommand{\cO}{\mathcal{O}}
\newcommand{\RR}{\mathbb{R}}
\newcommand{\btheta}{\bm{\theta}}
\newcommand{\bSigma}{\bm{\Sigma}}
\newcommand{\tr}{\mathop{\mathrm{tr}}}
\newcommand{\diag}{{\rm diag}}
\newcommand{\norm}[1]{\left\|#1\right\|}
\def\abs#1{\left| #1 \right|}
\newcommand*{\E}{\mathbb{E}}
\def\vb{{\bm{b}}}
\def\vd{{\bm{d}}}
\newcommand{\ve}{\@ifnextchar\bgroup{\velong}{{\bm{e}}}}
\newcommand{\velong}[1]{{\bm{#1}}}
\def\vg{{\bm{g}}}
\def\vm{{\bm{m}}}
\def\vv{{\bm{v}}}
\def\vx{{\bm{x}}}
\def\vy{{\bm{y}}}
\def\vz{{\bm{z}}}
\def\vtheta{{\bm{\theta}}}
\def\mA{{\bm{A}}}
\def\mB{{\bm{B}}}
\def\mI{{\bm{I}}}
\def\mW{{\bm{W}}}
\newcommand{\Sym}{{\rm Sym}}
\newcommand{\Loss}{\mathcal{L}}
\newcommand{\mf}[1]{\mathbf{#1}}
\newcommand{\mbb}[1]{\mathbb{#1}}
\newcommand{\tf}[1]{\textbf{#1}}
\newcommand{\sgd}{{SGD}}
\newcommand{\zosgd}{{ZO-SGD}}
\newcommand{\zosgds}{{ZO}}
\newcommand{\etazo}{\eta_{\text{ZO}}}
\newcommand{\etasgd}{\eta_{\text{SGD}}}
\newcommand{\ft}{{FT}}
\newcommand{\mezo}{MeZO}
\newcommand{\xmark}{%
\tikz[scale=0.23] {
    \draw[line width=0.7,line cap=round,red] (0,0) to [bend left=6] (1,1);
    \draw[line width=0.7,line cap=round,red] (0.2,0.95) to [bend right=3] (0.8,0.05);
}}
\newcommand{\cmark}{%
\tikz[scale=0.23] {
    \draw[line width=0.7,line cap=round,green!70!black] (0.25,0) to [bend left=10] (1,1);
    \draw[line width=0.8,line cap=round,green!70!black] (0,0.35) to [bend right=1] (0.23,0);
}}
\newcommand{\enote}[1]{{\color{blue}[EN: #1]}}
\newcommand{\snote}[1]{{\color{brown}[SM: #1]}}
\newcommand{\tnote}[1]{{\color{cyan}[TG: #1]}}
\newcommand{\todo}[1]{{\color{red}[TODO: #1]}}
\theoremstyle{plain}
\newtheorem{theorem}{Theorem}
\newtheorem{lemma}{Lemma}
\newtheorem{corollary}{Corollary}
\newtheorem{definition}{Definition}
\newtheorem{assumption}{Assumption}
\newtheorem{proposition}{Proposition}
\theoremstyle{remark}
\newtheorem{remark}{Remark}
\title{Fine-Tuning Language Models with Just \\Forward Passes}
\author{%
  Sadhika Malladi\thanks{Equal contribution and corresponding authors.} \\
  \And
  Tianyu Gao$^{\ast}$ \\
  \And
  Eshaan Nichani \\
  \And 
  Alex Damian \\
  \AND 
  Jason D. Lee \\
  \And
  Danqi Chen \\
  \And
  Sanjeev Arora \\
}
\begin{document}
\maketitle

\setcounter{footnote}{0}

%!TEX root = main.tex

\begin{abstract}
Fine-tuning language models (LMs) has yielded success on diverse downstream tasks, but as LMs grow in size, backpropagation requires a prohibitively large amount of memory. 
%As an alternative, 
Zeroth-order (ZO) methods can in principle estimate gradients using only two forward passes but are theorized to be catastrophically slow for optimizing large models. %  without backpropagation. 
%\danqi{I don't know if it is appropriate to emphasize that ZO is classical method, but nobody has evaluated it in the context of fine-tuning large language mdoels (or even large neural networks?).}
% In this work, we adapt a classical zeroth-order method 
% to fine-tune LMs with \textit{the same memory as inference}.
In this work, 
we propose a memory-efficient zeroth-order optimizer (\tf{\mezo{}}), 
adapting the classical \zosgd{} method to operate in-place, thereby fine-tuning LMs with \textit{the same memory footprint as inference}.
% In this work, we adapt the classical ZO method \zosgd{} to operate in-place, thereby fine-tuning LMs with \textit{the same memory as inference}.
% to optimize LMs using just two forward passes % to estimate and apply gradients, 
% with \textit{the same memory as inference}.
% we apply zeroth order methods to optimize LMs using just two forward passes to estimate and apply gradients in place, thereby requiring \textit{the same memory as inference}. \danqi{I think you should somehow describe ZO as an existing technique but the memory-efficient implementation and its application to LM fine-tuning as the contribution. Both empricial results and further analyses are contributions for sure. I just got confused what you mean by claiming `our method'.}
%\danqi{You probably should still mention zeroth order optimization - as it is an older technique. Just `using two forward passes' sounds a bit vague (and magical) at first galance. }
For example, with a single A100 80GB GPU, 
\mezo{} can train a 30-billion parameter model, whereas fine-tuning with backpropagation % backpropagation 
can train only a 2.7B LM with the same budget. %  \danqi{why compare to fine-tuning with Adam instead of SGD though?}
%For example, at a fixed memory budget, our method can fine-tune a 10x larger model than backpropagation.
%\danqi{I will make it even more concrete: our method can fine-tune 66B model on 2 A100 GPUs with 128GB memory, while regular fine-tuning requires WW times more memory. }
%(i.e., up to 92\% less memory than backpropagation).
%For example, we obtain up to a 92\% reduction in memory consumption when fine-tuning a 13B model.
%fine-tuning a 13B model on MultiRC (a SuperGLUE task) requires 316GB GPU memory with backpropagation, but only 26GB memory with our method, a 92\% reduction.
% We can also say it by number of GPUs
% We outperform other methods with comparable memory consumption, including in-context learning and linear probing, on a number of tasks and across model scales. 
We conduct comprehensive experiments across model types (masked and autoregressive LMs), model scales (up to 66B), and downstream tasks (classification, multiple-choice, and generation).
% Our results demonstrate the superiority of ZO over in-context learning and linear probing.
% ZO achieves comparable performance to fine-tuning with backpropagation across multiple tasks while consuming 12 times less memory.
% Moreover, we validate the compatibility of ZO with both full-parameter and parameter-efficient tuning techniques such as LoRA and prefix tuning. 
% Additionally, we showcase the effectiveness of ZO in optimizing non-differentiable objectives (e.g., maximizing accuracy or F1 score). 
Our results demonstrate that 
(1) \mezo{} significantly outperforms in-context learning and linear probing;
(2) \mezo{} achieves comparable performance to fine-tuning with backpropagation across multiple tasks, 
with up to 12$\times$ memory reduction and up to 2$\times$ GPU-hour reduction in our implementation; 
% with a memory reduction of up to 12 times;
% while consuming up to 12 times less memory; % \danqi{up to 12 times? it depends on the model sizes and tasks, right?}
(3) \mezo{} is compatible with both full-parameter and parameter-efficient tuning techniques such as LoRA and prefix tuning; 
(4) \mezo{} can effectively optimize non-differentiable objectives (e.g., maximizing accuracy or F1).
% Our results show that ZO significantly outperforms in-context learning and linear probing, 
% and approaches fine-tuning with backpropagation with a small margin.
% Further experiments show that ZO is compatible with both full-parameter and parameter-efficient tuning such as LoRA and prefix tuning. 
% We also demonstrate the viability of the method to optimize non-differentiable objectives (e.g., maximizing accuracy or F1 score). 
We support our empirical findings with theoretical insights, 
highlighting how adequate pre-training and task prompts enable \mezo{} to fine-tune huge models, despite classical ZO analyses suggesting otherwise.\footnote{Our code is available at \url{https://github.com/princeton-nlp/MeZO}.}
\end{abstract}

%!TEX root = main.tex

\section{Introduction}
Fine-tuning  pre-trained language models (LMs) has been the dominant methodology for 
solving many language tasks~\citep{devlin-etal-2019-bert}, 
adapting to specialized domains~\citep{gururangan-etal-2020-dont}, 
or incorporating human instructions and preferences~\citep{ouyang2022training}.
%Scaling model size has resulted in even stronger pre-trained models~\citep{brown2020language,openai2023gpt4}.
However, as LMs are scaled up~\citep{brown2020language,openai2023gpt4}, computing gradients for backpropagation requires a prohibitive amount of memory 
% \snote{For example a single RTX-2080 GPU can fine-tune a 1.3B model but no larger, and even the state of the art today can only fine-tune a 2.7B model without any additional tricks. The largest publicly available model is 175B, which would require XXX to fine-tune.}
-- in our test, up to $12\times$ the memory required for inference --
because it needs to cache activations during the forward pass, gradients during the backward pass, and, in the case of Adam~\citep{kingma2014adam}, also store gradient history~(see~\Cref{sec:memory} for a detailed analysis).
% \danqi{Such a comparison is important IMO. I think you probably should add SGD vs Adam, full-parameter vs LoRA, and ZO method in terms of both memory usage, and accuracy. I actually feel adding such a table at the top of page 2 as a teaser is better. }
  
%, often consuming up to 10x the memory required for inference~(\Cref{sec:memory}).
As a result, while it is possible to run inference with a 30-billion (30B) parameter LM on 
a single Nvidia A100 GPU (with 80GB memory),
% a GPU with 80GB of RAM (which is not expected to change in the next generation GPUs either), 
backpropagation with Adam is feasible only for a 2.7B LM. % with the same hardware. %~(\Cref{sec:memory}).
% Given that the largest public LM has 175 billion parameters~\citep{zhang2022opt} 
% and recent popular LMs are beyond 6.7B~\citep{touvron2023llama},
% it is harder and harder to fine-tune LMs especially for academic researchers.
% As such, it has become difficult to fine-tune powerful large models for domain-specific usage (e.g., \todo{biobert}) and private corpora (e.g., \todo{bloomberg thing}).
%As such, it has become increasingly difficult to fine-tune powerful LMs.
%, as their scale keeps escalating.
%\snote{This is weird here now. Nobody thinks PEFT minimizes backprop memory anyways but I think it's important to note somewhere that these parameter-efficient methods are not memory-efficient.}
Parameter-efficient fine-tuning methods (PEFT~\citep{hu2021lora,li-liang-2021-prefix,lester-etal-2021-power}) update just a fraction of the network parameters, but 
%in the network to reduce the memory consumption of the resulting fine-tuned checkpoint. 
 still need to cache many activations, because the tuned parameters are scattered throughout the model. 
%to store the activations. 
In our tests, fine-tuning an OPT-13B model with full parameter tuning or PEFT requires 12$\times$ and $6\times$ more memory than inference respectively.

\begin{figure}
    \center
    \includegraphics[width=0.99\textwidth]{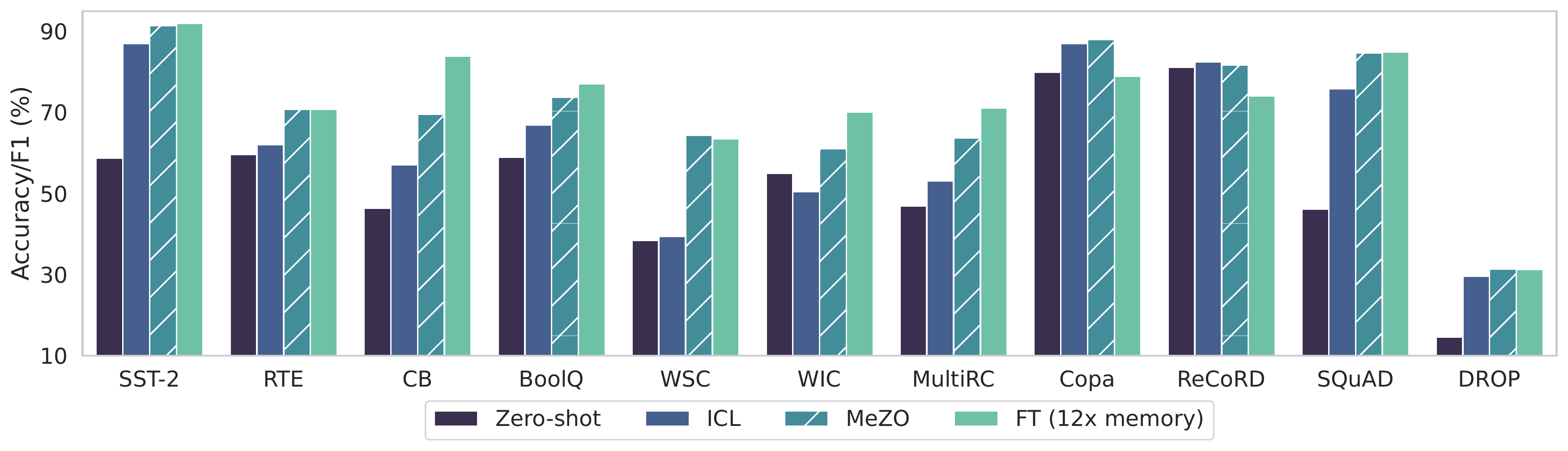}
    \caption{OPT-13B results with zero-shot, in-context learning (ICL), \mezo{} (we report the best among \mezo{}/\mezo{} (LoRA)/\mezo{} (prefix)), and fine-tuning with Adam (FT). 
    \mezo{} demonstrates superior results over zero-shot and ICL and performs on par with     FT (within 1\%) on 7 out of 11 tasks, despite using only 1/12  memory.
    See Table~\ref{tab:opt} for detailed numbers and Figure~\ref{fig:memory_fig} for memory profiling.
    % \danqi{Nitpick: I don't like legends covering the bars. I will just change the legend to a single row at the top.} 
    % \danqi{The legend is still ZO* instead of MeZO.}
    }
    \vspace{-10pt}
    \label{fig:teaser}
\end{figure}

\looseness-1
\emph{In-context learning} (ICL~\citep{brown2020language}) has allowed solving many tasks with a single inference pass, during which the model processes labeled examples (\textit{demonstrations}) in its context and then outputs a prediction on a test example. %backpropagation by performing , where the 
While this allows for quick adaptation of the model to specific use cases, %without further optimizing the model, but it admits 
 current models allow a limited context size (and thus, limited demonstrations) and the performance is sensitive to the formatting and choice of demonstrations~\citep{liu-etal-2022-makes,lu-etal-2022-fantastically}.
%\snote{Add a sentence that says rephrasing tasks as in-context examples requires manual prompt / template design}
% Additionally, 
ICL can slow with the number of demonstrations, and it often performs worse than fine-tuning of medium-sized models~\citep{brown2020language}. 
%Besides, inference with ICL is more expensive, as it always requires demonstrations in context and thus increases the input length. 
% \danqi{I think you can just cite GPT-3? don't need to refer to your own results. It is commonsense that ICL underperforms fine-tuning significantly.}
% \tnote{ICL is more expensive in computation and memory during inference as it requires longer input length; for tasks with very long sequence length, it's hard to fit more than 1 example in context} \danqi{+1}

%Besides memory cost, 
Backpropagation
% It can suffer from numerical stability issues, exacerbated by the common memory-saving practice of handling large models in low precision. 
also cannot optimize non-differentiable criteria, which have gained popularity %,  %as pre-trained models grow larger and stronger, 
in fine-tuning LMs according to human preference scores or set safety standards
\citep{stiennon2020learning,ouyang2022training}. 
Typically,
these adaptations involve expensive reinforcement learning from human feedback (RLHF~\citep{christiano2017deep}).%, 
%which is expensive.% and limited to players with abundant resources.

% \snote{Need some citations here}
%(e.g., generating sentences) and tune them on non-differentiable objectives (e.g., human rankings and safety objectives).
% It is unclear how to adapt ICL or backpropagation to these scenarios without resorting to expensive reinforcement learning. 
% \danqi{I am not very sure about the main argument of this paragraph.}

%\emph{Is there a generally applicable and memory-efficient method to tune large pre-trained models?}

A classical zeroth-order optimization method, \zosgd{}~\citep{spall1992multivariate}, uses only differences of loss values to 
estimate the gradients.
Thus, in principle, the method can update neural networks with just forward passes,
though naive implementation still doubles the memory overhead and classical lower bounds~\citep{nemirovskij1983problem,duchi2015optimal} suggest that convergence slows linearly with model size. 
As such, ZO methods have been applied in deep learning settings to find adversarial examples or tune input embeddings~\citep{sun2022black,sun2022bbtv2} but not to directly optimize large-scale models (see~\citet{liu2020understanding} for a survey).
%\todo{People have been using ZO for xxxx; however, ZO on large NNs or LMs are not explored...}
%ZO's convergence will slow down with increasing number of parameters.
% \danqi{I always feel there is a gap between these old citations and what we are doing here. Is there anything to cite about people's attempts on using ZO to optimize somewhat larger neural networks? And they didn't use a memory-efficient implementation?}
% \danqi{If there isn't anything, can directly say that whether ZO can work well on large pre-trained NNs, especially in the context of fine-tuning LLMs is not explored. }

% Note that classical zeroth order (ZO) optimization methods can use differences of function values to estimate the gradient, and thus in principle gradient descent using just forward pass, albeit the naive versions still require extra memory for the intermediate steps. \danqi{I really think you should cite ZO earlier and more related works. The survey paper I mentioned has a decent number of citations. Just say we revisit ZO optimization in fine-tuning large language models, and we design a memory-efficient implementation so that it takes the same memory cost as inference. Empirically, we evaluate on LOTS of models and tasks. Theoretically, we..} 
% The current paper adapts ZO methods to operate with the \textit{same memory cost as inference} and demonstrates their broad empirical success in fine-tuning large pre-trained language models. The method uses just two forward passes to estimate the model gradient and applies it in place.

In this work, we propose a memory-efficient zeroth-order optimizer (\mezo{}),
which adapts the classical \zosgd{} algorithm and reduces its memory consumption \textit{to the same as inference}.
We apply \mezo{} to fine-tune large LMs and show that, both empirically and theoretically,
\mezo{} can successfully optimize LMs with billions of parameters. 
Specifically, our contributions are:
\vspace{-5pt}
\begin{enumerate}[leftmargin=2em,itemsep=1pt]
	\item In \mezo{}, we adapt the \zosgd{} algorithm~\citep{spall1992multivariate} and a number of variants to operate in-place on arbitrarily large models with almost no memory overhead (see Algorithm~\ref{alg:zo_sgd} and~\Cref{sec:prelims}).
    \item 
    We conduct comprehensive experiments across model types (masked LM and autoregressive LM), 
    model scales (from 350M to 66B), 
    and downstream tasks (classification, multiple-choice, and generation). 
    \mezo{} consistently outperforms zero-shot, ICL, and linear probing.
    Moreover, 
    with RoBERTa-large, \mezo{} achieves performance close to standard fine-tuning within 5\%  gap; 
    with OPT-13B,  
    \mezo{}
    outperforms or performs comparably to fine-tuning on 7 out of 11 tasks, despite 
    requiring roughly $12\times$ less memory (\Cref{fig:teaser} and \Cref{sec:exp}).
    In our implementation, MeZO requires only half as many GPU-hours as Adam fine-tuning for a 30B model (see~\Cref{sec:time}). 
    %using only roughly $1/12$th of  memory (Figure~\ref{fig:teaser} and \Cref{sec:exp}).
    %\todo{Copy the same highlight stats as Fig 1 here}
    % \todo{RoBERTa-large summary of how close we get to FT performance}
    \item
    We demonstrate \mezo{}'s compatibility with full-parameter tuning and PEFT (e.g., LoRA~\citep{hu2021lora} and prefix-tuning~\citep{li-liang-2021-prefix}) in \Cref{sec:exp}.
    %We also explore parameter-efficient fine-tuning (PEFT) such as LoRA~\citep{hu2021lora} and prefix-tuning~\citep{li-liang-2021-prefix}, and demonstrate ZO's compatibility with both PEFT and full-parameter tuning (\Cref{sec:exp}).
    % Experiments show that ZO methods can optimize RoBERTa-large (350M), OPT-13B, OPT-30B, and OPT-66B better than linear probing, zero-shot and in-context learning with a comparable memory footprint. ZO methods are also compatible with parameter-efficient fine-tuning techniques such as LoRA and prefix tuning (\Cref{sec:exp}).
	\item Further exploration showcases that \mezo{} can optimize non-differentiable objectives such as accuracy or F1 score, while still requiring only the same memory as inference (\Cref{sec:nondiff}). % ZO methods can also optimize low-precision models. \danqi{I still don't see your results of this point but your order of 3/4 are inconsistent with the abstract at least.}
 %and allow gradients to be estimated to arbitrary precision. 
	\item Our theory suggests that adequate pre-training ensures the per-step optimization rate (\Cref{thm:rate_comparison}) and global convergence rate (\Cref{lem:global_ZO-SGD}) of \mezo{} depend on a certain condition number of the landscape (i.e., the local effective rank, see \Cref{assume:low_eff_rank}) instead of numbers of parameters. 
    This result is in sharp contrast to existing ZO lower bounds~\citep{nemirovskij1983problem,duchi2015optimal} suggesting that the convergence rate can slow proportionally to the number of parameters (\Cref{sec:theory}). 
    % The analysis inspires a novel ZO algorithm that may allow models to adapt to more complex objectives. \danqi{inspires a novel ZO algorithm? Which section is it?}
\end{enumerate}

\section{Zeroth-order optimization}\label{sec:prelims}
% \danqi{Zeroth order or zeroth-order? Make it consistent throughout the paper.}

\begin{figure}[t] % {1.0\textwidth}
\centering
\begin{algorithm}[H]
  \SetKwFunction{perturb}{PerturbParameters}
  \SetKwProg{sub}{Subroutine}{}{}
  \SetKwComment{Comment}{$\triangleright$\ }{}
  
  %\KwIn{Initial parameter $\theta$, perturbation size $\epsilon$, and number of iterations $L$}
  %\KwOut{Optimal parameter $\theta^*$}
  \textbf{Require}: parameters $\vtheta\in\RR^d$, 
  loss $\cL:\RR^d\to\RR$, 
  step budget $T$,
  % loss $\cL\in \RR$, 
  perturbation scale $\epsilon$, 
  batch size $B$,
  learning rate schedule $\{\eta_t\}$ \\
  % \tnote{here loss is not necessarily $\RR^d\to \RR$?} \\
  \vspace{0.2cm}
  \For{$t=1,...,T$} { %\danqi{$t = 1, \ldots, T$?}
    Sample batch $\cB\subset \cD$ and random seed $s$ \\
    %\danqi{Why do you need a random seed every time?} \\
    $\vtheta\gets$ \perturb{$\vtheta, \epsilon, s$} \\
    $\ell_+\gets\cL(\vtheta;\cB)$ \\
    $\vtheta\gets$ \perturb{$\vtheta, -2\epsilon, s$} \\
    $\ell_-\gets\cL(\vtheta;\cB)$ \\
    $\vtheta\gets$ \perturb{$\vtheta, \epsilon, s$} \Comment*[f]{Reset parameters before descent}
    \BlankLine
    $\texttt{projected\_grad}\gets (\ell_+ - \ell_-) / (2\epsilon)$ \\
    Reset random number generator with seed $s$ \Comment*[f]{For sampling $z$} \\% \\ % \danqi{You don't need this line anymore, as long as $s$ is combined with sampling $z$.} \\ 
    \For{$\theta_i\in\vtheta$}{
        $z\sim\cN(0,1)$ \\
        %\danqi{I didn't see you reset the random seed earlier. I think you should just put $s$ in $\cN(\cdot)$} \\ 
        $\theta_i\gets\theta_i - \eta_t * \texttt{projected\_grad} * z$  
        %\Comment*[f]{\todo{Technically this can also be a call to perturb parameters, but not sure if it hurts clarity} \tnote{i think this is fine}}
    }
  } 
  
  \vspace{0.2cm}
  \sub{\perturb{$\vtheta$, $\epsilon$, $s$}}{
  Reset random number generator with seed $s$  \Comment*[f]{For sampling $z$} \\
  %\Comment*[h]{$\theta_i$ is a parameter group that can be set at any scale. Setting a larger parameter group will increase the memory overhead but reduce the time.}\\
    \For{$\theta_i\in\vtheta$ }{  
      %\hskip\algorithmicindent \Comment*[r]{$\theta_i$ is a parameter group} 
        $z\sim\cN(0,1)$ \\
        $\theta_i\gets\theta_i+\epsilon z$ \Comment*[f]{Modify parameters in place} 
    }
    \Return $\vtheta$
  }
  
  \caption{\mezo{}}

  \label{alg:zo_sgd}
\end{algorithm}
\vspace{-10pt}
\end{figure}

Zeroth-order (ZO) optimizers have long been studied in the context of convex and strongly convex objectives.
In the following, we first introduce 
a classical ZO gradient estimator, SPSA (\Cref{def:spsa}~\citep{spall1992multivariate}) and
the corresponding SGD algorithm, \zosgd{} (\Cref{def:zo_sgd}).
Then we describe \mezo{}, our in-place implementation that requires the same memory as inference in \Cref{sec:memory_efficient_zo} and Algorithm~\ref{alg:zo_sgd}.
We highlight that SPSA can also be used in more complex optimizers, such as Adam, and we provide memory-efficient implementations for those algorithms too (\Cref{sec:ext_zo}).

Consider a labelled dataset $\cD = \{(\bx_i, \by_i)\}_{i \in [\abs{\cD}]}$  
% We let $\cL(\btheta; \{(\bx, \by)\})$ denote the loss on a single example $(\bx, \by)$, and $\cL(\btheta) := \frac{1}{\abs{\cD}}\sum_{(\bx, \by) \in \cD}\cL(\btheta; \{(\bx, \by)\})$ be the loss on the entire dataset $\cD$. 
and a minibatch $\cB \subset \cD$ of size $B$, we let $\cL(\btheta; \cB)$ denote the loss on the minibatch. 
\iffalse
\subsection{Transformers}
We briefly introduce the universal architectures used for LLMs -- Transformers~\citep{vaswani2017attention}.
A Transformer model is composed of $L$ layers, each of which has a multi-head attention layer and a fully-connected feed-forward layer, with residual connections in between. 
For each attention layer, given an input of $\mf{X}\in \mbb{R}^{n\times d}$, where $n$ is the number of tokens and $d$ is the hidden dimension, the attention layer first turns $\mf{X}$ into query $\mf{Q}\in \mbb{R}^{n\times d}$, key $\mf{K}\in \mbb{R}^{n\times d}$, and value $\mf{V}\in \mbb{R}^{n\times d}$ matrices by $\mf{W}_q\in \mbb{R}^{d\times d}$, $\mf{W}_k\in \mbb{R}^{d\times d}$, $\mf{W}_v\in \mbb{R}^{d\times d}$: $\mf{Q}=\mf{X}\mf{W}_q$,$\mf{K}=\mf{X}\mf{W}_k$,$\mf{V}=\mf{X}\mf{W}_v$.
Then $\mf{Q}$,$\mf{K}$,$\mf{V}$ are split into $H$ heads, each of which is of dimension $\mbb{R}^{n\times(d/H)}$. 
For each attention head $i$, the attention mechanism can be formally represented as $\mf{H}_i=\text{ATT}(\mf{Q}_i, \mf{K}_i, \mf{V}_i) = \text{Softmax}(\frac{\mf{Q}_i\mf{K}_i^\top}{\sqrt{d/H}})\mf{V}_i$.
% \begin{equation}
% 	\mf{H}_i=\text{ATT}(\mf{Q}_i, \mf{K}_i, \mf{V}_i) = \text{Softmax}(\frac{\mf{Q}_i\mf{K}_i^\top}{\sqrt{d/H}})\mf{V}_i.
% 	\label{eq:attn}
% \end{equation}
\fi
% Zeroth order (ZO) optimizers have long been studied in the context of convex and strongly convex objectives.
% In the following, we first introduce 
% a classical ZO gradient estimator, SPSA~\citep{spall1992multivariate}. 
% an efficient gradient estimator was given in \citet{spall1992multivariate}.
% Then 
We introduce a classical ZO gradient estimate in this setting.\footnote{The original SPSA algorithm~\citep{spall1992multivariate} perturbs the model by $1/\vz$ and thus requires that $\vz$ has finite inverse moments, precluding the choice of $\vz$ as Gaussian. $1/\vz$ is very large with high probability for a zero-mean Gaussian $\vz$, so we adopt the standard in many theoretical~\citep{nesterov2017random,duchi2015optimal} and empirical~\citep{liu2020primer} works and perturb the parameters by $\vz$ with $\vz$ as a Gaussian random variable.}
\begin{definition}[Simultaneous Perturbation Stochastic Approximation or SPSA~\citep{spall1992multivariate}]
	Given a model with parameters $\vtheta\in\RR^d$ and a loss function $\Loss$, SPSA estimates the gradient on a minibatch $\cB$ as
	\begin{equation}
		\hat\nabla\cL(\vtheta;\cB) =  \frac{\cL(\vtheta + \epsilon\vz;\cB) - \cL(\vtheta - \epsilon\vz;\cB)}{2\epsilon}\vz \approx \vz\vz^\top \nabla\cL(\vtheta;\cB) 
		\label{eq:spsa}
	\end{equation}
	where $\vz\in\RR^d$ with $\vz\sim\cN(0,\mI_d)$ and $\epsilon$ is the \emph{perturbation scale}. The $n$-SPSA gradient estimate averages $\hat\nabla\cL(\vtheta;\cB)$ over $n$ randomly sampled $\vz$. 
	\label{def:spsa}
\end{definition}
SPSA requires only \textit{two forward passes} through the model to compute the gradient estimate (for $n$-SPSA, each estimate requires $2n$ forward passes). 
As $\epsilon\to 0$, the SPSA estimate can be understood as a rank-1 reconstruction of the gradient.
During training, $n$ can be treated as a hyperparameter and follow a schedule~\citep{bollapragada2018adaptive,cai2022zoro},
% analogous to a learning rate schedule~\citep{bollapragada2018adaptive,cai2022zoro},
though in cursory experiments (\Cref{app_sec:ablations}), $n=1$ is the most efficient. We use $n=1$ as the default. 
%Past theoretical work has suggested that following a linearly or geometrically increasing schedule for $n$ can improve convergence speed in the strongly convex setting (?).
%\footnote{\todo{fill in} additionally proposed a gradient estimate that requires only one forward pass of the model but noted that the estimate may exhibit large variance in many conditions. \todo{Maybe add variance analysis for our setting?}}
It is widely known that the SPSA estimate can be used to replace the backpropagation gradient in any optimizer such as SGD.
\begin{definition}[ZO-SGD]
	ZO-SGD is an optimizer with learning rate $\eta$ that updates parameters as $\vtheta_{t+1} = \vtheta_t - \eta\hat\nabla\cL(\vtheta;\cB_t)$ where $\cB_t$ is the minibatch  at time $t$ and $\hat\nabla\cL$ is the SPSA gradient estimate.
    \label{def:zo_sgd} 
\end{definition} 

\subsection{Memory-efficient \zosgd{} (\mezo{})}\label{sec:memory_efficient_zo}
% \danqi{Put Mezo in the title?}
% \danqi{I will start with a sentence like `We propose a memory-efficient implementation of ZO-SGD called Mezo, as illustrated in Algorithm 1. The key insight is that the noise vector $z$ can be sampled ... (line XXX). I will slightly elaborate on what is described in the algorithm box. Oh, can you add line \# in the algorithm box?}

The vanilla \zosgd{} algorithm costs twice the memory of inference,
as it needs to store $\vz\in \mathbb{R}^d$. % , which is the same size as the model parameters.
We propose a memory-efficient implementation of \zosgd{} called \tf{\mezo{}}, 
% which costs \textit{the same memory as inference},
as illustrated in Algorithm~\ref{alg:zo_sgd}.
At  each step,
we first sample a random seed $s$, 
and then for each of $\vz$'s four uses in Algorithm~\ref{alg:zo_sgd}, % every time we need $\vz$, 
we reset the random number generator by $s$ 
and \textit{resample} the relevant entry of $\vz$.
% At first sight, the algorithm seems hopelessly inefficient, since it needs to store the sampled noise $\vz$, which is as large as the model itself and doubles the memory consumption.
% However, since $\vz$ is generated using a random number generator, 
% it suffices to store the scalar seed for the generator and simply \textit{resample} the relevant entry of $\vz$ for each of its three uses in the update. 
Using this in-place implementation, \mezo{} has a memory footprint equivalent to the inference memory cost.
% See Algorithm \ref{alg:zo_sgd}.

We note that Algorithm~\ref{alg:zo_sgd} describes perturbing each parameter separately, which may be time-consuming for large models.
In practice, we can save time by perturbing an entire weight matrix instead of each scalar independently.
This incurs an additional memory cost as large as the largest weight matrix; usually, this is the word embedding matrix (e.g., 0.86GB for OPT-66B).

\paragraph{Storage Efficiency of \mezo{}.}
%\snote{Rewrite this because 1 seed can just determine all of the step seeds too}
Parameter-efficient fine-tuning (PEFT) techniques fine-tune just a fraction of the network parameters and have thus been proposed as a way to reduce the storage costs of fine-tuned model checkpoints. 
Fine-tuning with \mezo{} reduces the storage cost of the resulting checkpoint far more than popular PEFT techniques (e.g., LoRA~\citep{hu2021lora} and prefix tuning~\cite{li-liang-2021-prefix}).
We reconstruct the \mezo{} trajectory using a single seed, which spawns step-wise seeds to sample $\vz$, and the \texttt{projected\_grad} at each step.\footnote{Note that this reconstruction requires no additional forward passes through the model and no access to the data used during fine-tuning, since \texttt{projected\_grad} implicitly encodes this information.}
%when loading the checkpoint.\footnote{Note that this reconstruction requires no additional forward passes through the model and no access to the data used during fine-tuning, since it has already computed \texttt{projected\_grad}.} Two scalars determine the gradient at each step: the \texttt{projected\_grad} and the seed used to sample $\vz$. 
%The seed at each step can be determined by a single seed set at the start of training, which requires 4 bytes to store. 
%Then, at each fine-tuning step, one can save the \texttt{projected\_grad}, which is just 2 bytes. 
%As such, given the pre-trained model, only $2T$ scalars are required to construct the fine-tuned model (i.e., by iteratively applying every training step). 
%Therefore, although reconstructing the checkpoint for each use case sounds daunting, the primary computational cost at each step is only to resample $\vz$.
As such, for fine-tuning a 66B model, 
\mezo{} requires saving the seed plus 20,000 (steps) $\times$ 
2 bytes, which is less than 0.1MB.
% \snote{TG, please fill this in}, which results in a storage cost of XXX.  
LoRA fine-tunes 19M parameters and requires 38MB storage, and prefix tuning fine-tunes 6M parameters and requires 12MB storage. 
% \snote{TG: pick a representative setting and fill in here}

\subsection{\mezo{} extensions}
\label{sec:ext_zo}
We note that SPSA is a popular ZO gradient estimator but not the only one. 
Many one-point gradient estimators have been proposed in past works~\citep{flaxman2005online,spall1997one,vakhitov2009algorithm}, and using such estimators in place of SPSA would halve the training time.
However, cursory experiments with one such promising estimator ~\citep{zhang2022new} reveal that these are not as efficient as SPSA when fixing the number of forward passes (\Cref{sec:one_point_estimate}). 
As such, we implement MeZO with the SPSA estimator.

% \danqi{I am always a bit confused by this title. Here you want to say the same memory-efficient implementation can be extended Adam or SGD with momentum but in practice they work less well than SGD, so you leave them as just proposals here? Is it `ZO variants' or `MeZO variants' in this case? Just use `Extensions to Momentum and Adam' so it feels at least our proposals? }
\mezo{} can also be combined with other gradient-based optimizers, including SGD with momentum or Adam.
% The SPSA or $n$-SPSA estimate can also be used in any other gradient-based optimizer, including SGD with momentum or Adam. 
Though naive implementation would require additional memory to store the gradient moment estimates,
\mezo{}-momentum and \mezo{}-Adam alleviate such overhead by
recomputing the moving average of the gradients using saved
past losses and $\vz$ (see \Cref{sec:alg_variants} for a full discussion).

% Traditionally, using momentum or Adam requires additional memory to store the gradient moment estimates.
% However, ZO-Momentum and ZO-Adam admit highly efficient implementations, because the SPSA estimate allows reconstruction of past gradients from only the two loss values of the perturbed models and the random seed used to sample $\vz$.
% Hence, the moving average of the gradients can be recomputed efficiently (see \Cref{sec:alg_variants} for a full discussion).
%On the other hand, due to the square root in the normalization term of Adam, we require storing an additional copy of the parameters in order to run ZO-Adam.

We also note that all of the coordinates of the SPSA gradient estimate have the same scale, but deep Transformers can have gradients of different scales for each layer~\citep{li2022robust,liu2020understanding}.
As such, we draw inspiration from layerwise adaptive optimizers~\citep{you2017large,you2019large} 
% like LARS~\citep{you2017large}, 
to design several \mezo{} variants.
Cursory experiments showed that these algorithms are not more efficient (in terms of forward passes), but we nevertheless present them as potential optimizers for more complex objectives.
See \Cref{sec:alg_variants}.

\paragraph{Forward Auto-Differentiation}
Note that $\vz^\top\nabla\cL(\vtheta;\cB)$ is a Jacobian-vector product (JVP), which can be computed in parallel with an inference pass with excess memory consumption equivalent to that of the largest activation in the network~\citep{griewank2008evaluating}.
%An alternative implementation of the gradient estimate $\vz\vz^\top\nabla\cL(\vtheta;\cB)$ by noting that $\vz^\top\nabla\cL(\vtheta;\cB)$ is a Jacobian-vector product (JVP). 
%It is well-known that forward auto-differentiation computes the JVP in parallel with an inference pass, consuming excess memory equivalent to that of the largest activation in the network~\citep{griewank2008evaluating}. 
In this case, $\vz$ must be stored on the GPU in order to construct the gradient estimate, so this procedure requires slightly more than two times the memory needed for inference.
We analyze this algorithm in detail in~\Cref{app_sec:fwd_ad}. Note that using a non-zero $\epsilon$ in SPSA, which is not possible through the JVP method, may boost generalization by promoting a sharpness-minimizing term. %\footnote{JVPs are not well-supported in PyTorch at the time of writing, though an efficient implementation is available in JAX.}
Past works (e.g., \citet{baydin2022gradients}) have also studied JVP-based training but achieved limited empirical success.

\begin{figure}[t]
    \centering 
    \includegraphics[width=0.99\textwidth]{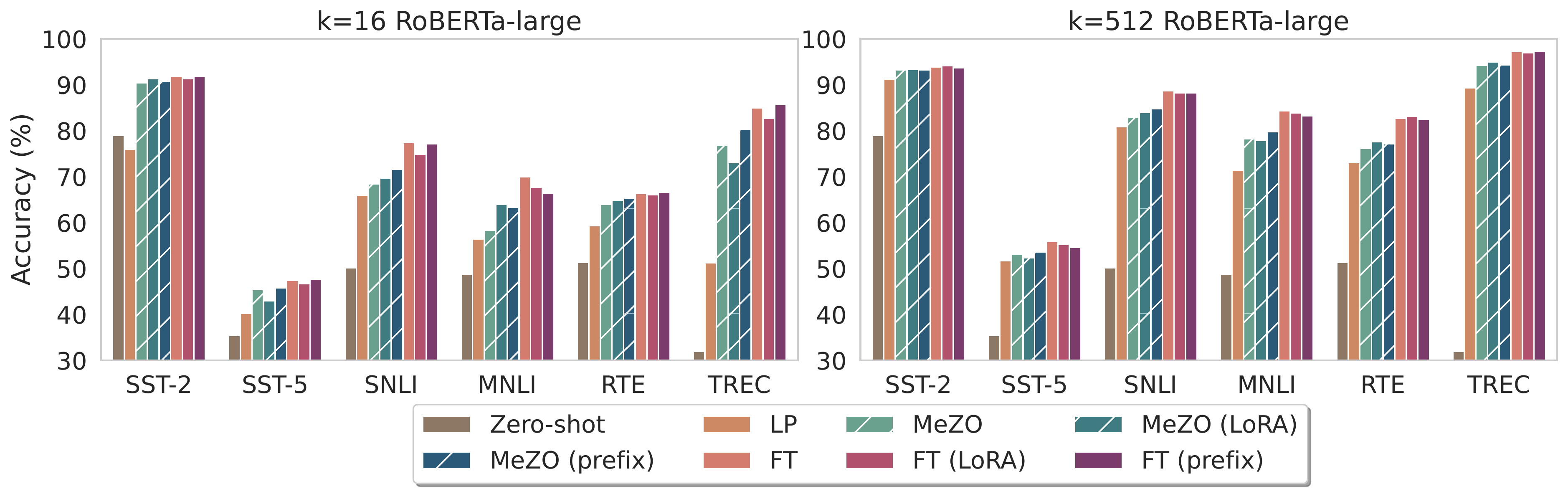}
    \caption{
        Experiments on RoBERTa-large. 
        We report zero-shot, linear probing (LP), and \mezo{} and fine-tuning (FT) with full parameter, LoRA, and prefix-tuning. 
        % LP: Linear probing; 
        % \mezo{}*: 
        % \mezo{}, \mezo{} (LoRA), and \mezo{} (prefix): \mezo{}
        % % (\Cref{sec:memory_efficient_zo})
        % with full-parameter tuning, LoRA, and prefix-tuning respectively; 
        % FT: fine-tuning with Adam. 
        % All reported numbers are averaged accuracy (standard deviation). 
%         All experiments use prompts (\Cref{app_sec:our_prompt}). 
        \mezo{} outperforms zero-shot and LP  and approaches FT  (within 5\% for $k=512$) with much less memory. Detailed numbers in \Cref{tab:roberta}.
        % See \Cref{tab:roberta} for detailed numbers.
        % \todo{Change ZO-LoRA to ZO (LoRA) and mention in caption it's ZO-SGD}
        % \danqi{Change to MeZO. It is up to you but this color settings looks a bit dark to me. Can you highlight the memory use in the figure too? Especially interesting to show FT, FT (LoRA) and FT (prefix). } 
        }  
        \vspace{-15pt}
    \label{fig:rob} 
\end{figure} 

\section{Experiments}
\label{sec:exp}

%\danqi{For the new version, I suggest we talk about why we use a prompt more directly in the main paper, given at least 3 out of 5 reviewers raised this question. The kernel paper [67] is even not cited in the main paper.}

% We first execute preliminary experiments (see \Cref{app_sec:ablations}) and 
% determine that 
% among all ZO variants, 
% the most effective one is \zosgd{} with only one $\vz$ sampled at each step.
% This will be our default \mezo{} setting. 
Preliminary experiments (\Cref{app_sec:ablations})  
show that \mezo{} only works when using prompts~\citep{brown2020language,schick-schutze-2021-exploiting,gao-etal-2021-making}.
Past works~\cite{saunshi2021a,malladi2023kernelbased} have demonstrated how the inclusion of a suitable prompt ensures the fine-tuning objective is closely related to the pre-training one.
In \Cref{sec:theory}, we extend these ideas to show how using a simple prompt simplifies the fine-tuning optimization procedure, thereby enabling zeroth order methods to work efficiently.
%These ideas agree with findings in~\cite{saunshi2021a,malladi2023kernelbased} that using a suitable prompt relates the downstream objective to the pre-training one. 
%with the task. 
All experiments below % (zero-shot, fine-tuning, \mezo{}, etc.) 
use prompts detailed in \Cref{app_sec:our_prompt}.
All fine-tuning with backpropagation (FT) experiments follow convention and use Adam, though we also report results when performing FT with SGD in \Cref{app_sec:more_exp}.

% \danqi{Change ZO to MeZO throughout the section.}
% First, we determine that among all ZO variants we discussed, the most query-efficient version is \zosgd{} with only one $\vz$ sampled at each step (see \Cref{app_sec:ablations} for details).
% First, we determine the most query-efficient version of ZO and determine that it is ZO-SGD with only one $\vz$ sampled at each step (see \Cref{app_sec:ablations} for details).
% We also find that ZO only works when a prompt is used with the task \snote{cite something?}.

We conduct comprehensive experiments on 
both medium-sized masked LMs (RoBERTa-large, 350M~\citep{liu2019roberta}) and
large autoregressive LMs (OPT-13B, 30B, 66B~\citep{zhang2022opt}) in 
few-shot and many-shot settings with prompts.
We also explore both full-parameter tuning and PEFT including LoRA~\citep{hu2021lora} and prefix-tuning~\citep{li-liang-2021-prefix} (see Appendix~\ref{sec:peft} for details). 
We compare \mezo{} with zero-shot, in-context learning (ICL), linear-probing (LP), and fine-tuning with Adam (FT).
\mezo{} uses substantially less memory than FT but requires significantly more training steps.

We first
show that \mezo{} improves substantially over zero-shot, ICL, and LP across model types, sizes, and task types. 
% , demonstrating that \mezo{} can consistently fine-tune large LMs.
% Moreover, \mezo{} also outperforms in-context learning and linear probing. 
% outperforms in-context learning and linear probing
Moreover, \mezo{} performs 
comparably to FT over a number of tasks, 
while drastically reducing the memory cost by, for example, 12$\times$ on OPT-13B.
% while only reducing memory consumption by up to $12\times$ 
% (see \Cref{sec:memory}). % \snote{TG, check this}
Further experiments demonstrate that \mezo{} can optimize non-differentiable objectives, such as accuracy and F1 score (\Cref{sec:nondiff}).
We compare the memory consumption of ICL, FT, LP, and \mezo{} in \Cref{fig:memory_fig,tab:memory_tab}.

\subsection{Medium-sized masked language models}

We conduct experiments with RoBERTa-large
on sentiment classification, natural language inference, and topic classification tasks.
We follow past works~\citep{gao-etal-2021-making,malladi2023kernelbased} in studying the few-shot and many-shot settings, sampling $k$ examples per class for $k=16$ 
%($k$ is the number of examples per class) 
and $k=512$
%to represent both few-shot and multi-shot scenarios.
%We follow the experiment settings from~\citet{gao-etal-2021-making,malladi2023kernelbased}
%for rigorous few-shot results 
(details in  \Cref{app_sec:expsetup}).
We run \mezo{} for $100$K steps and fine-tuning for $1000$ steps, noting that one \mezo{} step is substantially faster than one fine-tuning step (see~\Cref{sec:time} for a comparison).
%Note that ZO takes much less memory than FT, but a significantly more training steps. 
% \danqi{Need some justifications for why we only study $k = 16$ and $k = 512$, and why we consider prompt-based FT. Wait, you didn't talk about prompt at all?}
We summarize the results from \Cref{fig:rob} and \Cref{tab:roberta} below.% \danqi{Figure XX and Table YY?}
%From the results in Table~\ref{tab:roberta}, we can draw the following observations.

\paragraph{\mezo{} works significantly better than zero-shot, linear probing, and other memory-equivalent methods.}
On all six diverse tasks, % across three task types, 
\mezo{} can optimize the pre-trained model and consistently perform better than zero-shot and linear probing.
%Hence, we conclude that ZO can optimize pre-trained masked language models.
We also show for several tasks that \mezo{} can outperform another ZO algorithm, BBTv2~\citep{sun2022bbtv2}, by up to $11\%$ absolute (\Cref{app_sec:bbtv2}).\footnote{BBTv2 can only train low-dimensional projected prefixes instead of the full model.} % \snote{TG check footnote}
%ZO is consistently better than zero-shot and linear probing,
%suggesting that ZO is able to optimize the language models on those tasks to achieve non-trivial results. 

% \paragraph{More data helps \zosgd{} perform comparably to FT.}
% \snote{Does this really need to be a paragraph? Not sure if it's significant.}
% \zosgd{} achieves close-to-fine-tuning performance on $k=16$, with some tasks only having 2\% gaps.
% This gap further closes when more examples are used, as shown in the $k=512$ results.

\paragraph{With enough data, \mezo{} achieves comparable performance (up to 5\% gap) to FT.}
% \snote{Does this really need to be a paragraph? Not sure if it's significant.}
\mezo{} achieves close-to-fine-tuning performance on $k=16$, with some tasks only having 2\% gaps.
When using $k=512$ data,
the gap between \mezo{} and \ft{} further reduced to within 5\% across all tasks.

% \tnote{refine this.}

\paragraph{\mezo{} works well on both full-parameter tuning and PEFT.}
% As shown in Table~\ref{tab:roberta}, 
Full-parameter tuning (\mezo{}) and 
PEFT (\mezo{} with LoRA and prefix-tuning) % (LoRA) and \mezo{} (prefix))
achieve comparable performance, while \mezo{} (prefix) sometimes outperforms \mezo{}.
%On some tasks (\todo{}) ZO-prefix achieves better accuracies. 
We also show in \Cref{app_sec:conv_zo_full_peft} that the three variants converge at similar rates, agreeing with our theory in \Cref{sec:theory}, which shows that
% \snote{TG what's the state of these experiments?}
\mezo{} converges at a rate independent of the number of parameters being optimized. % contradicting classical intuitions about ZO being catastrophically slow for high-dimensional optimization problems.
%This suggests that ZO is not affected by the large parameter space to optimize. 
%Our theories in \S\ref{sec:dimension_free_rate} show that 
%adequate pre-training ensures that the convergence rate of ZO does not depend on the number of parameters to tune, a contradiction to classic theories of zeroth-order algorithms \tnote{should we mention this and cite something here?}.

%\paragraph{} 
%\vspace{1em}

We show additional results with more FT and \mezo{} variants in Appendix~\ref{app_sec:roberta_more}. 
We see that (1) ZO-Adam sometimes outperforms ZO-SGD but is not consistent across tasks; (2) LP and then \mezo{}, as suggested for fine-tuning~\citep{kumar2022finetuning}, can sometimes improve the performance. % without increasing the memory consumption.

% Why include RoBERTa-large? 
% \begin{itemize}
% 	\item Show that ZO works for MLMs, which are often better after FT than autoregressive models.
% 	\item Show that ZO works for ``poor'' pre-trained model: model doesn't need to be super strong
% \end{itemize}
% Based on exps on $k=16$:
% \begin{enumerate}
% 	\item All ZO algs are better than linear probing except Adam is sometimes unstable (SST-5 and RTE)
% 	\item Adam ZO sometimes outperforms SGD ZO (MNLI: 64.3 vs 58.4) but is a bit unstable and can do worse sometimes (SST-5: 27.9 < zero-shot). Should we put it in the appendix?
% 	\item Probably put LP-FT and ZO versions in the appendix. Not sure if our version of early stopping for LP is good enough. Some cases, LP-ZO is the best (MNLI: 65.2 vs ZO-SGD 58.4).
% 	\item LoRA is often around or slightly worse than full model ZO, which matches our theory. Prefix often slightly outperforms ZO (1-2\%). Good to do a thorough efficiency comparison.
% \end{enumerate}

\begin{table*}[t]
\centering
\setlength{\tabcolsep}{4pt}
\resizebox{\textwidth}{!}{
    \begin{tabular}{lccccccccccc}
    \toprule
     Task  & \tf{SST-2}	& \tf{RTE} & \tf{CB} & \tf{BoolQ} & \tf{WSC} & \tf{WIC}	& \tf{MultiRC} & \tf{COPA} & \tf{ReCoRD} & \tf{SQuAD} & \tf{DROP} \\
    Task type & \multicolumn{7}{c}{------------------------ classification ------------------------} & \multicolumn{2}{c}{-- multiple choice --} & \multicolumn{2}{c}{--- generation ---}\\
    % Task type & \multicolumn{2}{c}{---- sentiment ----} & \multicolumn{3}{c}{------ entailment ------} & \multicolumn{1}{c}{-- topic clf. --}\\
    \midrule
    % \midrule
    Zero-shot & 58.8 & 59.6 & 46.4 & 59.0 &	38.5 & 55.0	& 46.9 & 80.0& 81.2& 46.2 &14.6\\
    ICL & 87.0 & 62.1 &	57.1 & 66.9	& 39.4 & 50.5 & 53.1 & 87.0&  \tf{82.5}& 75.9 & 29.6\\
    % now for multiple choice and generation LP uses head tuning
    LP & \tf{93.4}&	68.6&	67.9&	59.3&	63.5&	60.2&	63.5& 55.0 &27.1& 3.7 & 11.1\\
    % HT  & 93.4&	62.5&	67.9&	61.2&	59.6&	60.8&	64.5&		55.0&	 & 3.7\\
    \midrule
    \mezo{}        & 91.4    &66.1	&67.9	&67.6	&63.5	&\tf{61.1}	&60.1 & \tf{88.0}& 81.7& \tf{84.7} & 30.9\\
    \mezo{} {(LoRA)} & 89.6	&67.9	&66.1	&\tf{73.8}	&\tf{64.4}	&59.7	&61.5 &84.0&81.2& 83.8 & \tf{31.4}\\
    \mezo{} (prefix) & 90.7	&\tf{70.8}	& \tf{69.6}&	73.1	&60.6& 59.9	&\tf{63.7} &87.0& 81.4& 84.2 & 28.9\\
    % ZO-best & 91.4 & 70.8 & 69.6 & 73.8 & 64.4 & 61.1 & 63.7 & 88.0 & 81.7 & 84.7 & 31.4\\
    \midrule
    % 1.3B FT & 89.3	&72.2 & 69.6 & 67.8	& 64.4 & 52.0 & 68.4 & 77.0& 72.0& 82.3 & 27.3\\
    % \raisebox{4pt}[1.0\normalbaselineskip][0pt]{\stackunder{\parbox[t]{\widthof{13B FTF}}{\ft{}}}{\parbox[t]{\widthof{13B FTF}}{\tiny{(12x memory)}}}} & \raisebox{1pt}[1.0\normalbaselineskip][0pt]{92.0} & \raisebox{1pt}[1.0\normalbaselineskip][0pt]{70.8} &	\raisebox{1pt}[1.0\normalbaselineskip][0pt]{83.9} &	\raisebox{1pt}[1.0\normalbaselineskip][0pt]{77.1}	& \raisebox{1pt}[1.0\normalbaselineskip][0pt]{63.5} &	\raisebox{1pt}[1.0\normalbaselineskip][0pt]{70.1} &  \raisebox{1pt}[1.0\normalbaselineskip][0pt]{71.1} & \raisebox{1pt}[1.0\normalbaselineskip][0pt]{79.0} & \raisebox{1pt}[1.0\normalbaselineskip][0pt]{74.1} & \raisebox{1pt}[1.0\normalbaselineskip][0pt]{84.9}	& \raisebox{1pt}[1.0\normalbaselineskip][0pt]{31.3} \\
     FT {\fontsize{8}{9.6}\selectfont (12x memory)} & {92.0} & {70.8} &	{83.9} &	{77.1}	& {63.5} &	{70.1} &71.1 &{79.0} & {74.1} &{84.9}	& {31.3} \\
     % {\tiny (12x memory) } \vspace{-2pt}\\
    
    \bottomrule
    \end{tabular}}
    \caption{
        Experiments on OPT-13B (with $1000$ examples). ICL: in-context learning; LP: linear probing; FT: full fine-tuning with Adam.  
        % OPT-13B 1,000 examples. 
        % \tnote{probably should remove sst-2 as it's not in superglue}
        \mezo{} outperforms zero-shot, ICL, and LP across the board, and achieves comparable (within 1\%) or better performance than FT on 7 out of 11  tasks. 
    }
    % \vspace{-10pt}
    \label{tab:opt}
\end{table*}

\begin{table}[t]
\centering
% \small
\resizebox{0.75\textwidth}{!}{
    \begin{tabular}{lcccccc}
    \toprule
     Task  & \tf{SST-2} & \tf{RTE} & \tf{BoolQ} & \tf{WSC} & \tf{WIC} & \tf{SQuAD} \\
    % Task type & \multicolumn{2}{c}{---- sentiment ----} & \multicolumn{3}{c}{------ entailment ------} & \multicolumn{1}{c}{-- topic clf. --}\\
    \midrule
    % \midrule
    30B zero-shot & 56.7 & 52.0 & 39.1 & 38.5 &	50.2 & 46.5\\
    30B ICL & 81.9 & 66.8 & 66.2 &56.7&	51.3 & 78.0\\
    30B \mezo{}/\mezo{} (prefix) &  \tf{90.6} & \tf{72.6} & \tf{73.5} & \tf{63.5} & \tf{59.1} & \tf{85.2}\\
    \midrule
    66B zero-shot & 57.5 & \tf{67.2} & 66.8&	43.3& 50.6&48.1\\
    66B ICL & 89.3 & 65.3&	62.8&	52.9&54.9&81.3\\
    66B \mezo{}/\mezo{} (prefix) & \tf{93.6} & 66.4 & \tf{73.7} & \tf{63.5} &\tf{58.9} & \tf{85.0} \\
    \bottomrule
    \end{tabular}}
    \vspace{5pt}
    \caption{
        Experiments on OPT-30B and OPT-66B (with $1000$ examples). 
        We report the best of \mezo{} and \mezo{} (prefix). See \Cref{app_sec:opt_more} for more results.
        We see that on most tasks 
        \mezo{} effectively optimizes up to 66B models and outperforms zero-shot and ICL. 
        % $\dagger$: use fewer training steps due to compute budget limit, which will be updated in later revision.
        % \tnote{$\dagger$: limited steps (5000)}.
    }
    \vspace{-15pt}
    \label{tab:large}
\end{table}

\subsection{Large autoregressive language models}

With the promising results from RoBERTa-large, 
we extend \mezo{} to the OPT family~\citep{zhang2022opt}, on a scale of 
% We explore the \mezo{} algorithm on a scale of 
13B (Table~\ref{tab:opt}), 30B, and 66B (\Cref{tab:large}).
We select both SuperGLUE~\citep{wang2019superglue} tasks\footnote{We also include SST-2, which is a simple sentiment classification task that we use for development.} (including classification and multiple-choice) and generation tasks.
We randomly sample $1000$, $500$, and $1000$ examples for training, validation, and test, respectively, for each datset.
% examples for training, 
% 500 examples for validation, 
% and 1,000 examples for testing for each dataset.
We run \mezo{} for $20$K steps and fine-tuning for $5$ epochs, or $625$ steps, noting that each step of \mezo{} is substantially faster than fine-tuning (see~\Cref{sec:time} for a comparison).
Please refer to Appendix~\ref{app_sec:expsetup} for details.
Table~\ref{tab:opt} yields the following observations. 

% \paragraph{ZO achieves better performance than memory-equivalent methods, and is within a small margin of fine-tuning.}
\paragraph{\mezo{} outperforms memory-equivalent methods and closely approaches fine-tuning results.}
We see that on a 13B-parameter scale, \mezo{} and its PEFT variants outperform zero-shot, ICL, and LP across almost all tasks. 
When comparing to FT, which costs 12$\times$ more memory (\Cref{sec:memory}), 
% (which consumes similar memory to ZO 13B)
\mezo{} achieves comparable (within 1\%) or better performance  on 7 out of the 11 tasks.

% \paragraph{ZO achieves better performance than memory-equivalent methods.}

\paragraph{\mezo{} exhibits strong performance across classification, multiple-choice, and generation tasks.}
% \danqi{I am confused why a section called `generation tasks' should be put in parallel with `large autoregessive LMs'. I agree it is good to show that the approach works for generation tasks.}
% Large language models exhibit the remarkable ability to autoregressively generate text. 
We investigate \mezo{} on generation tasks, which are  regarded as more intricate than classification or multiple-choice tasks. 
We evaluate on two question answering datasets, SQuAD~\citep{rajpurkar-etal-2016-squad}
% SQuADv2~\citep{rajpurkar-etal-2018-know}
and DROP~\citep{dua-etal-2019-drop}. 
% All two tasks follow a similar format, where the model is given a passage and a question and is required to generate an answer. 
We use teacher forcing for training and greedy decoding for inference (details in \Cref{app_sec:expsetup}). % for more details.

Table~\ref{tab:opt} shows that, 
on all generation tasks,
\mezo{} outperforms zero-shot, ICL, and LP, and achieves comparable performance to FT. %(which costs 12$\times$ more memory).
% \mezo{} achieves 
% comparable performance to fine-tuning with Adam, which costs 12 times more memory. 
% better performance than fine-tuning a 1.3B model (memory equivalent), and performs on par with fine-tuning a 13B model (more memory) \tnote{need to wait for the result}. 
Considering that 
many applications of fine-tuning LMs -- including instruction tuning or domain adaptation -- target generation tasks, 
% involve tuning on instruction data or domain-specific corpora, which target generation tasks,
% the demonstrated results 
our results underscore the potential of \mezo{} as a memory-efficient technique 
to optimize large LMs for realistic and exciting applications.

\paragraph{\mezo{} scales up to 66 billion parameter models.}
We demonstrate the efficacy of \mezo{} on even larger models, up to 66B, in Table~\ref{tab:large}. 
While directly fine-tuning  models at such scales is extremely costly (\Cref{sec:memory}), 
\mezo{} can effectively optimize these models and outperform zero-shot and ICL.

\subsection{Training with non-differentiable objectives}

\label{sec:nondiff}
% \input{tables/largeandnondiff.tex}

% \captionsetup[wrapfigure]{name=Table}

% \begin{wrapfigure}{r}{0.4\textwidth}
% \begin{minipage}{0.4\textwidth}

\begin{table}[t!]
\centering
% \small
\resizebox{0.66\textwidth}{!}{
    \begin{tabular}{lccccc}
    \toprule
     Model & \multicolumn{4}{c}{RoBERTa-large (350M)} & OPT-13B \\
     Task  & \tf{SST-2} & \tf{SST-5} & \tf{SNLI} & \tf{TREC}  & \tf{SQuAD} \\
    % Task type & \multicolumn{2}{c}{---- sentiment ----} & \multicolumn{3}{c}{------ entailment ------} & \multicolumn{1}{c}{-- topic clf. --}\\
    \midrule
    % \midrule
    Zero-shot & 79.0 & 35.5& 50.2 & 32.0  & 46.2\\
    % Cross Entropy (FT) & 93.9 (0.7) &  88.7 (0.8) & 97.3 (0.2) & 84.9 \\
    % Cross Entropy (ZO) & 93.3 (0.7) & 83.0 (1.0) & 94.3 (1.3) \\
    % Accuracy/F1 & 92.7 (1.1) & 82.7 (1.0) & 68.6 (5.9) \\
    Cross entropy (FT) & 93.9 & 55.9 &  88.7  & 97.3  & 84.2 \\
    Cross entropy (\mezo{}) & 93.3 & 53.2 & 83.0  & 94.3  & 84.7\\
    Accuracy/F1 (\mezo{}) & 92.7 &  48.9 & 82.7  & 68.6 & 78.5 \\
    \bottomrule
    \end{tabular}}
    \vspace{5pt}
    \caption{
        \mezo{} with non-differentiable objectives. For classification ($k=512$), we use \mezo{} with full-parameter and optimize accuracy; for SQuAD (1,000 examples), we use \mezo{} (prefix) and F1. % \danqi{How many examples are used? You only mentioned $k = 512$ for classification tasks.}
        % \snote{TG, add the other SST-5 numbers (table has disappeared lol)}
        % Non-differentiable objective. $k=512$ for RoBERTa-large, 100K steps. For RoBERTa-large we use ZO and for OPT-13B we use ZO-prefix.
    }
    \vspace{-10pt}
    \label{tab:nondiff}
\end{table}
We demonstrate the efficacy of \mezo{} for optimizing non-differentiable objectives
through initial experiments. % on classification and generation tasks.
Accuracy and F1 are used as the respective objectives (details in \Cref{sec:detail_nondiff}).
Table~\ref{tab:nondiff} 
reveals that \mezo{} with accuracy/F1 successfully optimizes LMs with superior performance to zero-shot. 
Although minimizing cross entropy results in stronger performance,
% Regardless, 
these preliminary findings highlight the promising potential of applying \mezo{} to optimize non-differentiable objectives without clear differentiable surrogates,
%Although optimizing with cross entropy offers advantages over non-differentiable objectives, 
%these preliminary findings
%highlight the promising potential of ZO in optimizing non-differentiable objectives, 
such as human preferences~\citep{ouyang2022training}.
%without the resource-intensive requirements  of reinforcement learning.

\subsection{Memory usage and wall-clock time analysis}
\label{sec:memory}

% \begin{table}[h]
% \centering
% \small
% %\resizebox{0.99\textwidth}{!}{
%     \begin{tabular}{lcccccc}
%     \toprule
%     Method  & \tf{Zero-shot / ZO} & \tf{ICL}  & \tf{Full-parameter FT} & \tf{Prefix FT} \\
%     % Task type & \multicolumn{2}{c}{---- sentiment ----} & \multicolumn{3}{c}{------ entailment ------} & \multicolumn{1}{c}{-- topic clf. --}\\
%     \midrule
%     % \midrule
%     1.3B & && 1xA100 (27GB)\\
%     2.7B & && 1xA100 (55GB)\\
%     6.7B & && 2xA100 (156GB)\\
%     13B & 1xA100 (26GB)  & 1xA100 (29GB) & 4xA100 (316GB)\\
%     30B & 1xA100 (58GB)  & 1xA100 (62GB) \\
%     66B & 2xA100 (128GB) & 2xA100 (134GB)\\
%     \bottomrule
%     \end{tabular}%}
%     \vspace{5pt}
%     \caption{
%         Memory usage on the MultiRC dataset (average length: \todo{}).
%     }
%     \label{tab:memory}
% \end{table}

\begin{figure}[t!]
    \centering
    \begin{minipage}[b]{0.55\textwidth}
      \centering
      \includegraphics[width=\textwidth]{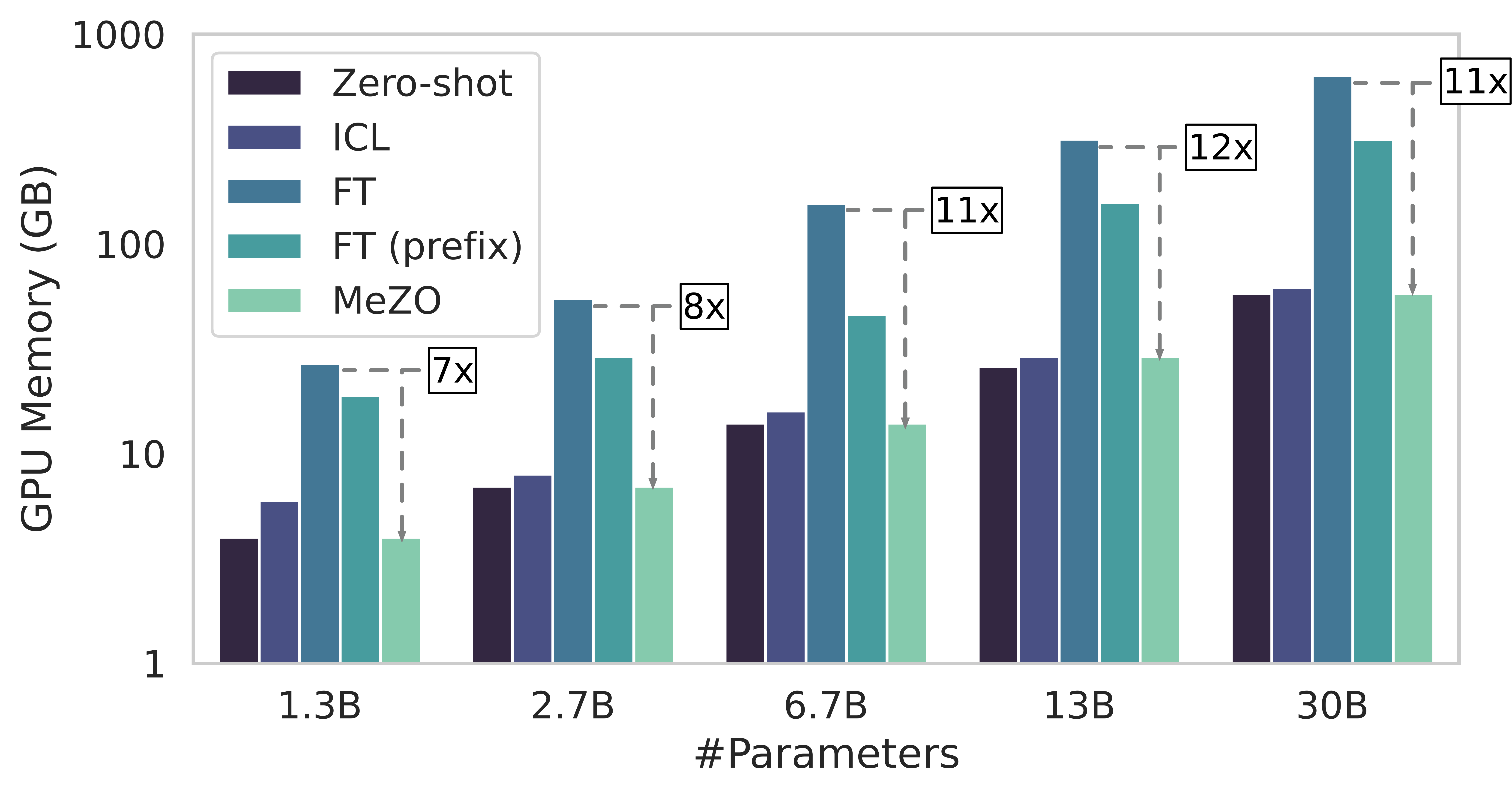}
      \vspace{-16pt}
      \caption{GPU memory consumption with different OPT models and tuning methods on MultiRC (400 tokens per example on average). % Details in \Cref{sec:profiling}.
      % \danqi{If it is not too much work, I think putting LoRA here is more compelling than prefix tuning. Same for Figure 4.}
      }
            \vspace{-10pt}
      \label{fig:memory_fig}
    \end{minipage}
    \hfill
    \begin{minipage}[b]{0.43\textwidth}
      \centering
      % \small
      \resizebox{\textwidth}{!}{
      \begin{tabular}{lccc}
        \toprule
        \multirow{2}{*}{{Hardware}} & \multicolumn{3}{c}{Largest OPT that can fit}\\
        \cmidrule{2-4}
        & \tf{FT} & \tf{FT-prefix} & \tf{Inference} \\% \tf{\mezo{}} \\
        \midrule
        1$\times$A100 (80GB) & 2.7B& 6.7B & 30B\\
        2$\times$A100 (160GB)& 6.7B& 13B & 66B\\
        4$\times$A100 (320GB)& 13B & 30B & 66B\\
        8$\times$A100 (640GB)& 30B & 66B & 175B$^\dagger$\\
        \bottomrule
        \end{tabular}}%}
        \vspace{14pt}
      \caption{Largest OPT models that one can tune with specific hardwares and algorithms. $\dagger:$ projected results without actual testing. }
      \vspace{-10pt}
      \label{tab:memory_tab}
    \end{minipage}
  \end{figure}

% \begin{table}[h]
% \centering
% \small
% %\resizebox{0.99\textwidth}{!}{
%     \begin{tabular}{lcc}
%     \toprule
%     Hardware & FT & ZO \\
%     \midrule
%     1$\times$A100 & 2.7B& 30B\\
%     2$\times$A100 & 6.7B& 60B\\
%     4$\times$A100 & 13B & 60B\\
%     8$\times$A100 & 30B & 175B\\
%     \bottomrule
%     \end{tabular}%}
%     \vspace{5pt}
%     \caption{
%         Largest OPT models that one can tune with specific hardwares and algorithms. FT: fine-tuning with Adam.
%     }
%     \label{tab:memory}
% \end{table}

In this section we profile the memory usage of  
zero-shot, ICL, FT, FT (prefix), and \mezo{}. %  \tnote{Should also add prefix in this profiling.}
We test OPT models of various sizes with Nvidia A100 GPUs (80GB memory) on MultiRC (average \#tokens=400), and report the peak GPU memory consumption (details in \Cref{sec:profiling}). %  for details on our profiling approach).

% Fine-tuning through backpropagation requires substantial memory overhead for storing gradients, moments (for Adam), and activations. 
% Although PEFT methods reduce the number of gradients and moments to cache, they still require the memory to store activations. 
% In contrast, \mezo{}, as highlighted in \Cref{sec:memory_efficient_zo}, 
% requires the same memory as inference, regardless of the number of parameters trained.

As shown in Figure~\ref{fig:memory_fig} (refer to  \Cref{sec:more_memory} for detailed numbers), \mezo{} exhibits the same memory consumption as zero-shot while offering memory savings of up to $12$ times compared to standard FT and $6$ times compared to FT (prefix).
This advantage enables training larger models within a fixed hardware budget, 
as illustrated in 
Figure~\ref{tab:memory_tab}. 
Specifically, using a single A100 GPU, \mezo{} allows for tuning a model that is $11$ times larger than what is feasible with FT. 
% \danqi{Move Figure 3 and Figure 4 up? I thought you moved them to appendix for a second. }

% We observe that fine-tuning with Adam often consume more than 10 times more GPU memory than zero-shot and ZO. 

In \Cref{sec:time}, we compare the wall-clock time efficiencies of our implementations of MeZO and Adam fine-tuning. MeZO achieves 7.74$\times$ per-step speedup and requires $8\times$ fewer GPUs with a 30B model, but takes more steps to converge. Overall, MeZO only requires half as many GPU-hours to fine-tune a 30B model compared to full-parameter fine-tuning. The wall-clock benefit of \mezo{} is not inherent to the algorithm and is highly dependent on the implementation. We primarily provide this information as a demonstration that \mezo{} does not take a prohibitively long time to run.

The above measurements are dependent on the computing infrastructure.
In \Cref{app_sec:theory_memory}, we compare the theoretical time-memory tradeoff of \mezo{} and backpropagation and find that \mezo{} is always more memory-efficient than backpropagation 
and is often more time-efficient.
The above analyses also do not consider recent advances (e.g., gradient checkpointing~\citep{chen2016training}, FlashAttention~\citep{dao2022flashattention}, and quantization~\citep{dettmers2022bit}).
We leave investigating the how \mezo{} works with these methods to future work.

% \subsection{Training with low precision}
% 
% \label{sec:lowprec}
% 
% \input{tables/quant.tex}
% 
\section{Theory}\label{sec:theory}
Our theoretical analysis highlights why \mezo{} can optimize large LMs, although a number of classical results~\citep{nemirovskij1983problem,jamieson2012query,raginsky2011information,agarwal2012information,nesterov2017random} suggest that optimization should be catastrophically slow when training so many parameters.
The inclusion of a simple prompt is crucial for \mezo{} to succeed (\Cref{app_sec:ablations}). 
Past works~\cite{saunshi2021a,malladi2023kernelbased} have suggested that including such a prompt ensures that the fine-tuning objective is closely related to the pre-training one.
As such, here, we make the assumption that the model has already been trained for many steps on the fine-tuning objective, which implies that the loss landscape exhibits favorable conditions (\Cref{assume:low_eff_rank}).
%In this section, we show that when the loss landscape exhibits favorable conditions (\Cref{assume:low_eff_rank}), 
Then, we derive a convergence rate independent of the number of parameters.
We show that the loss decreases per step at a rate independent of the parameter dimension $d$ (\Cref{thm:rate_comparison}), and that, under stronger conditions, the algorithm converges in time independent of $d$ (\Cref{lem:global_ZO-SGD}).
Together, these results imply that \mezo{} is not catastrophically slower than SGD when fine-tuning.\footnote{\Cref{sec:exp} uses the standard choice of Adam for FT; we provide SGD experiments in~\Cref{app_sec:roberta_more}.}
For ease of illustration, we assume that $\vz$ is sampled from a sphere with radius $\sqrt{d}$, and in \Cref{sec:app_gaussian_z}, we derive the rate for a general Gaussian $\vz$, which was used in the experiments.
%\tnote{maybe point to theorem 1 and lemma 4 for local/global dimension free rate?}

We follow classical analyses of \sgd{} and replace the minibatch gradient estimate with SPSA (\Cref{def:spsa}). % to study \mezo{}.
%First, we study the variance of the $n$-SPSA gradient estimate when changing $n$ and the minibatch size.
%We then use insights from the variance analysis to derive the optimal learning rate and batch size for ZO-SGD, though we note that these quantities are prohibitively expensive to compute.t
Consider the minibatch \sgd{} update $\btheta_{t+1} \leftarrow \btheta_t - \eta \nabla \cL(\btheta; \cB_t)$
% \begin{align*}
%     \btheta_{t+1} \leftarrow \btheta_t - \eta \nabla \cL(\btheta; \cB_t),
% \end{align*}
where $\cB_t$ is a minibatch drawn uniformly from $\cD^B$. %and $\eta$ is the learning rate. 
Crucially, the SGD minibatch gradient estimate is unbiased.
\begin{definition}[Unbiased Gradient Estimate]
    Any minibatch gradient estimate $\vg(\vtheta, \cB)$ is said to be unbiased if $\E[\vg(\vtheta, \cB)] = \nabla \cL(\btheta)$.
\end{definition}

\iffalse
In \zosgd{} and \sgd{}, the gradient used is an estimate of the full-batch gradient, so an important metric to study is the \emph{gradient covariance}.
\begin{definition}[Gradient Covariance]
For any minibatch gradient estimate $\vg(\vtheta, \cB)$ such that $\E_\cB[\vg(\vtheta, \cB)] = \nabla \cL(\btheta)$, the gradient covariance is
\begin{align*}
    \bSigma(\vg, \btheta) = \E_{\cB \sim \cD^B}\left[\vg(\vtheta,\cB)\vg(\vtheta, \cB)^\top\right] - \nabla \cL(\btheta)\nabla\cL(\btheta)^\top.
\end{align*}
where $\cL(\vtheta)$ is the loss on the full dataset $\cD$.
\end{definition}
In the case of \zosgd{}, we obtain the following formula for the second moment of the gradient (i.e., the first term in the gradient covariance) as the perturbation scale $\epsilon$ becomes small.
\fi

\subsection{Per-step analysis}
The classical descent lemma uses a Taylor expansion to study how \sgd{} reduces the loss at each optimization step.
It highlights that when the gradient covariance is large, the maximum possible decrease in loss at each optimization step is small, thereby resulting in slower optimization.
\begin{lemma}[Descent Lemma]\label{lem:SGD_descent}
 Let $\cL(\btheta)$ be $\ell$-smooth.\footnote{This is satisfied for the standard cross-entropy objective.} For any unbiased gradient estimate $\vg(\vtheta, \cB)$,
 \begin{align}\label{eq:descent_sgd}
     \E[\cL(\btheta_{t+1}) \mid \vtheta_t] - \cL(\btheta_t)
     &\leq  - \eta\norm{\nabla \cL(\btheta_t)}^2 + \frac12\eta^2\ell\cdot \E[\norm{\vg(\vtheta, \cB_t)}^2].
 \end{align}
\end{lemma}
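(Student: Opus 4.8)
The plan is to apply the definition of $\ell$-smoothness directly along the SGD trajectory and then take a conditional expectation, using unbiasedness to evaluate the cross term.

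First I would recall that $\ell$-smoothness of $\cL$ means $\nabla\cL$ is $\ell$-Lipschitz, which yields the standard quadratic upper bound $\cL(\by) \le \cL(\bx) + \inner{\nabla\cL(\bx)}{\by-\bx} + \frac{\ell}{2}\norm{\by-\bx}^2$ for all $\bx,\by \in \RR^d$. I would instantiate this with $\bx = \btheta_t$ and $\by = \btheta_{t+1} = \btheta_t - \eta\,\vg(\vtheta_t,\cB_t)$, i.e.\ the update rule under consideration, obtaining
\[
\cL(\btheta_{t+1}) \le \cL(\btheta_t) - \eta\inner{\nabla\cL(\btheta_t)}{\vg(\vtheta_t,\cB_t)} + \tfrac{\eta^2\ell}{2}\norm{\vg(\vtheta_t,\cB_t)}^2 .
\]

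Next I would take the expectation conditional on $\vtheta_t$ (equivalently, over the fresh draw of the minibatch $\cB_t$, which is independent of $\vtheta_t$). The first and third terms pass through by linearity, and for the cross term I would use the unbiasedness hypothesis $\E[\vg(\vtheta_t,\cB_t)\mid\vtheta_t] = \nabla\cL(\btheta_t)$ together with the fact that $\nabla\cL(\btheta_t)$ is measurable with respect to $\vtheta_t$, so that $\E[\inner{\nabla\cL(\btheta_t)}{\vg(\vtheta_t,\cB_t)}\mid\vtheta_t] = \norm{\nabla\cL(\btheta_t)}^2$. Subtracting $\cL(\btheta_t)$ from both sides gives exactly \eqref{eq:descent_sgd}.

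This is a one-line computation once the smoothness inequality is in place, so I do not anticipate a real obstacle; the only thing needing care is the conditioning bookkeeping — ensuring $\cB_t$ is drawn independently of $\btheta_t$ so that unbiasedness may be used inside the conditional expectation — and observing that the argument never uses any structure of $\vg$ beyond unbiasedness, which is precisely what makes the bound reusable later with the SPSA estimator in place of the minibatch gradient.
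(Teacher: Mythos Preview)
Your proof is correct and is exactly the standard argument. The paper treats the Descent Lemma as classical and does not supply its own proof (it is simply invoked as a known result, e.g.\ in the proof of \Cref{lem:global_SGD}), so there is nothing to compare against beyond noting that your derivation via the $\ell$-smoothness quadratic upper bound and unbiasedness of $\vg$ is precisely the textbook route.
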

% \begin{lemma}[SGD Descent Lemma]
% 	Let $\cL(\btheta)$ be $\ell$-smooth. Then, when using SGD, for any learning rate $\eta \le \frac{1}{\ell}$
%  \begin{align}\label{eq:descent_sgd}
%      \E[\cL(\btheta_{t+1}) \mid \vtheta_t] - \cL(\btheta_t)
%      %&\le \cL(\btheta_t) - \eta\norm{\nabla \cL(\btheta_t)}^2 + \frac12\eta^2\ell\E\norm{\nabla \cL(\btheta_t; \cB_t)}^2\\
%      &\leq  - \frac{\eta}{2}\norm{\nabla \cL(\btheta_t)}^2 + \frac{\eta^2\ell}{2B}\tr(\bSigma(\nabla\cL(\vtheta_t;\cB), \vtheta_t))  \\
%      &=  \frac{\eta^2\ell}{2B} \E[\|\nabla\cL(\vtheta_t;\cB)\|^2].
%  \end{align}
% \end{lemma}
%Note that we only require the loss to be $\ell$-smooth, which is satisfied for the  cross-entropy objective. 
The descent lemma highlights the importance of the gradient norm, which we derive for \mezo{} below.
\begin{lemma}\label{lem:covariance}
Let $\cB$ be a random minibatch of size $B$. Then, the gradient norm of \mezo{} is
%\tnote{define $n$ and $d$? i guess $d$ is mentinoed at the beginning but people who are not familiar with this may already forget}
\iffalse
Then the second moment of the $n$-SPSA estimate in the $\epsilon \rightarrow 0$ limit is \snote{Eshaan, take a look at the new exposition. Do you think we still need this first equation in Lemma 1?}
\begin{align*}
        \E[\hat\nabla\cL(\vtheta;\cB)\hat\nabla\cL(\vtheta;\cB)^\top] = &\left(1 + \frac{d-2}{n(d+2)}\right)\cdot\left(\nabla \cL(\btheta)\nabla \cL(\btheta)^T + \bSigma(\nabla\cL(\vtheta;\cB), \vtheta)\right) \\ &+\frac{d}{n(d+2)}\bI \cdot\left(\norm{\nabla \cL(\btheta)}^2 + \tr(\bSigma(\nabla\cL(\vtheta;\cB), \vtheta)) \right)
    \end{align*}
    where $\bSigma(\nabla\cL(\vtheta;\cB), \vtheta)$ is the covariance of the standard \sgd{} gradient estimate.\footnote{Note that $\bSigma(\nabla\cL(\vtheta;\cB), \vtheta)$ scales inversly with batch size, i.e $\bSigma(\nabla\cL(\vtheta;\cB), \vtheta) = \frac{1}{B}\bSigma(\nabla\cL(\vtheta; \{\bx, \by\}), \vtheta)$} Moreover, 
    \fi
    \begin{align*}
    \E_x\left[\norm{\hat \nabla \cL(\btheta; \cB)}^2\right] = \frac{d + n - 1}{n}\E\left[\norm{\nabla\cL(\vtheta;\cB)}^2\right].
    \end{align*}
where $n$ is the number of $\vz$  sampled in $n$-SPSA (\Cref{def:spsa}) and $d$ is the number of parameters.
\end{lemma}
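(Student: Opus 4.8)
The plan is to pass to the small-perturbation regime used throughout this section, in which the single-sample SPSA direction collapses to the rank-one vector
\begin{align*}
\hat\nabla_i\cL(\vtheta;\cB) = (\vz_i^\top \vg)\,\vz_i, \qquad \vg := \nabla\cL(\vtheta;\cB),
\end{align*}
with $\vz_1,\dots,\vz_n$ drawn i.i.d.\ uniformly from the sphere of radius $\sqrt d$, so that the $n$-SPSA estimate (\Cref{def:spsa}) is $\hat\nabla\cL(\vtheta;\cB) = \tfrac1n\sum_{i=1}^n (\vz_i^\top\vg)\vz_i$. I would condition on the minibatch $\cB$ (hence on $\vg$), compute the expectation over the $\vz_i$, and only at the end take the outer expectation over $\cB$ to match the stated identity.

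First I would record the two facts about the sphere that drive the computation: if $\vz$ is uniform on the radius-$\sqrt d$ sphere, then $\norm{\vz}^2 = d$ deterministically, and $\E[\vz\vz^\top] = \mI_d$ (by rotational invariance the matrix is a multiple of the identity, and its trace equals $\E\norm{\vz}^2 = d$). Then I would expand
\begin{align*}
\norm{\hat\nabla\cL(\vtheta;\cB)}^2 = \frac1{n^2}\sum_{i=1}^n\sum_{j=1}^n (\vz_i^\top\vg)(\vz_j^\top\vg)(\vz_i^\top\vz_j),
\end{align*}
and split into the $n$ diagonal terms and the $n(n-1)$ off-diagonal terms. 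For a diagonal term, $\vz_i^\top\vz_i = d$, so its expectation is $d\,\vg^\top\E[\vz_i\vz_i^\top]\vg = d\norm{\vg}^2$. For an off-diagonal term with $i\neq j$, independence and iterated expectation give $\E_{\vz_i}[(\vz_i^\top\vg)(\vz_i^\top\vz_j)] = \vg^\top\E[\vz_i\vz_i^\top]\vz_j = \vg^\top\vz_j$, and then $\E_{\vz_j}[(\vz_j^\top\vg)(\vg^\top\vz_j)] = \norm{\vg}^2$. Summing,
\begin{align*}
\E\!\left[\norm{\hat\nabla\cL(\vtheta;\cB)}^2 \,\middle|\, \cB\right] = \frac1{n^2}\Bigl(n\, d\norm{\vg}^2 + n(n-1)\norm{\vg}^2\Bigr) = \frac{d+n-1}{n}\,\norm{\nabla\cL(\vtheta;\cB)}^2,
\end{align*}
and taking the expectation over $\cB$ yields the claim.

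This is an elementary second-moment calculation, so there is no serious obstacle; the only points needing care are using the correct sphere normalization $\E[\vz\vz^\top] = \mI_d$ (rather than $\tfrac1d\mI_d$) and the bookkeeping of the cross terms in the $n$-sample average. If one wanted to avoid the $\epsilon\to 0$ idealization, the additional work would be controlling the Taylor remainder in $\cL(\vtheta\pm\epsilon\vz;\cB) = \cL(\vtheta;\cB) \pm \epsilon\,\vz^\top\vg + O(\epsilon^2)$ uniformly over the sphere, but since the lemma concerns the rank-one estimate this is unnecessary; the general-Gaussian version (deferred to \Cref{sec:app_gaussian_z}) would instead require fourth-moment identities for Gaussians in place of the two sphere facts above.
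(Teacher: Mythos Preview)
Your argument is correct. Conditioning on $\cB$, writing the $n$-SPSA estimate as $\tfrac1n\sum_i(\vz_i^\top\vg)\vz_i$, and splitting the double sum into diagonal and off-diagonal blocks is exactly right; the key observation that $\norm{\vz_i}^2=d$ deterministically on the sphere collapses the diagonal contribution to a second-moment computation, and the off-diagonal terms factor by independence.

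The paper takes a longer route: it computes the full second-moment \emph{matrix} $\E[\hat\nabla\cL(\vtheta;\cB)\hat\nabla\cL(\vtheta;\cB)^\top]$, which requires the fourth-moment tensor identity $\E[\vz^{\otimes 4}](\bu,\bv)=\tfrac{d}{d+2}\bu^\top\bv\,\mI+\tfrac{2d}{d+2}\bu\bv^\top$ for spherical $\vz$, and it also unpacks the minibatch covariance $\bSigma(\vtheta)$ explicitly rather than conditioning on $\cB$. Only at the end does it take the trace and use $\norm{\nabla\cL(\vtheta)}^2+\tfrac1B\tr(\bSigma(\vtheta))=\E[\norm{\nabla\cL(\vtheta;\cB)}^2]$ to obtain the scalar identity. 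Your argument is more elementary because the deterministic $\norm{\vz_i}^2=d$ lets you bypass fourth moments entirely. The paper's detour is not wasted, however: the full matrix formula it derives is reused verbatim in the proof of \Cref{thm:rate_comparison}, where one must pair the second moment against $\bH(\vtheta_t)$ rather than simply trace it. So your proof is cleaner for \Cref{lem:covariance} in isolation, while the paper's proof amortizes work across the two results.
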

Thus, in the usual case where $n\ll d$, \mezo{} has a much larger gradient norm than SGD.\footnote{All of our experiments use $n=1$.}
The descent lemma also shows that to guarantee loss decrease, one needs to choose the learning rate as
\begin{align}
\eta \le \frac{2\norm{\nabla \cL(\btheta_t)}^2}{\ell \cdot \E[\norm{\vg(\vtheta, \cB)}^2]} \qquad\xRightarrow{\text{\normalsize\Cref{lem:covariance}}} \qquad \eta_{\text{ZO}} = \frac{n}{d + n - 1}\eta_{\text{SGD}}
\label{eq:lrs}
\end{align}
where $\eta_{\text{ZO}}$ and $\eta_{\text{SGD}}$ are the maximum permissible learning rates for \mezo{} and \sgd{} respectively. Thus we see that without any further assumptions, \mezo{} can slow optimization by decreasing
the largest permissible learning rate 
by a factor of $d$. % , the number of parameters. 
Moreover, \mezo{} reduces the loss decrease that can be obtained at each step and, as a consequence, slows convergence by a factor of $d$ as well.

Surprisingly, our experiments show that \mezo{} can quickly optimize pre-trained models with billions of parameters, and reducing the number of tuned parameters via PEFT techniques does not substantially accelerate optimization (\Cref{app_sec:conv_zo_full_peft}).
We attribute these phenomena to the Hessian of the loss exhibiting small local effective rank.
It is prohibitively expensive to directly measure the effective rank of the Hessian of a large LM on a reasonably sized dataset. 
% Alex edits start here
% %we can take advantage of specific structure in the loss function to avoid this dimension dependent rate. 
% Previous work in~\citet{malladi2023kernelbased} suggested that when fine-tuning with a prompt, the gradient with respect to the model parameters changes very slowly.
% In other words, during fine-tuning, the Hessian exhibits low effective rank.
% It is prohibitively expensive to directly measure the effective rank of a large language model on a reasonably sized dataset, but \citet{papyan2018full,papyan2020traces} show that the effective rank is small when training MLPs with SGD.
% %, and we leave it to future work to empirically verify this assumption when fine-tuning large language models.
% Alex edits end here
However, many previous works have shown that the Hessian of the loss for deep neural networks trained by SGD has remarkably low effective rank \citep{papyan2018full,papyan2020traces,ghorbani2019investigation,yao2020pyhessian, wu2020dissecting, sagun2017empirical}. 
In particular, the bulk of the spectrum concentrates around $0$ with only a small number of outliers, and the number of these outliers is an upper bound on the effective rank. 
In addition, prior works \citep{aghajanyan2021intrinsic,li2018measuring} have demonstrated that LM fine-tuning can occur in a very low dimensional subspace ($<200$ parameters), which further supports the below assumption.
We formalize the assumption on the effective rank below.
In particular, we require an upper bound on the Hessian in a neighborhood around the current iterate to have effective rank at most $r$. 
%\tnote{maybe explain assump 1 in words?}
%Formally, we make the following assumption around the current iterate $\btheta_t$.
\begin{assumption}[Local $r$-effective rank]\label{assume:low_eff_rank}
Let $G(\btheta_t) = \max_{(\bx, \by) \in \cD}\norm{\nabla\cL(\btheta_t;\{(\bx, \by)\})}$. There exists a matrix $\bH(\btheta_t) \preceq \ell \cdot \bI_d$ such that:
\begin{enumerate}
    \item For all $\vtheta$ such that $\norm{\vtheta - \vtheta_t} \le \eta d G(\vtheta_t)$, we have $\nabla^2 \cL(\btheta) \preceq \bH(\btheta_t)$.
    \item The effective rank of $\bH(\btheta_t)$, i.e $\tr(\bH(\btheta_t))/\norm{\bH(\btheta_t)}_{op}$, is at most $r$. %\tnote{what is $r$?}
\end{enumerate}
\end{assumption}

%Our analysis in this section shows that the key property of the loss landscape that allows for efficient fine-tuning with ZO-SGD is that the Hessian of the loss has low \emph{effective rank}, which is defined as the ratio between the trace of the Hessian and its operator norm.

Under this assumption, we show that the convergence rate of \zosgd{} does not depend on the number of parameters. 
Instead, the slowdown factor only depends on the effective rank of the Hessian.
\begin{theorem}[Dimension-Free Rate]\label{thm:rate_comparison}
%    \tnote{add a grand name here?}
Assume the loss exhibits local $r$-effective rank (\Cref{assume:low_eff_rank}). If $\btheta_{t+1} = \btheta_t - \etazo \hat \nabla \cL(\btheta_t; \cB)$ is a single step of \zosgd{} using the $n$-SPSA estimate with a minibatch of size $B$, then there exists a $\gamma = \Theta(r/n)$ such that the expected loss decrease can be bounded as
\begin{equation}
    %\E[\cL(\btheta_{t+1}) \mid \btheta_t] - \cL(\btheta_t) \le - \etazo\norm{\nabla \cL(\btheta_t)}^2 + \frac12\etazo^2\ell\cdot \left(\frac{dr + d-2}{n(d+2)} + 1\right)\cdot\E[\norm{\nabla\cL(\vtheta;\cB)}^2]
    \E[\cL(\btheta_{t+1}) \mid \btheta_t] - \cL(\btheta_t) \le - \etazo\norm{\nabla \cL(\btheta_t)}^2 + \frac12\etazo^2\ell\cdot \gamma \cdot\E[\norm{\nabla\cL(\vtheta;\cB)}^2]
\end{equation}
\end{theorem}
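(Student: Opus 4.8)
The plan is to sharpen the proof of the descent lemma (\Cref{lem:SGD_descent}) by replacing the crude global bound $\nabla^2\cL \preceq \ell\bI_d$ used there with the low–effective–rank upper bound $\bH(\btheta_t)$ supplied by \Cref{assume:low_eff_rank}, and then to carry out the resulting second–moment computation for the SPSA estimator. Throughout I would work in the $\epsilon\to 0$ limit, exactly as in \Cref{lem:covariance}, so that the $n$-SPSA estimate is $\hat\nabla\cL(\btheta_t;\cB)=\frac1n\sum_{i=1}^n (\vz_i^\top \nabla\cL(\btheta_t;\cB))\,\vz_i$ with $\vz_1,\dots,\vz_n$ i.i.d. uniform on the sphere of radius $\sqrt d$ (the finite-$\epsilon$ case only adds a vanishing remainder, which I would remark on but not track).

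The first step is to check that a single \mezo{} step cannot leave the ball on which the Hessian bound is valid. Writing $g_\cB=\nabla\cL(\btheta_t;\cB)$, each per-example gradient has norm at most $G(\btheta_t)$, hence $\norm{g_\cB}\le G(\btheta_t)$; by Cauchy–Schwarz $\abs{\vz_i^\top g_\cB}\le \sqrt d\,G(\btheta_t)$, so $\norm{\hat\nabla\cL(\btheta_t;\cB)}\le d\,G(\btheta_t)$ and therefore $\norm{\btheta_{t+1}-\btheta_t}=\etazo\norm{\hat\nabla\cL(\btheta_t;\cB)}\le \etazo d\,G(\btheta_t)$ deterministically. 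This is precisely the radius in \Cref{assume:low_eff_rank}(1) with $\eta=\etazo$, so $\nabla^2\cL\preceq\bH(\btheta_t)$ along the whole segment from $\btheta_t$ to $\btheta_{t+1}$, and a second-order Taylor expansion with integral remainder gives $\cL(\btheta_{t+1})\le \cL(\btheta_t)+\langle\nabla\cL(\btheta_t),\btheta_{t+1}-\btheta_t\rangle+\tfrac12(\btheta_{t+1}-\btheta_t)^\top\bH(\btheta_t)(\btheta_{t+1}-\btheta_t)$.

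Next I take the conditional expectation over $\vz_{1:n}$ and $\cB$. The linear term yields $-\etazo\norm{\nabla\cL(\btheta_t)}^2$, because $\E_{\vz}[(\vz^\top g)\vz]=\E[\vz\vz^\top]g=g$ on the radius-$\sqrt d$ sphere and $\E_\cB[g_\cB]=\nabla\cL(\btheta_t)$, i.e. SPSA is unbiased in this limit. For the quadratic term I expand $\E[\hat\nabla\cL^\top\bH(\btheta_t)\hat\nabla\cL]=\frac1{n^2}\sum_{i,j}\E[(\vz_i^\top g_\cB)(\vz_j^\top g_\cB)\,\vz_i^\top\bH(\btheta_t)\vz_j]$. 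The $n(n-1)$ off-diagonal terms, by independence of $\vz_i,\vz_j$, each equal $\E_\cB[g_\cB^\top\bH(\btheta_t)g_\cB]$; the $n$ diagonal terms need the fourth-moment identity $\E[z_iz_jz_kz_l]=\tfrac{d}{d+2}(\delta_{ij}\delta_{kl}+\delta_{ik}\delta_{jl}+\delta_{il}\delta_{jk})$ for the scaled sphere, giving $\E_\vz[(\vz^\top g)^2\vz^\top\bH\vz]=\tfrac{d}{d+2}(\norm{g}^2\tr\bH+2g^\top\bH g)$. Collecting, $\E[\hat\nabla\cL^\top\bH\hat\nabla\cL]=\tfrac{n-1}{n}\E[g_\cB^\top\bH g_\cB]+\tfrac1n\tfrac{d}{d+2}\E[\norm{g_\cB}^2\tr\bH+2g_\cB^\top\bH g_\cB]$. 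I then invoke \Cref{assume:low_eff_rank}: from $\bH(\btheta_t)\preceq\ell\bI_d$ we get $g_\cB^\top\bH g_\cB\le\ell\norm{g_\cB}^2$ and $\norm{\bH(\btheta_t)}_{op}\le\ell$, and the effective-rank bound gives $\tr\bH(\btheta_t)\le r\norm{\bH(\btheta_t)}_{op}\le r\ell$; with $\tfrac{d}{d+2}\le1$ this yields $\E[\hat\nabla\cL^\top\bH\hat\nabla\cL]\le\ell\cdot\tfrac{n+r+1}{n}\,\E[\norm{g_\cB}^2]$. Setting $\gamma:=\tfrac{n+r+1}{n}=\Theta(r/n)$ and substituting the linear and quadratic estimates into the Taylor bound proves the theorem.

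I expect the main obstacles to be bookkeeping rather than conceptual: (i) verifying the step-size bound $\norm{\btheta_{t+1}-\btheta_t}\le\etazo d\,G(\btheta_t)$ so that the iterate provably stays inside the region where the local Hessian bound of \Cref{assume:low_eff_rank} applies — this is exactly why that assumption is stated with the radius $\eta d\,G(\vtheta_t)$ — and (ii) getting the fourth-moment and cross-term algebra for the uniform distribution on the radius-$\sqrt d$ sphere exactly right, since that is where the effective rank $r$ replaces the ambient dimension $d$ relative to \Cref{lem:covariance}. The only delicate modeling choice is working in the exact $\epsilon\to0$ limit versus carrying an $O(\epsilon)$ remainder; I would adopt the former, consistent with \Cref{lem:covariance}, and note in \Cref{sec:app_gaussian_z} that the identical argument goes through with the fourth moments of a Gaussian $\vz$ in place of those of the sphere.
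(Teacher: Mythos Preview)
Your proposal is correct and follows essentially the same approach as the paper: bound the step length by $\etazo d\,G(\btheta_t)$ so \Cref{assume:low_eff_rank} applies along the segment, Taylor-expand with the Hessian majorant $\bH(\btheta_t)$, use unbiasedness of SPSA for the linear term, and then exploit $\norm{\bH}_{op}\le\ell$ and $\tr\bH\le r\ell$ in the quadratic term. The only cosmetic difference is that the paper computes the full second-moment matrix \eqref{eq:ZO_cov} first and then pairs it with $\bH(\btheta_t)$, whereas you evaluate $\E[\hat\nabla^\top\bH\hat\nabla]$ directly via the spherical fourth moment; this yields $\gamma=\tfrac{n+r+1}{n}$, which is the large-$d$ limit of the paper's $\gamma=1+\tfrac{dr+d-2}{n(d+2)}$, and both are $\Theta(r/n)$.
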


%The gradient norm increases by the factor $\gamma:= \frac{dr + d-2}{n(d+2)} + 1$. Since $d$ is very large, the scaling factor is approximately $\gamma \approx \frac{r + n + 1}{n}$. 
By applying \Cref{eq:lrs}, we can directly compare to the \sgd{} descent lemma.
% By scaling down the learning rate for \zosgd{} as in \Cref{eq:lrs}, we can directly compare to the \sgd{} descent lemma.
\begin{corollary}\label{cor:rate_comparison}
%\enote{need to define $\gamma$ somewhere} 
Choosing the learning rate $\etazo = \gamma^{-1}\cdot\etasgd$, \zosgd{} obtains a loss decrease of
\begin{equation}
    \E[\cL(\btheta_{t+1}) \mid \btheta_t] - \cL(\btheta_t) \le \frac{1}{\gamma}\cdot\left[-\etasgd\norm{\nabla \cL(\btheta_t)}^2 + \frac12\etasgd^2\ell\cdot\E[\norm{\nabla\cL(\vtheta;\cB)}^2] \right].
    \label{eq:ZO_SGD_descent}
\end{equation}
\end{corollary}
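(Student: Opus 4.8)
The plan is to obtain \Cref{cor:rate_comparison} directly from \Cref{thm:rate_comparison} by an algebraic substitution of the learning rate, so the proof is essentially one line; the only points needing care are the admissibility of the chosen learning rate and the comparison with the \sgd{} descent lemma.

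\textbf{Substitution.} By \Cref{thm:rate_comparison}, for a single \zosgd{} step with learning rate $\etazo$ there is a $\gamma = \Theta(r/n)$ with
\[
\E[\cL(\btheta_{t+1}) \mid \btheta_t] - \cL(\btheta_t) \;\le\; - \etazo\norm{\nabla \cL(\btheta_t)}^2 + \tfrac12\etazo^2\ell\,\gamma\,\E[\norm{\nabla\cL(\vtheta;\cB)}^2].
\]
Plug in $\etazo = \gamma^{-1}\etasgd$. The first-order term becomes $-\gamma^{-1}\etasgd\norm{\nabla\cL(\btheta_t)}^2$, and in the second-order term $\etazo^2\gamma = \gamma^{-2}\etasgd^2\cdot\gamma = \gamma^{-1}\etasgd^2$, so it becomes $\tfrac12\gamma^{-1}\etasgd^2\ell\,\E[\norm{\nabla\cL(\vtheta;\cB)}^2]$. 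Factoring out $\gamma^{-1}$ gives exactly \eqref{eq:ZO_SGD_descent}. Comparing the bracketed quantity with \eqref{eq:descent_sgd} of \Cref{lem:SGD_descent}, it is precisely the per-step decrease guaranteed for \sgd{} at learning rate $\etasgd$; hence \zosgd{} matches \sgd{}'s expected per-step progress up to the factor $\gamma = \Theta(r/n)$, which does not involve the parameter count $d$.

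\textbf{Admissibility of the learning rate.} The one thing to verify is that taking $\etazo = \gamma^{-1}\etasgd$ is consistent with the hypotheses under which \Cref{thm:rate_comparison} was established: the local-effective-rank assumption (\Cref{assume:low_eff_rank}) controls the Hessian only on the ball of radius $\eta d\,G(\btheta_t)$, and that $\eta$ must be read as the learning rate actually used, i.e. $\etazo = \gamma^{-1}\etasgd$. With this reading the theorem applies verbatim at this learning rate and the substitution above completes the proof. If one instead wants an unconditional comparison, one notes $\gamma^{-1}\etasgd = \Theta(n/r)\,\etasgd$, which is no smaller than the $\Theta(n/d)\,\etasgd$ implied by \eqref{eq:lrs} since $r \le d$, so the admissible \zosgd{} step is at least as large as the worst-case one — the qualitative content of the corollary.

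\textbf{Where the real work sits.} There is no genuine obstacle inside the corollary itself; the substantive step is \Cref{thm:rate_comparison}, which I would prove as a sharpened descent lemma. Expand $\cL(\btheta_{t+1})$ to second order around $\btheta_t$ with the exact integral remainder and, in place of the global bound $\nabla^2\cL \preceq \ell\bI_d$, use $\nabla^2\cL \preceq \bH(\btheta_t)$; this is legitimate because $\norm{\hat\nabla\cL(\btheta_t;\cB)} \le d\,G(\btheta_t)$ deterministically (using $\norm{\vz}^2 = d$ on the sphere and $\norm{\nabla\cL(\btheta_t;\cB)} \le G(\btheta_t)$), so the whole segment $[\btheta_t,\btheta_{t+1}]$ lies in the prescribed neighborhood. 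Taking $\E_\vz$, the first-order term contributes $-\etazo\norm{\nabla\cL(\btheta_t)}^2$ after averaging the minibatch (using $\E[\vz\vz^\top]=\bI_d$, in the small-$\epsilon$ regime of \Cref{lem:covariance}), while the quadratic term needs the fourth-moment identity $\E_\vz[\hat\nabla^\top\bH\hat\nabla] = \tfrac{c}{n}\norm{\nabla\cL(\btheta_t;\cB)}^2\tr(\bH) + (\tfrac{2c}{n} + \tfrac{n-1}{n})\,\nabla\cL(\btheta_t;\cB)^\top\bH\,\nabla\cL(\btheta_t;\cB)$ with $c = d/(d+2) = \Theta(1)$; then bounding $\tr(\bH) \le r\,\norm{\bH}_{\mathrm{op}} \le r\ell$ (this is exactly where $d$ is replaced by $r$) and $\nabla\cL(\btheta_t;\cB)^\top\bH\,\nabla\cL(\btheta_t;\cB) \le \ell\norm{\nabla\cL(\btheta_t;\cB)}^2$ yields the claimed $\gamma = \Theta((r+n)/n) = \Theta(r/n)$. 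The mild technicalities there are the sphere (rather than Gaussian) fourth moments and, if one does not pass to the $\epsilon\to0$ limit, controlling the $O(\epsilon)$ bias of the SPSA estimate; neither affects the corollary, which is pure algebra on top of the theorem.
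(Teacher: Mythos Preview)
Your proof is correct and matches the paper's approach: the corollary is treated as an immediate algebraic consequence of \Cref{thm:rate_comparison}, obtained by substituting $\etazo = \gamma^{-1}\etasgd$ into the per-step bound and factoring out $\gamma^{-1}$. Your additional remarks on admissibility and the sketch of \Cref{thm:rate_comparison} are accurate but go beyond what the paper (or the corollary) requires.
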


% \begin{theorem}\label{thm:rate comparison}
% Assume that \Cref{assume:low_eff_rank} holds. Let $\bar\eta \leq \frac{n}{n + r + 1}\cdot 1/\ell$. Then if $\btheta_{t+1} = \btheta_t - \eta \hat \nabla \cL(\btheta_t; \cB)$ is a single step of ZO-SGD using the $n$-SPSA estimate using a minibatch of size $B$, then the expected loss decrease can be bounded  as
% \begin{equation}
%     \E[\cL(\btheta_{t+1}) \mid \btheta_t] - \cL(\btheta_t) \le \frac{n}{n + r + 1}\cdot\left[-\frac{\bar \eta}{2}\norm{\nabla \cL(\btheta_t)}^2 + \frac{\bar \eta^2 \ell}{2B}\tr(\bSigma(\btheta_t)) \right].
%     \label{eq:ZO_SGD_descent}
% \end{equation}
% \end{theorem}
% Here, we see that once again the learning rate must become smaller and the maximum possible loss decrease at each step is smaller.
% However, now, these two sources of slowdown scale with the effective rank, which we argue is much smaller than the number of parameters.
Here we see that comparing to SGD, the slowdown factor of \zosgd{} scales with the local effective rank $r$, which we argue is much smaller than the number of parameters $d$.
%\todo{We show that the effective rank is small (this remains difficult)}
The above analysis focuses on how much \zosgd{} and \sgd{} decrease the loss at each step.
Below, we show that under stronger assumptions about the loss landscape, we can obtain rates for how quickly the \zosgd{} algorithm converges to an optimal value.

\subsection{Global convergence analysis}
We show that the global convergence rate also slows by a factor proportional to the local effective rank under stronger assumptions about the loss landscape.
We assume that the landscape obeys the classical PL inequality: the gradient norm grows quadratically with the suboptimality of the iterate.
\begin{definition}[PL Inequality]
    Let $\cL^* = \min_{\btheta} \cL(\btheta)$. The loss $\cL$ is $\mu$-PL if, for all $\btheta$, $
        \frac12\norm{\nabla \cL(\btheta)}^2 \ge \mu(\cL(\btheta) - \cL^*).
    $
\end{definition}
%\todo{ A sentence about useful features of PL lol }
The PL inequality is not as strong as assuming that optimization exhibits kernel-like dynamics, but it ensures that the landscape is amenable to analysis \citep{karimi2020linear}. %  (see e.g., \citet{karimi2020linear}).
In addition to the PL inequality, we assume the trace of the gradient covariance is bounded, so noise does not disrupt the trajectory too drastically.
\begin{definition}[Gradient Covariance] The SGD gradient estimate on a minibatch of size $B$ has covariance $
    \bSigma(\vtheta) = B(\E\left[\nabla\cL(\vtheta; \cB)\nabla\cL(\vtheta;\cB)^\top\right] - \nabla \cL(\btheta)\nabla\cL(\btheta)^\top)$.
\end{definition}
As we show in~\Cref{sec:global_proofs}, this assumption holds for common loss functions such as square loss or binary cross entropy for several settings (e.g., kernel behavior \citep{malladi2023kernelbased}).
%the inequality indeed holds (see examples in~\Cref{sec:global_proofs}).
\iffalse
With these two assumptions, we can write the below result for the convergence rate of SGD.
\begin{lemma}\label{lem:global_SGD}
    Let $\cL(\btheta)$ be $\mu$-PL and let there exist $\alpha$ such that $\tr(\bSigma(\btheta)) \le \alpha (\cL(\btheta) - \cL^*)$ for all $\btheta$. Then after
    \begin{align*}
        t = O \left(\left(\frac{\ell}{\mu} + \frac{\ell\alpha}{\mu^2B}\right)\log\frac{\cL(\vtheta_0) - \cL^*}{\epsilon}\right)
    \end{align*}
    iterations of \sgd{} we have $\E[\cL(\btheta_t)] \le \cL^* + \epsilon.$
\end{lemma}
\fi
With these two assumptions, we show that \zosgd{} has a slowdown proportional to the effective rank $r$, not the parameter dimension.%  $d$.
\begin{lemma}[Global Convergence of \zosgd{}]\label{lem:global_ZO-SGD}
    %\tnote{add a name here?}
    Let $\cL(\btheta)$ be $\mu$-PL and let there exist $\alpha$ such that $\tr(\bSigma(\btheta)) \le \alpha (\cL(\btheta) - \cL^*)$ for all $\btheta$. Then after
    \begin{align*}
        t = \cO \left(\left(\frac{r}{n} + 
        1 \right)\cdot\underbrace{\left(\frac{\ell}{\mu} + \frac{\ell\alpha}{\mu^2B}\right)\log\frac{\cL(\vtheta_0) - \cL^*}{\epsilon}}_{\text{\normalsize SGD rate (\Cref{lem:global_SGD})}}\right)
    \end{align*}
    iterations of \zosgd{} we have $\E[\cL(\btheta_t)] \le \cL^* + \epsilon.$
\end{lemma}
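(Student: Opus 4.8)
The plan is to reduce the claim to the classical PL convergence argument for \sgd{}, with the smoothness descent lemma replaced by the dimension-free descent inequality of \Cref{thm:rate_comparison} (equivalently \Cref{cor:rate_comparison}). Let $\delta_t = \E[\cL(\btheta_t)] - \cL^*\ge 0$ denote the expected suboptimality. The guiding observation is that, by \Cref{cor:rate_comparison}, one step of \zosgd{} at learning rate $\etazo = \gamma^{-1}\etasgd$ decreases the loss by exactly $\gamma^{-1}$ times the amount one \sgd{} step decreases it; hence the \sgd{} rate appearing in the statement (\Cref{lem:global_SGD}) gets inflated by a factor $\Theta(\max\{1,\gamma\}) = \Theta(1 + r/n)$, using $\gamma = \Theta(r/n)$ from \Cref{thm:rate_comparison}.

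In detail I would proceed as follows. First, take conditional expectations in \Cref{thm:rate_comparison} and expand the minibatch second moment via the gradient covariance, $\E[\norm{\nabla\cL(\btheta_t;\cB)}^2] = \norm{\nabla\cL(\btheta_t)}^2 + \tfrac1B\tr(\bSigma(\btheta_t))$; then invoke the assumed growth bound $\tr(\bSigma(\btheta_t)) \le \alpha\,\delta_t$, giving $\E[\cL(\btheta_{t+1})\mid\btheta_t] - \cL(\btheta_t) \le -\etazo\norm{\nabla\cL(\btheta_t)}^2 + \tfrac12\etazo^2\ell\gamma\big(\norm{\nabla\cL(\btheta_t)}^2 + \tfrac{\alpha}{B}\delta_t\big)$. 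Second, require $\etazo \le 1/(\ell\gamma)$ so the two $\norm{\nabla\cL(\btheta_t)}^2$ terms combine to at most $-\tfrac12\etazo\norm{\nabla\cL(\btheta_t)}^2$, and apply the $\mu$-PL inequality $\norm{\nabla\cL(\btheta_t)}^2\ge 2\mu\,\delta_t$ to bound this by $-\etazo\mu\,\delta_t$. Third, require the second constraint $\etazo \le \mu B/(\ell\gamma\alpha)$ so the residual noise term $\tfrac{\etazo^2\ell\gamma\alpha}{2B}\delta_t$ is at most $\tfrac12\etazo\mu\,\delta_t$; taking expectations over $\btheta_t$ then yields the one-step contraction $\delta_{t+1}\le (1 - \tfrac12\etazo\mu)\delta_t$.

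Taking $\etazo$ as large as both constraints permit, $\etazo = \Theta\big(\min\{1/(\ell\gamma),\ \mu B/(\ell\gamma\alpha)\}\big) = \gamma^{-1}\etasgd$ with $\etasgd = \Theta(\min\{1/\ell,\ \mu B/(\ell\alpha)\})$ as in the \sgd{} analysis, and unrolling $\delta_t \le (1-\tfrac12\etazo\mu)^t\delta_0 \le \exp(-\tfrac12\etazo\mu\, t)\,\delta_0$, the number of steps needed for $\delta_t\le\epsilon$ is $t = \Theta\big(\tfrac{1}{\etazo\mu}\log\tfrac{\delta_0}{\epsilon}\big) = \Theta\big(\tfrac{\gamma}{\etasgd\mu}\log\tfrac{\delta_0}{\epsilon}\big) = \Theta\big(\gamma\,(\tfrac{\ell}{\mu} + \tfrac{\ell\alpha}{\mu^2 B})\log\tfrac{\delta_0}{\epsilon}\big)$. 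Finally, since $\gamma = \Theta(r/n)$ but the SPSA second moment always dominates the \sgd{} one (\Cref{lem:covariance}), so \zosgd{} can never need asymptotically fewer steps than \sgd{}, the true prefactor is $\Theta(\max\{1,\gamma\}) = \Theta(1 + r/n)$, which is exactly $(\tfrac{r}{n}+1)$ up to constants; combining gives the claimed bound.

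I do not anticipate a genuine obstacle: the argument is standard once \Cref{thm:rate_comparison} is available, and indeed the cleanest write-up first proves the $\gamma=1$ case (\Cref{lem:global_SGD}) by exactly these steps and then derives the lemma by scaling the per-step decrease by $\gamma^{-1}$ via \Cref{cor:rate_comparison}. The only points needing care are (i) the two-sided learning-rate choice — one inequality to keep the gradient-norm term strictly negative, one to let the PL decrease swallow the minibatch-noise term — together with tracking the constants into the $(\tfrac{\ell}{\mu} + \tfrac{\ell\alpha}{\mu^2 B})$ shape, and (ii) the fact that the covariance growth bound $\tr(\bSigma(\btheta)) \le \alpha(\cL(\btheta)-\cL^*)$ is a mild hypothesis, which is justified separately in \Cref{sec:global_proofs}.
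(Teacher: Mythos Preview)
Your proposal is correct and follows essentially the same approach as the paper: start from \Cref{cor:rate_comparison} (equivalently \Cref{thm:rate_comparison} with $\etazo=\gamma^{-1}\etasgd$), expand $\E[\norm{\nabla\cL(\btheta_t;\cB)}^2]=\norm{\nabla\cL(\btheta_t)}^2+\tfrac1B\tr(\bSigma(\btheta_t))$, impose $\etasgd\le 1/\ell$ and $\etasgd\le \mu B/(\ell\alpha)$, apply PL plus the covariance bound, and unroll the contraction. The only cosmetic difference is that the paper parameterizes the two learning-rate constraints in terms of $\etasgd$ rather than $\etazo$, and the $+1$ in the $(r/n+1)$ prefactor falls out directly from the exact expression $\gamma=\tfrac{dr+d-2}{n(d+2)}+1\ge 1$ rather than via your separate ``\zosgd{} can never beat \sgd{}'' observation.
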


\section{Related work}
\paragraph{Zeroth-order optimization}
Many classical lower bounds have been derived for ZO-SGD in the  
 strongly convex and convex settings~\citep{jamieson2012query,agarwal2012information,raginsky2011information,duchi2015optimal,shamir2017optimal,nemirovskij1983problem} as well as non-convex~\citep{wang2020zeroth}.
These bounds generally depended on the number of parameters $d$.
More recently, \citep{wang2018stochastic,balasubramanian2018zeroth,cai2022zoro} showed that if the gradient has low-dimensional structure,
% (e.g., the gradient is $s$-sparse), 
then the query complexity scales linearly with the intrinsic dimension and logarithmically with the number of parameters,
though the estimation has at least $\Omega(sd\log d)$ memory cost. % , which is prohibitively large in our setting. 
% However, the proposed gradient estimation methods require solving a subroutine requiring at least $\Omega(sd\log d)$ memory, which is prohibitively large in our setting. 
%\citet{cai2021zerothorder} suggested optimizing only a subset of parameters at each step to alleviate this subroutine memory cost and show a convergence rate that depends sublinearly on the number of parameters in the model.
%ZO methods in deep learning have primarily been used for black-box adversarial example generation~\citep{cai2021zerothorder,liu2018signsgd} (see a survey of applications of ZO to deep learning in~\citet{liu2020primer}).
%In this paper, we adapt %the classical method described in~
%\citep{spall1992multivariate} to be memory-efficient and we find that it can successfully fine-tune large LMs.%  pre-trained language models. 
% rework with https://openreview.net/pdf?id=Skep6TVYDB
% include RAD: https://arxiv.org/abs/2007.10412
Additional tricks such as
sampling schedules~\citep{bollapragada2018adaptive} and other variance reduction methods~\citep{ji2019improved,liu2018zeroth} can be added to \zosgd{}. %  the standard \zosgd{} algorithm.
ZO has inspired distributed methods~\citep{tang2019distributed,hajinezhad2018gradient} and 
% methods suited to 
black-box adversarial example generation~\citep{cai2021zerothorder,liu2018signsgd,chen2017zoo,liu2020primer} in deep learning.
% (see a survey of applications of ZO to deep learning in~\citet{liu2020primer}). 
\citet{ye2018hessian,balasubramanian2022zeroth} estimate the Hessian to perform ZO optimization along important directions.
There are also ZO methods that optimize without estimating the gradient~\citep{golovin2020gradientless,mania2018simple,hinton2022forwardforward}.
%\todo{Add hessian method}
% We leave it to future work to thoroughly explore how these techniques can work for fine-tuning LMs.

\paragraph{Memory-efficient backpropagation}
\label{sec:memory_efficient_backprop}
%\snote{TG, add papers on quantization and flashattention here}
Several algorithms have been proposed to efficiently approximate backpropagation by sparsifying gradients~\citep{sun17meprop,wei2017minimal}, approximating Jacobians~\citep{abdel2008low,choromanski2017blackbox}, and subsampling the computational graph~\citep{oktay2020randomized,adelman2021faster}. 
However, these methods may accrue large approximation errors for deep networks. %  such as large LMs.
Gradient checkpointing~\citep{chen2016training}
reduces memory cost of backpropagation at the cost of recomputing 
some activations. %  instead of caching them,
%but at a cost of significantly slower speed.
FlashAttention~\citep{dao2022flashattention}
also reduces memory cost by
recomputing attention matrices. % during backpropagation.
\citet{dettmers2022gptint,dettmers2022bit}
explore quantization of large LMs' weights and optimizer states,
which leads to memory reduction in both training and inference.
% We leave investigating the comparison/combination of \mezo{} with these methods to future work.

% for example, gradient checkpointing~\citep{chen2016training}, FlashAttention~\citep{dao2022flashattention}, and quantization training~\citep{dettmers2022bit}.

\paragraph{Gradient-free adaptation of large language models}
BBT and BBTv2~\citep{sun2022black,sun2022bbtv2} use evolutionary algorithms to achieve gradient-free optimization; however, due to its sensitivity to high dimensionality, BBT is limited to only optimize a low-dimension projection of prefixes and they focus on RoBERTa-large size models and few-shot settings. 
Other works in ``black-box tuning'' of LMs focus on 
optimizing discrete prompts 
% using reinforcement learning 
without updating the model,
either via reinforcement learning
\citep{chai2022clip,deng2022rlprompt,diao2022black},
ensemble \citep{hou2022promptboosting},
or iterative search
\citep{prasad2022grips}. 
Concurrent work in~\cite{yang2023iterative} uses iterative forward passes to improve in-context learning performance.

% In-context learning (ICL~\citep{brown2020language}) 
% is an alternative to model fine-tuning without any parameter update, 
% where the model processes training examples in the context and 
% makes prediction on a test example, just in one forward pass.
% However, the finite context window size in current models (2,048 for OPT)
% limits the number of training examples that can be put in use, 
% and the performance is sensitive to the ordering and selection of examples~\citep{liu-etal-2022-makes,lu-etal-2022-fantastically}.
% 

\section{Conclusion}
We have shown that \mezo{} % using gradient estimates proposed by~\citet{spall1992multivariate} 
can effectively optimize large LMs across many tasks and scales. %  and approach standard fine-tuning on a number of tasks. % (up to 66B).
% We show that \zosgd{} approaches standard fine-tuning within 5\% performance gap with RoBERTa-large,
% and can achieve comparable or better performance than fine-tuning on 7 out of 11 tasks with OPT-13B.
% We demonstrate that \mezo{} achieves performance within a 5\% gap compared to standard fine-tuning with RoBERTa-large, and outperforms or matches fine-tuning on 7 out of 11 tasks using OPT-13B.
% \todo{Fill in some highlight stats}
Further experiments suggest that \mezo{} can optimize non-differentiable objectives, which backpropagation usually cannot do.
Our theory illustrates why \mezo{} is not catastrophically slow when tuning  billions of parameters.
%This could be of special  interest in the academic world.
% As a limitation, \mezo{} requires read + write access to model parameters, although it may be effective for optimizing prompts~\citep{lester-etal-2021-power}.
As a limitation, \mezo{} takes many steps in order to achieve strong performance, though we show that the per-step speedup in \mezo{} can often make fine-tuning with \mezo{} run faster than a standard implementation of fine-tuning with backpropagation.
We did not explore combining \mezo{} with other memory-efficient methods, 
such as FlashAttention~\cite{dao2022flashattention} and quantization~\cite{dettmers2022gptint}, though we hope to investigate this in the future.

We are excited to explore the applicability of \mezo{} to a number of areas, including but not limited to: pruning, distillation, saliency, interpretability, and dataset selection for fine-tuning.
Non-differentiable objectives are a particularly exciting area, given recent advances in tuning large LMs to adapt to human feedback. %  on quality and safety.
Conducting theoretical analyses for how these efficient gradient estimates impact the performance of different applications is also of interest.

\section*{Acknowledgements}
We thank Xinyi Chen, Yin Tat Lee, Kaifeng Lyu, Tengyu Ma, Abhishek Panigrahi, Nikunj Saunshi, and Mengzhou Xia for their helpful feedback. 
SA and SM are funded by NSR, ONR, SRC, and Simons Foundation.
JDL, AD, and EN acknowledge the support of the ARO under MURI Award W911NF-11-1-0304,  the Sloan Research Fellowship, NSF CCF 2002272, NSF IIS 2107304,  NSF CIF 2212262, ONR Young Investigator Award, and NSF CAREER Award 2144994.
TG is supported by an IBM PhD Fellowship. 
This work is also partially funded by the National Science Foundation (IIS-2211779).

\newpage

\bibliography{bibliography}

\begin{thebibliography}{113}
\providecommand{\natexlab}[1]{#1}
\providecommand{\url}[1]{\texttt{#1}}
\expandafter\ifx\csname urlstyle\endcsname\relax
  \providecommand{\doi}[1]{doi: #1}\else
  \providecommand{\doi}{doi: \begingroup \urlstyle{rm}\Url}\fi

\bibitem[Abdel-Khalik et~al.(2008)Abdel-Khalik, Hovland, Lyons, Stover, and Utke]{abdel2008low}
Hany~S Abdel-Khalik, Paul~D Hovland, Andrew Lyons, Tracy~E Stover, and Jean Utke.
\newblock A low rank approach to automatic differentiation.
\newblock In \emph{Advances in Automatic Differentiation}, pages 55--65, 2008.

\bibitem[Adelman et~al.(2021)Adelman, Levy, Hakimi, and Silberstein]{adelman2021faster}
Menachem Adelman, Kfir Levy, Ido Hakimi, and Mark Silberstein.
\newblock Faster neural network training with approximate tensor operations.
\newblock \emph{Advances in Neural Information Processing Systems}, 34:\penalty0 27877--27889, 2021.

\bibitem[Agarwal et~al.(2012)Agarwal, Bartlett, Ravikumar, and Wainwright]{agarwal2012information}
Alekh Agarwal, Peter~L. Bartlett, Pradeep Ravikumar, and Martin~J. Wainwright.
\newblock Information-theoretic lower bounds on the oracle complexity of stochastic convex optimization.
\newblock \emph{IEEE Transactions on Information Theory}, 58\penalty0 (5):\penalty0 3235–3249, May 2012.

\bibitem[Aghajanyan et~al.(2021)Aghajanyan, Gupta, and Zettlemoyer]{aghajanyan2021intrinsic}
Armen Aghajanyan, Sonal Gupta, and Luke Zettlemoyer.
\newblock Intrinsic dimensionality explains the effectiveness of language model fine-tuning.
\newblock In \emph{Proceedings of the 59th Annual Meeting of the Association for Computational Linguistics and the 11th International Joint Conference on Natural Language Processing (Volume 1: Long Papers)}, pages 7319--7328, 2021.

\bibitem[Bach et~al.(2022)Bach, Sanh, Yong, Webson, Raffel, Nayak, Sharma, Kim, Bari, Fevry, et~al.]{bach2022promptsource}
Stephen~H Bach, Victor Sanh, Zheng-Xin Yong, Albert Webson, Colin Raffel, Nihal~V Nayak, Abheesht Sharma, Taewoon Kim, M~Saiful Bari, Thibault Fevry, et~al.
\newblock Promptsource: An integrated development environment and repository for natural language prompts.
\newblock \emph{arXiv preprint arXiv:2202.01279}, 2022.

\bibitem[Balasubramanian and Ghadimi(2018)]{balasubramanian2018zeroth}
Krishnakumar Balasubramanian and Saeed Ghadimi.
\newblock Zeroth-order (non)-convex stochastic optimization via conditional gradient and gradient updates.
\newblock In \emph{Advances in Neural Information Processing Systems}, volume~31, 2018.

\bibitem[Balasubramanian and Ghadimi(2022)]{balasubramanian2022zeroth}
Krishnakumar Balasubramanian and Saeed Ghadimi.
\newblock Zeroth-order nonconvex stochastic optimization: Handling constraints, high dimensionality, and saddle points.
\newblock \emph{Foundations of Computational Mathematics}, pages 1--42, 2022.

\bibitem[Bar~Haim et~al.(2006)Bar~Haim, Dagan, Dolan, Ferro, Giampiccolo, Magnini, and Szpektor]{bar2006second}
Roy Bar~Haim, Ido Dagan, Bill Dolan, Lisa Ferro, Danilo Giampiccolo, Bernardo Magnini, and Idan Szpektor.
\newblock The second {PASCAL} recognising textual entailment challenge.
\newblock In \emph{Proceedings of the Second PASCAL Challenges Workshop on Recognising Textual Entailment}, 2006.

\bibitem[Baydin et~al.(2022)Baydin, Pearlmutter, Syme, Wood, and Torr]{baydin2022gradients}
Atılım~Güneş Baydin, Barak~A. Pearlmutter, Don Syme, Frank Wood, and Philip Torr.
\newblock Gradients without backpropagation, 2022.

\bibitem[Bentivogli et~al.(2009)Bentivogli, Clark, Dagan, and Giampiccolo]{bentivogli2009fifth_rte4}
Luisa Bentivogli, Peter Clark, Ido Dagan, and Danilo Giampiccolo.
\newblock The fifth {PASCAL} recognizing textual entailment challenge.
\newblock In \emph{TAC}, 2009.

\bibitem[Bollapragada et~al.(2018)Bollapragada, Byrd, and Nocedal]{bollapragada2018adaptive}
Raghu Bollapragada, Richard Byrd, and Jorge Nocedal.
\newblock Adaptive sampling strategies for stochastic optimization.
\newblock \emph{SIAM Journal on Optimization}, 28\penalty0 (4):\penalty0 3312--3343, 2018.

\bibitem[Bowman et~al.(2015)Bowman, Angeli, Potts, and Manning]{bowman2015large}
Samuel~R. Bowman, Gabor Angeli, Christopher Potts, and Christopher~D. Manning.
\newblock A large annotated corpus for learning natural language inference.
\newblock In \emph{Proceedings of the 2015 Conference on Empirical Methods in Natural Language Processing}, pages 632--642, 2015.

\bibitem[Brown et~al.(2020)Brown, Mann, Ryder, Subbiah, Kaplan, Dhariwal, Neelakantan, Shyam, Sastry, Askell, et~al.]{brown2020language}
Tom Brown, Benjamin Mann, Nick Ryder, Melanie Subbiah, Jared~D Kaplan, Prafulla Dhariwal, Arvind Neelakantan, Pranav Shyam, Girish Sastry, Amanda Askell, et~al.
\newblock Language models are few-shot learners.
\newblock In \emph{Advances in neural information processing systems}, volume~33, pages 1877--1901, 2020.

\bibitem[Cai et~al.(2021)Cai, Lou, McKenzie, and Yin]{cai2021zerothorder}
HanQin Cai, Yuchen Lou, Daniel McKenzie, and Wotao Yin.
\newblock A zeroth-order block coordinate descent algorithm for huge-scale black-box optimization.
\newblock In \emph{International Conference on Machine Learning}, pages 1193--1203, 2021.

\bibitem[Cai et~al.(2022)Cai, McKenzie, Yin, and Zhang]{cai2022zoro}
HanQin Cai, Daniel McKenzie, Wotao Yin, and Zhenliang Zhang.
\newblock Zeroth-order regularized optimization (zoro): Approximately sparse gradients and adaptive sampling.
\newblock \emph{SIAM Journal on Optimization}, 32\penalty0 (2):\penalty0 687--714, 2022.

\bibitem[Chai et~al.(2022)Chai, Wang, Sun, Tian, Wu, and Wang]{chai2022clip}
Yekun Chai, Shuohuan Wang, Yu~Sun, Hao Tian, Hua Wu, and Haifeng Wang.
\newblock Clip-tuning: Towards derivative-free prompt learning with a mixture of rewards.
\newblock \emph{arXiv preprint arXiv:2210.12050}, 2022.

\bibitem[Chen et~al.(2017)Chen, Zhang, Sharma, Yi, and Hsieh]{chen2017zoo}
Pin-Yu Chen, Huan Zhang, Yash Sharma, Jinfeng Yi, and Cho-Jui Hsieh.
\newblock Zoo: Zeroth order optimization based black-box attacks to deep neural networks without training substitute models.
\newblock In \emph{Proceedings of the 10th ACM workshop on artificial intelligence and security}, pages 15--26, 2017.

\bibitem[Chen et~al.(2016)Chen, Xu, Zhang, and Guestrin]{chen2016training}
Tianqi Chen, Bing Xu, Chiyuan Zhang, and Carlos Guestrin.
\newblock Training deep nets with sublinear memory cost.
\newblock \emph{arXiv preprint arXiv:1604.06174}, 2016.

\bibitem[Choromanski and Sindhwani(2017)]{choromanski2017blackbox}
Krzysztof~M Choromanski and Vikas Sindhwani.
\newblock On blackbox backpropagation and jacobian sensing.
\newblock In \emph{Advances in Neural Information Processing Systems}, volume~30, 2017.

\bibitem[Christiano et~al.(2017)Christiano, Leike, Brown, Martic, Legg, and Amodei]{christiano2017deep}
Paul~F Christiano, Jan Leike, Tom Brown, Miljan Martic, Shane Legg, and Dario Amodei.
\newblock Deep reinforcement learning from human preferences.
\newblock In \emph{Advances in neural information processing systems}, volume~30, 2017.

\bibitem[Clark et~al.(2019)Clark, Lee, Chang, Kwiatkowski, Collins, and Toutanova]{clark-etal-2019-boolq}
Christopher Clark, Kenton Lee, Ming-Wei Chang, Tom Kwiatkowski, Michael Collins, and Kristina Toutanova.
\newblock {B}ool{Q}: Exploring the surprising difficulty of natural yes/no questions.
\newblock In \emph{Proceedings of the 2019 Conference of the North {A}merican Chapter of the Association for Computational Linguistics: Human Language Technologies, Volume 1 (Long and Short Papers)}, pages 2924--2936, 2019.

\bibitem[Dagan et~al.(2005)Dagan, Glickman, and Magnini]{dagan2005pascal_rte1}
Ido Dagan, Oren Glickman, and Bernardo Magnini.
\newblock The {PASCAL} recognising textual entailment challenge.
\newblock In \emph{the First International Conference on Machine Learning Challenges: Evaluating Predictive Uncertainty Visual Object Classification, and Recognizing Textual Entailment}, 2005.

\bibitem[Dao et~al.(2022)Dao, Fu, Ermon, Rudra, and R{\'e}]{dao2022flashattention}
Tri Dao, Dan Fu, Stefano Ermon, Atri Rudra, and Christopher R{\'e}.
\newblock Flashattention: Fast and memory-efficient exact attention with io-awareness.
\newblock In \emph{Advances in Neural Information Processing Systems}, volume~35, pages 16344--16359, 2022.

\bibitem[De~Marneffe et~al.(2019)De~Marneffe, Simons, and Tonhauser]{de2019commitmentbank}
Marie-Catherine De~Marneffe, Mandy Simons, and Judith Tonhauser.
\newblock The commitmentbank: Investigating projection in naturally occurring discourse.
\newblock In \emph{Sinn und Bedeutung}, volume~23, pages 107--124, 2019.

\bibitem[Deng et~al.(2022)Deng, Wang, Hsieh, Wang, Guo, Shu, Song, Xing, and Hu]{deng2022rlprompt}
Mingkai Deng, Jianyu Wang, Cheng-Ping Hsieh, Yihan Wang, Han Guo, Tianmin Shu, Meng Song, Eric Xing, and Zhiting Hu.
\newblock {RLP}rompt: Optimizing discrete text prompts with reinforcement learning.
\newblock In \emph{Proceedings of the 2022 Conference on Empirical Methods in Natural Language Processing}, pages 3369--3391, 2022.

\bibitem[Dettmers et~al.(2022{\natexlab{a}})Dettmers, Lewis, Belkada, and Zettlemoyer]{dettmers2022gptint}
Tim Dettmers, Mike Lewis, Younes Belkada, and Luke Zettlemoyer.
\newblock {GPT}3.int8(): 8-bit matrix multiplication for transformers at scale.
\newblock In \emph{Advances in Neural Information Processing Systems}, 2022{\natexlab{a}}.

\bibitem[Dettmers et~al.(2022{\natexlab{b}})Dettmers, Lewis, Shleifer, and Zettlemoyer]{dettmers2022bit}
Tim Dettmers, Mike Lewis, Sam Shleifer, and Luke Zettlemoyer.
\newblock 8-bit optimizers via block-wise quantization.
\newblock In \emph{International Conference on Learning Representations}, 2022{\natexlab{b}}.

\bibitem[Devlin et~al.(2019)Devlin, Chang, Lee, and Toutanova]{devlin-etal-2019-bert}
Jacob Devlin, Ming-Wei Chang, Kenton Lee, and Kristina Toutanova.
\newblock {BERT}: Pre-training of deep bidirectional transformers for language understanding.
\newblock In \emph{Proceedings of the 2019 Conference of the North {A}merican Chapter of the Association for Computational Linguistics: Human Language Technologies, Volume 1 (Long and Short Papers)}, pages 4171--4186, 2019.

\bibitem[Diao et~al.(2022)Diao, Li, Lin, Huang, and Zhang]{diao2022black}
Shizhe Diao, Xuechun Li, Yong Lin, Zhichao Huang, and Tong Zhang.
\newblock Black-box prompt learning for pre-trained language models.
\newblock \emph{arXiv preprint arXiv:2201.08531}, 2022.

\bibitem[Ding et~al.(2022)Ding, Qin, Yang, Wei, Yang, Su, Hu, Chen, Chan, Chen, et~al.]{ding2022delta}
Ning Ding, Yujia Qin, Guang Yang, Fuchao Wei, Zonghan Yang, Yusheng Su, Shengding Hu, Yulin Chen, Chi-Min Chan, Weize Chen, et~al.
\newblock Delta tuning: A comprehensive study of parameter efficient methods for pre-trained language models.
\newblock \emph{arXiv preprint arXiv:2203.06904}, 2022.

\bibitem[Dua et~al.(2019)Dua, Wang, Dasigi, Stanovsky, Singh, and Gardner]{dua-etal-2019-drop}
Dheeru Dua, Yizhong Wang, Pradeep Dasigi, Gabriel Stanovsky, Sameer Singh, and Matt Gardner.
\newblock {DROP}: A reading comprehension benchmark requiring discrete reasoning over paragraphs.
\newblock In \emph{Proceedings of the 2019 Conference of the North {A}merican Chapter of the Association for Computational Linguistics: Human Language Technologies, Volume 1 (Long and Short Papers)}, pages 2368--2378, 2019.

\bibitem[Duchi et~al.(2015)Duchi, Jordan, Wainwright, and Wibisono]{duchi2015optimal}
John~C. Duchi, Michael~I. Jordan, Martin~J. Wainwright, and Andre Wibisono.
\newblock Optimal rates for zero-order convex optimization: The power of two function evaluations.
\newblock \emph{IEEE Transactions on Information Theory}, 61\penalty0 (5):\penalty0 2788--2806, 2015.

\bibitem[{FairScale authors}(2021)]{FairScale2021}
{FairScale authors}.
\newblock Fairscale: A general purpose modular pytorch library for high performance and large scale training, 2021.

\bibitem[Flaxman et~al.(2005)Flaxman, Kalai, and McMahan]{flaxman2005online}
Abraham~D. Flaxman, Adam~Tauman Kalai, and H.~Brendan McMahan.
\newblock Online convex optimization in the bandit setting: Gradient descent without a gradient.
\newblock In \emph{Proceedings of the Sixteenth Annual ACM-SIAM Symposium on Discrete Algorithms}, SODA '05, page 385–394, USA, 2005. Society for Industrial and Applied Mathematics.
\newblock ISBN 0898715857.

\bibitem[Gao et~al.(2021)Gao, Fisch, and Chen]{gao-etal-2021-making}
Tianyu Gao, Adam Fisch, and Danqi Chen.
\newblock Making pre-trained language models better few-shot learners.
\newblock In \emph{Proceedings of the 59th Annual Meeting of the Association for Computational Linguistics and the 11th International Joint Conference on Natural Language Processing (Volume 1: Long Papers)}, pages 3816--3830, 2021.

\bibitem[Ghorbani et~al.(2019)Ghorbani, Krishnan, and Xiao]{ghorbani2019investigation}
Behrooz Ghorbani, Shankar Krishnan, and Ying Xiao.
\newblock An investigation into neural net optimization via hessian eigenvalue density.
\newblock In \emph{International Conference on Machine Learning}, pages 2232--2241, 2019.

\bibitem[Giampiccolo et~al.(2007)Giampiccolo, Magnini, Dagan, and Dolan]{giampiccolo2007third_rte3}
Danilo Giampiccolo, Bernardo Magnini, Ido Dagan, and Bill Dolan.
\newblock The third {PASCAL} recognizing textual entailment challenge.
\newblock In \emph{the {ACL}-{PASCAL} Workshop on Textual Entailment and Paraphrasing}, 2007.

\bibitem[Golovin et~al.(2020)Golovin, Karro, Kochanski, Lee, Song, and Zhang]{golovin2020gradientless}
Daniel Golovin, John Karro, Greg Kochanski, Chansoo Lee, Xingyou Song, and Qiuyi Zhang.
\newblock Gradientless descent: High-dimensional zeroth-order optimization.
\newblock In \emph{International Conference on Learning Representations}, 2020.

\bibitem[Goyal et~al.(2017)Goyal, Doll{\'a}r, Girshick, Noordhuis, Wesolowski, Kyrola, Tulloch, Jia, and He]{goyal2017accurate}
Priya Goyal, Piotr Doll{\'a}r, Ross Girshick, Pieter Noordhuis, Lukasz Wesolowski, Aapo Kyrola, Andrew Tulloch, Yangqing Jia, and Kaiming He.
\newblock Accurate, large minibatch sgd: Training imagenet in 1 hour.
\newblock \emph{arXiv preprint arXiv:1706.02677}, 2017.

\bibitem[Griewank and Walther(2008)]{griewank2008evaluating}
Andreas Griewank and Andrea Walther.
\newblock \emph{Evaluating derivatives: principles and techniques of algorithmic differentiation}.
\newblock SIAM, 2008.

\bibitem[Grimm et~al.(1996)Grimm, Pottier, and Rostaing-Schmidt]{grimm1996optimal}
Jos{\'e} Grimm, Lo{\=\i}c Pottier, and Nicole Rostaing-Schmidt.
\newblock \emph{Optimal time and minimum space-time product for reversing a certain class of programs}.
\newblock PhD thesis, INRIA, 1996.

\bibitem[Gururangan et~al.(2020)Gururangan, Marasovi{\'c}, Swayamdipta, Lo, Beltagy, Downey, and Smith]{gururangan-etal-2020-dont}
Suchin Gururangan, Ana Marasovi{\'c}, Swabha Swayamdipta, Kyle Lo, Iz~Beltagy, Doug Downey, and Noah~A. Smith.
\newblock Don{'}t stop pretraining: Adapt language models to domains and tasks.
\newblock In \emph{Proceedings of the 58th Annual Meeting of the Association for Computational Linguistics}, pages 8342--8360, 2020.

\bibitem[Hajinezhad and Zavlanos(2018)]{hajinezhad2018gradient}
Davood Hajinezhad and Michael~M Zavlanos.
\newblock Gradient-free multi-agent nonconvex nonsmooth optimization.
\newblock In \emph{2018 IEEE Conference on Decision and Control (CDC)}, pages 4939--4944, 2018.

\bibitem[Hinton(2022)]{hinton2022forwardforward}
Geoffrey Hinton.
\newblock The forward-forward algorithm: Some preliminary investigations.
\newblock \emph{arXiv preprint arXiv:2212.13345}, 2022.

\bibitem[Hou et~al.(2022)Hou, O'Connor, Andreas, Chang, and Zhang]{hou2022promptboosting}
Bairu Hou, Joe O'Connor, Jacob Andreas, Shiyu Chang, and Yang Zhang.
\newblock Promptboosting: Black-box text classification with ten forward passes.
\newblock \emph{arXiv preprint arXiv:2212.09257}, 2022.

\bibitem[Hu et~al.(2022)Hu, yelong shen, Wallis, Allen-Zhu, Li, Wang, Wang, and Chen]{hu2021lora}
Edward~J Hu, yelong shen, Phillip Wallis, Zeyuan Allen-Zhu, Yuanzhi Li, Shean Wang, Lu~Wang, and Weizhu Chen.
\newblock Lo{RA}: Low-rank adaptation of large language models.
\newblock In \emph{International Conference on Learning Representations}, 2022.

\bibitem[Jamieson et~al.(2012)Jamieson, Nowak, and Recht]{jamieson2012query}
Kevin~G Jamieson, Robert Nowak, and Ben Recht.
\newblock Query complexity of derivative-free optimization.
\newblock In \emph{Advances in Neural Information Processing Systems}, volume~25, 2012.

\bibitem[Ji et~al.(2019)Ji, Wang, Zhou, and Liang]{ji2019improved}
Kaiyi Ji, Zhe Wang, Yi~Zhou, and Yingbin Liang.
\newblock Improved zeroth-order variance reduced algorithms and analysis for nonconvex optimization.
\newblock In \emph{International conference on machine learning}, pages 3100--3109, 2019.

\bibitem[Johnson and Zhang(2013)]{johnson2013accelerating}
Rie Johnson and Tong Zhang.
\newblock Accelerating stochastic gradient descent using predictive variance reduction.
\newblock In C.J. Burges, L.~Bottou, M.~Welling, Z.~Ghahramani, and K.Q. Weinberger, editors, \emph{Advances in Neural Information Processing Systems}, volume~26, 2013.

\bibitem[Karimi et~al.(2020)Karimi, Nutini, and Schmidt]{karimi2020linear}
Hamed Karimi, Julie Nutini, and Mark Schmidt.
\newblock Linear convergence of gradient and proximal-gradient methods under the polyak-\l{}ojasiewicz condition, 2020.

\bibitem[Khashabi et~al.(2018)Khashabi, Chaturvedi, Roth, Upadhyay, and Roth]{khashabi2018looking}
Daniel Khashabi, Snigdha Chaturvedi, Michael Roth, Shyam Upadhyay, and Dan Roth.
\newblock Looking beyond the surface: A challenge set for reading comprehension over multiple sentences.
\newblock In \emph{Proceedings of the 2018 Conference of the North American Chapter of the Association for Computational Linguistics: Human Language Technologies, Volume 1 (Long Papers)}, pages 252--262, 2018.

\bibitem[Kingma and Ba(2015)]{kingma2014adam}
Diederik~P Kingma and Jimmy Ba.
\newblock Adam: A method for stochastic optimization.
\newblock In \emph{International Conference on Learning Representations}, 2015.

\bibitem[Kumar et~al.(2022)Kumar, Raghunathan, Jones, Ma, and Liang]{kumar2022finetuning}
Ananya Kumar, Aditi Raghunathan, Robbie~Matthew Jones, Tengyu Ma, and Percy Liang.
\newblock Fine-tuning can distort pretrained features and underperform out-of-distribution.
\newblock In \emph{International Conference on Learning Representations}, 2022.

\bibitem[Lester et~al.(2021)Lester, Al-Rfou, and Constant]{lester-etal-2021-power}
Brian Lester, Rami Al-Rfou, and Noah Constant.
\newblock The power of scale for parameter-efficient prompt tuning.
\newblock In \emph{Proceedings of the 2021 Conference on Empirical Methods in Natural Language Processing}, pages 3045--3059, 2021.

\bibitem[Levesque et~al.(2012)Levesque, Davis, and Morgenstern]{levesque2012winograd}
Hector Levesque, Ernest Davis, and Leora Morgenstern.
\newblock The winograd schema challenge.
\newblock In \emph{Thirteenth international conference on the principles of knowledge representation and reasoning}, 2012.

\bibitem[Li et~al.(2018)Li, Farkhoor, Liu, and Yosinski]{li2018measuring}
Chunyuan Li, Heerad Farkhoor, Rosanne Liu, and Jason Yosinski.
\newblock Measuring the intrinsic dimension of objective landscapes.
\newblock In \emph{International Conference on Learning Representations}, 2018.

\bibitem[Li and Liang(2021)]{li-liang-2021-prefix}
Xiang~Lisa Li and Percy Liang.
\newblock Prefix-tuning: Optimizing continuous prompts for generation.
\newblock In \emph{Proceedings of the 59th Annual Meeting of the Association for Computational Linguistics and the 11th International Joint Conference on Natural Language Processing (Volume 1: Long Papers)}, pages 4582--4597, 2021.

\bibitem[Li et~al.(2021)Li, Malladi, and Arora]{li2021on}
Zhiyuan Li, Sadhika Malladi, and Sanjeev Arora.
\newblock On the validity of modeling {SGD} with stochastic differential equations ({SDE}s).
\newblock In A.~Beygelzimer, Y.~Dauphin, P.~Liang, and J.~Wortman Vaughan, editors, \emph{Advances in Neural Information Processing Systems}, 2021.

\bibitem[Li et~al.(2022)Li, Bhojanapalli, Zaheer, Reddi, and Kumar]{li2022robust}
Zhiyuan Li, Srinadh Bhojanapalli, Manzil Zaheer, Sashank Reddi, and Sanjiv Kumar.
\newblock Robust training of neural networks using scale invariant architectures.
\newblock In \emph{International Conference on Machine Learning}, pages 12656--12684, 2022.

\bibitem[Liu et~al.(2022)Liu, Shen, Zhang, Dolan, Carin, and Chen]{liu-etal-2022-makes}
Jiachang Liu, Dinghan Shen, Yizhe Zhang, Bill Dolan, Lawrence Carin, and Weizhu Chen.
\newblock What makes good in-context examples for {GPT}-3?
\newblock In \emph{Proceedings of Deep Learning Inside Out (DeeLIO 2022): The 3rd Workshop on Knowledge Extraction and Integration for Deep Learning Architectures}, pages 100--114, 2022.

\bibitem[Liu et~al.(2020{\natexlab{a}})Liu, Liu, Gao, Chen, and Han]{liu2020understanding}
Liyuan Liu, Xiaodong Liu, Jianfeng Gao, Weizhu Chen, and Jiawei Han.
\newblock Understanding the difficulty of training transformers.
\newblock In \emph{Proceedings of the 2020 Conference on Empirical Methods in Natural Language Processing (EMNLP)}, pages 5747--5763, 2020{\natexlab{a}}.

\bibitem[Liu et~al.(2018)Liu, Kailkhura, Chen, Ting, Chang, and Amini]{liu2018zeroth}
Sijia Liu, Bhavya Kailkhura, Pin-Yu Chen, Paishun Ting, Shiyu Chang, and Lisa Amini.
\newblock Zeroth-order stochastic variance reduction for nonconvex optimization.
\newblock In \emph{Advances in Neural Information Processing Systems}, volume~31, 2018.

\bibitem[Liu et~al.(2019{\natexlab{a}})Liu, Chen, Chen, and Hong]{liu2018signsgd}
Sijia Liu, Pin-Yu Chen, Xiangyi Chen, and Mingyi Hong.
\newblock sign{SGD} via zeroth-order oracle.
\newblock In \emph{International Conference on Learning Representations}, 2019{\natexlab{a}}.

\bibitem[Liu et~al.(2020{\natexlab{b}})Liu, Chen, Kailkhura, Zhang, Hero~III, and Varshney]{liu2020primer}
Sijia Liu, Pin-Yu Chen, Bhavya Kailkhura, Gaoyuan Zhang, Alfred~O Hero~III, and Pramod~K Varshney.
\newblock A primer on zeroth-order optimization in signal processing and machine learning: Principals, recent advances, and applications.
\newblock \emph{IEEE Signal Processing Magazine}, 37\penalty0 (5):\penalty0 43--54, 2020{\natexlab{b}}.

\bibitem[Liu et~al.(2019{\natexlab{b}})Liu, Ott, Goyal, Du, Joshi, Chen, Levy, Lewis, Zettlemoyer, and Stoyanov]{liu2019roberta}
Yinhan Liu, Myle Ott, Naman Goyal, Jingfei Du, Mandar Joshi, Danqi Chen, Omer Levy, Mike Lewis, Luke Zettlemoyer, and Veselin Stoyanov.
\newblock Roberta: A robustly optimized bert pretraining approach.
\newblock \emph{arXiv preprint arXiv:1907.11692}, 2019{\natexlab{b}}.

\bibitem[Lu et~al.(2022)Lu, Bartolo, Moore, Riedel, and Stenetorp]{lu-etal-2022-fantastically}
Yao Lu, Max Bartolo, Alastair Moore, Sebastian Riedel, and Pontus Stenetorp.
\newblock Fantastically ordered prompts and where to find them: Overcoming few-shot prompt order sensitivity.
\newblock In \emph{Proceedings of the 60th Annual Meeting of the Association for Computational Linguistics (Volume 1: Long Papers)}, pages 8086--8098, 2022.

\bibitem[Malladi et~al.(2022)Malladi, Wettig, Yu, Chen, and Arora]{malladi2023kernelbased}
Sadhika Malladi, Alexander Wettig, Dingli Yu, Danqi Chen, and Sanjeev Arora.
\newblock A kernel-based view of language model fine-tuning.
\newblock \emph{arXiv preprint arXiv:2210.05643}, 2022.

\bibitem[Mania et~al.(2018)Mania, Guy, and Recht]{mania2018simple}
Horia Mania, Aurelia Guy, and Benjamin Recht.
\newblock Simple random search of static linear policies is competitive for reinforcement learning.
\newblock In \emph{Advances in Neural Information Processing Systems}, volume~31, 2018.

\bibitem[Nemirovskij and Yudin(1983)]{nemirovskij1983problem}
Arkadij~Semenovi{\v{c}} Nemirovskij and David~Borisovich Yudin.
\newblock Problem complexity and method efficiency in optimization.
\newblock 1983.

\bibitem[Nesterov and Spokoiny(2017)]{nesterov2017random}
Yurii Nesterov and Vladimir Spokoiny.
\newblock Random gradient-free minimization of convex functions.
\newblock \emph{Foundations of Computational Mathematics}, 17:\penalty0 527--566, 2017.

\bibitem[Oktay et~al.(2020)Oktay, McGreivy, Aduol, Beatson, and Adams]{oktay2020randomized}
Deniz Oktay, Nick McGreivy, Joshua Aduol, Alex Beatson, and Ryan~P Adams.
\newblock Randomized automatic differentiation.
\newblock \emph{arXiv preprint arXiv:2007.10412}, 2020.

\bibitem[OpenAI(2023)]{openai2023gpt4}
OpenAI.
\newblock Gpt-4 technical report.
\newblock \emph{arXiv preprint arXiv:2303.08774}, 2023.

\bibitem[Ouyang et~al.(2022)Ouyang, Wu, Jiang, Almeida, Wainwright, Mishkin, Zhang, Agarwal, Slama, Ray, et~al.]{ouyang2022training}
Long Ouyang, Jeffrey Wu, Xu~Jiang, Diogo Almeida, Carroll Wainwright, Pamela Mishkin, Chong Zhang, Sandhini Agarwal, Katarina Slama, Alex Ray, et~al.
\newblock Training language models to follow instructions with human feedback.
\newblock \emph{Advances in Neural Information Processing Systems}, 35:\penalty0 27730--27744, 2022.

\bibitem[Papyan(2018)]{papyan2018full}
Vardan Papyan.
\newblock The full spectrum of deepnet hessians at scale: Dynamics with sgd training and sample size.
\newblock \emph{arXiv preprint arXiv:1811.07062}, 2018.

\bibitem[Papyan(2020)]{papyan2020traces}
Vardan Papyan.
\newblock Traces of class/cross-class structure pervade deep learning spectra.
\newblock \emph{Journal of Machine Learning Research}, 21\penalty0 (252):\penalty0 1--64, 2020.

\bibitem[Paszke et~al.(2019)Paszke, Gross, Massa, Lerer, Bradbury, Chanan, Killeen, Lin, Gimelshein, Antiga, Desmaison, Kopf, Yang, DeVito, Raison, Tejani, Chilamkurthy, Steiner, Fang, Bai, and Chintala]{pytorch}
Adam Paszke, Sam Gross, Francisco Massa, Adam Lerer, James Bradbury, Gregory Chanan, Trevor Killeen, Zeming Lin, Natalia Gimelshein, Luca Antiga, Alban Desmaison, Andreas Kopf, Edward Yang, Zachary DeVito, Martin Raison, Alykhan Tejani, Sasank Chilamkurthy, Benoit Steiner, Lu~Fang, Junjie Bai, and Soumith Chintala.
\newblock Pytorch: An imperative style, high-performance deep learning library.
\newblock In \emph{Advances in Neural Information Processing Systems 32}, pages 8024--8035. 2019.

\bibitem[Pilehvar and Camacho-Collados(2019)]{pilehvar-camacho-collados-2019-wic}
Mohammad~Taher Pilehvar and Jose Camacho-Collados.
\newblock {W}i{C}: the word-in-context dataset for evaluating context-sensitive meaning representations.
\newblock In \emph{Proceedings of the 2019 Conference of the North {A}merican Chapter of the Association for Computational Linguistics: Human Language Technologies, Volume 1 (Long and Short Papers)}, pages 1267--1273, 2019.

\bibitem[Prasad et~al.(2022)Prasad, Hase, Zhou, and Bansal]{prasad2022grips}
Archiki Prasad, Peter Hase, Xiang Zhou, and Mohit Bansal.
\newblock Grips: Gradient-free, edit-based instruction search for prompting large language models.
\newblock \emph{arXiv preprint arXiv:2203.07281}, 2022.

\bibitem[Raginsky and Rakhlin(2011)]{raginsky2011information}
Maxim Raginsky and Alexander Rakhlin.
\newblock Information-based complexity, feedback and dynamics in convex programming.
\newblock \emph{IEEE Transactions on Information Theory}, 57\penalty0 (10):\penalty0 7036--7056, 2011.

\bibitem[Rajpurkar et~al.(2016)Rajpurkar, Zhang, Lopyrev, and Liang]{rajpurkar-etal-2016-squad}
Pranav Rajpurkar, Jian Zhang, Konstantin Lopyrev, and Percy Liang.
\newblock {SQ}u{AD}: 100,000+ questions for machine comprehension of text.
\newblock In \emph{Proceedings of the 2016 Conference on Empirical Methods in Natural Language Processing}, pages 2383--2392, 2016.

\bibitem[Roemmele et~al.(2011)Roemmele, Bejan, and Gordon]{roemmele2011choice}
Melissa Roemmele, Cosmin~Adrian Bejan, and Andrew~S Gordon.
\newblock Choice of plausible alternatives: An evaluation of commonsense causal reasoning.
\newblock 2011.

\bibitem[Sagun et~al.(2017)Sagun, Evci, Guney, Dauphin, and Bottou]{sagun2017empirical}
Levent Sagun, Utku Evci, V~Ugur Guney, Yann Dauphin, and Leon Bottou.
\newblock Empirical analysis of the hessian of over-parametrized neural networks.
\newblock \emph{arXiv preprint arXiv:1706.04454}, 2017.

\bibitem[Saunshi et~al.(2021)Saunshi, Malladi, and Arora]{saunshi2021a}
Nikunj Saunshi, Sadhika Malladi, and Sanjeev Arora.
\newblock A mathematical exploration of why language models help solve downstream tasks.
\newblock In \emph{International Conference on Learning Representations}, 2021.

\bibitem[Schick and Sch{\"u}tze(2021)]{schick-schutze-2021-exploiting}
Timo Schick and Hinrich Sch{\"u}tze.
\newblock Exploiting cloze-questions for few-shot text classification and natural language inference.
\newblock In \emph{Proceedings of the 16th Conference of the European Chapter of the Association for Computational Linguistics: Main Volume}, pages 255--269, 2021.

\bibitem[Shamir(2017)]{shamir2017optimal}
Ohad Shamir.
\newblock An optimal algorithm for bandit and zero-order convex optimization with two-point feedback.
\newblock \emph{The Journal of Machine Learning Research}, 18\penalty0 (1):\penalty0 1703--1713, 2017.

\bibitem[Socher et~al.(2013)Socher, Perelygin, Wu, Chuang, Manning, Ng, and Potts]{socher2013recursive_sst-2}
Richard Socher, Alex Perelygin, Jean Wu, Jason Chuang, Christopher~D. Manning, Andrew Ng, and Christopher Potts.
\newblock Recursive deep models for semantic compositionality over a sentiment treebank.
\newblock In \emph{Proceedings of the 2013 Conference on Empirical Methods in Natural Language Processing}, 2013.

\bibitem[Spall(1997)]{spall1997one}
James~C Spall.
\newblock A one-measurement form of simultaneous perturbation stochastic approximation.
\newblock \emph{Automatica}, 33\penalty0 (1):\penalty0 109--112, 1997.

\bibitem[Spall(1992)]{spall1992multivariate}
J.C. Spall.
\newblock Multivariate stochastic approximation using a simultaneous perturbation gradient approximation.
\newblock \emph{IEEE Transactions on Automatic Control}, 37\penalty0 (3):\penalty0 332--341, 1992.

\bibitem[Stiennon et~al.(2020)Stiennon, Ouyang, Wu, Ziegler, Lowe, Voss, Radford, Amodei, and Christiano]{stiennon2020learning}
Nisan Stiennon, Long Ouyang, Jeffrey Wu, Daniel Ziegler, Ryan Lowe, Chelsea Voss, Alec Radford, Dario Amodei, and Paul~F Christiano.
\newblock Learning to summarize with human feedback.
\newblock In \emph{Advances in Neural Information Processing Systems}, volume~33, pages 3008--3021, 2020.

\bibitem[Sun et~al.(2022{\natexlab{a}})Sun, He, Qian, Zhou, Huang, and Qiu]{sun2022bbtv2}
Tianxiang Sun, Zhengfu He, Hong Qian, Yunhua Zhou, Xuanjing Huang, and Xipeng Qiu.
\newblock {BBT}v2: Towards a gradient-free future with large language models.
\newblock In \emph{Proceedings of the 2022 Conference on Empirical Methods in Natural Language Processing}, pages 3916--3930, 2022{\natexlab{a}}.

\bibitem[Sun et~al.(2022{\natexlab{b}})Sun, Shao, Qian, Huang, and Qiu]{sun2022black}
Tianxiang Sun, Yunfan Shao, Hong Qian, Xuanjing Huang, and Xipeng Qiu.
\newblock Black-box tuning for language-model-as-a-service.
\newblock In \emph{International Conference on Machine Learning}, pages 20841--20855, 2022{\natexlab{b}}.

\bibitem[Sun et~al.(2017)Sun, Ren, Ma, and Wang]{sun17meprop}
Xu~Sun, Xuancheng Ren, Shuming Ma, and Houfeng Wang.
\newblock me{P}rop: Sparsified back propagation for accelerated deep learning with reduced overfitting.
\newblock In \emph{Proceedings of the 34th International Conference on Machine Learning}, volume~70, pages 3299--3308, 2017.

\bibitem[Tang and Li(2019)]{tang2019distributed}
Yujie Tang and Na~Li.
\newblock Distributed zero-order algorithms for nonconvex multi-agent optimization.
\newblock In \emph{2019 57th Annual Allerton Conference on Communication, Control, and Computing (Allerton)}, pages 781--786, 2019.

\bibitem[Tang et~al.(2023)Tang, Rybin, and Chang]{tang2023zerothorder}
Zhiwei Tang, Dmitry Rybin, and Tsung-Hui Chang.
\newblock Zeroth-order optimization meets human feedback: Provable learning via ranking oracles, 2023.

\bibitem[Vakhitov et~al.(2009)Vakhitov, Granichin, and Gurevich]{vakhitov2009algorithm}
Alexander~Timurovich Vakhitov, Oleg~Nikolaevich Granichin, and LS~Gurevich.
\newblock Algorithm for stochastic approximation with trial input perturbation in the nonstationary problem of optimization.
\newblock \emph{Automation and Remote Control}, 70:\penalty0 1827--1835, 2009.

\bibitem[Vaswani et~al.(2017)Vaswani, Shazeer, Parmar, Uszkoreit, Jones, Gomez, Kaiser, and Polosukhin]{vaswani2017attention}
Ashish Vaswani, Noam Shazeer, Niki Parmar, Jakob Uszkoreit, Llion Jones, Aidan~N Gomez, {\L}ukasz Kaiser, and Illia Polosukhin.
\newblock Attention is all you need.
\newblock In \emph{Advances in neural information processing systems}, volume~30, 2017.

\bibitem[Voorhees and Tice(2000)]{voorhees2000building_trec}
Ellen~M Voorhees and Dawn~M Tice.
\newblock Building a question answering test collection.
\newblock In \emph{the 23rd annual international ACM SIGIR conference on Research and development in information retrieval}, 2000.

\bibitem[Wang et~al.(2019)Wang, Pruksachatkun, Nangia, Singh, Michael, Hill, Levy, and Bowman]{wang2019superglue}
Alex Wang, Yada Pruksachatkun, Nikita Nangia, Amanpreet Singh, Julian Michael, Felix Hill, Omer Levy, and Samuel Bowman.
\newblock Superglue: A stickier benchmark for general-purpose language understanding systems.
\newblock In \emph{Advances in neural information processing systems}, volume~32, 2019.

\bibitem[Wang et~al.(2013)Wang, Chen, Smola, and Xing]{wang2013variance}
Chong Wang, Xi~Chen, Alexander~J Smola, and Eric~P Xing.
\newblock Variance reduction for stochastic gradient optimization.
\newblock In C.J. Burges, L.~Bottou, M.~Welling, Z.~Ghahramani, and K.Q. Weinberger, editors, \emph{Advances in Neural Information Processing Systems}, volume~26. Curran Associates, Inc., 2013.

\bibitem[Wang et~al.(2018)Wang, Du, Balakrishnan, and Singh]{wang2018stochastic}
Yining Wang, Simon Du, Sivaraman Balakrishnan, and Aarti Singh.
\newblock Stochastic zeroth-order optimization in high dimensions.
\newblock In \emph{Proceedings of the Twenty-First International Conference on Artificial Intelligence and Statistics}, volume~84, pages 1356--1365, 2018.

\bibitem[Wang et~al.(2020)Wang, Balasubramanian, Ma, and Razaviyayn]{wang2020zeroth}
Zhongruo Wang, Krishnakumar Balasubramanian, Shiqian Ma, and Meisam Razaviyayn.
\newblock Zeroth-order algorithms for nonconvex minimax problems with improved complexities.
\newblock \emph{arXiv preprint arXiv:2001.07819}, 2020.

\bibitem[Wei et~al.(2017)Wei, Sun, Ren, and Xu]{wei2017minimal}
Bingzhen Wei, Xu~Sun, Xuancheng Ren, and Jingjing Xu.
\newblock Minimal effort back propagation for convolutional neural networks.
\newblock \emph{arXiv preprint arXiv:1709.05804}, 2017.

\bibitem[Williams et~al.(2018)Williams, Nangia, and Bowman]{williams2018broad_mnli}
Adina Williams, Nikita Nangia, and Samuel Bowman.
\newblock A broad-coverage challenge corpus for sentence understanding through inference.
\newblock In \emph{Proceedings of the 2018 Conference of the North {A}merican Chapter of the Association for Computational Linguistics: Human Language Technologies, Volume 1 (Long Papers)}, 2018.

\bibitem[Wolf et~al.(2020)Wolf, Debut, Sanh, Chaumond, Delangue, Moi, Cistac, Rault, Louf, Funtowicz, Davison, Shleifer, von Platen, Ma, Jernite, Plu, Xu, Le~Scao, Gugger, Drame, Lhoest, and Rush]{wolf-etal-2020-transformers}
Thomas Wolf, Lysandre Debut, Victor Sanh, Julien Chaumond, Clement Delangue, Anthony Moi, Pierric Cistac, Tim Rault, Remi Louf, Morgan Funtowicz, Joe Davison, Sam Shleifer, Patrick von Platen, Clara Ma, Yacine Jernite, Julien Plu, Canwen Xu, Teven Le~Scao, Sylvain Gugger, Mariama Drame, Quentin Lhoest, and Alexander Rush.
\newblock Transformers: State-of-the-art natural language processing.
\newblock In \emph{Proceedings of the 2020 Conference on Empirical Methods in Natural Language Processing: System Demonstrations}, pages 38--45, 2020.

\bibitem[Wu et~al.(2020)Wu, Zhu, Wu, Wang, and Ge]{wu2020dissecting}
Yikai Wu, Xingyu Zhu, Chenwei Wu, Annie Wang, and Rong Ge.
\newblock Dissecting hessian: Understanding common structure of hessian in neural networks.
\newblock \emph{arXiv preprint arXiv:2010.04261}, 2020.

\bibitem[Yang et~al.(2023)Yang, Hui, Yang, Li, Huang, and Li]{yang2023iterative}
Jiaxi Yang, Binyuan Hui, Min Yang, Binhua Li, Fei Huang, and Yongbin Li.
\newblock Iterative forward tuning boosts in-context learning in language models, 2023.

\bibitem[Yao et~al.(2020)Yao, Gholami, Keutzer, and Mahoney]{yao2020pyhessian}
Zhewei Yao, Amir Gholami, Kurt Keutzer, and Michael~W Mahoney.
\newblock Pyhessian: Neural networks through the lens of the hessian.
\newblock In \emph{2020 IEEE international conference on big data (Big data)}, pages 581--590, 2020.

\bibitem[Ye et~al.(2018)Ye, Huang, Fang, Li, and Zhang]{ye2018hessian}
Haishan Ye, Zhichao Huang, Cong Fang, Chris~Junchi Li, and Tong Zhang.
\newblock Hessian-aware zeroth-order optimization for black-box adversarial attack.
\newblock \emph{arXiv preprint arXiv:1812.11377}, 2018.

\bibitem[You et~al.(2017)You, Gitman, and Ginsburg]{you2017large}
Yang You, Igor Gitman, and Boris Ginsburg.
\newblock Large batch training of convolutional networks.
\newblock \emph{arXiv preprint arXiv:1708.03888}, 2017.

\bibitem[You et~al.(2019)You, Li, Reddi, Hseu, Kumar, Bhojanapalli, Song, Demmel, Keutzer, and Hsieh]{you2019large}
Yang You, Jing Li, Sashank Reddi, Jonathan Hseu, Sanjiv Kumar, Srinadh Bhojanapalli, Xiaodan Song, James Demmel, Kurt Keutzer, and Cho-Jui Hsieh.
\newblock Large batch optimization for deep learning: Training bert in 76 minutes.
\newblock \emph{arXiv preprint arXiv:1904.00962}, 2019.

\bibitem[Zhang et~al.(2018)Zhang, Liu, Liu, Gao, Duh, and Van~Durme]{zhang2018record}
Sheng Zhang, Xiaodong Liu, Jingjing Liu, Jianfeng Gao, Kevin Duh, and Benjamin Van~Durme.
\newblock Record: Bridging the gap between human and machine commonsense reading comprehension.
\newblock \emph{arXiv preprint arXiv:1810.12885}, 2018.

\bibitem[Zhang et~al.(2022{\natexlab{a}})Zhang, Roller, Goyal, Artetxe, Chen, Chen, Dewan, Diab, Li, Lin, et~al.]{zhang2022opt}
Susan Zhang, Stephen Roller, Naman Goyal, Mikel Artetxe, Moya Chen, Shuohui Chen, Christopher Dewan, Mona Diab, Xian Li, Xi~Victoria Lin, et~al.
\newblock Opt: Open pre-trained transformer language models.
\newblock \emph{arXiv preprint arXiv:2205.01068}, 2022{\natexlab{a}}.

\bibitem[Zhang et~al.(2022{\natexlab{b}})Zhang, Zhou, Ji, and Zavlanos]{zhang2022new}
Yan Zhang, Yi~Zhou, Kaiyi Ji, and Michael~M Zavlanos.
\newblock A new one-point residual-feedback oracle for black-box learning and control.
\newblock \emph{Automatica}, 136:\penalty0 110006, 2022{\natexlab{b}}.

\end{thebibliography}
\bibliographystyle{plainnat}

\clearpage
\appendix
\section{Algorithmic Ablations}\label{app_sec:ablations}
We perform a number of ablations to select the best algorithm.
As is standard in ZO literature, we consider the main computational cost to be the number of forward passes.
In our case, the number of forward passes can be affected by the number of gradient steps taken, any usage of gradient accumulation, and using more noise samples to reduce the variance of the gradient estimate.

We observed that the performance of \mezo{} improves monotonically with the number of steps, and there does not appear to be any overfitting. 
Hence, when performing algorithmic ablations, we can focus on the efficiency of different algorithms without considering implicit bias. 
This is also reflected in our theoretical analysis.
To ease the computational load, we fix the number of forward passes to $10,000$ and compare many different algorithms for RoBERTa-large on a smaller set of tasks that span sentiment analysis, entailment, and topic classification: SST-2, SNLI, and TREC.
We emphasize that $10,000$ is a small budget and is only used as a setting to compare these ZO algorithms to each other.
We find that using a linearly decreasing learning rate schedule during training, as was done for fine-tuning with backpropagation in~\citep{liu2019roberta}, does not help or hurt \mezo{}. Similarly, using a learning rate warmup leads to identical results on these three tasks. For simplicity, we use a constant learning rate schedule with no warmup for all of the below experiments.
We perform few-shot experiments with $k=16$ and average the results across 5 seeds.

\begin{table*}[h]
\centering
\resizebox{0.65\columnwidth}{!}{
\begin{tabular}{lrc}
\toprule
Experiment & Hyperparameters & Values \\
\midrule
\mezo{} & Batch size & $\{ 16, 64 \}$ $\times$ \\
& Learning rate & $\{1\mathrm{e}{-5}, 1\mathrm{e}{-6}, 1\mathrm{e}{-7} \}$ $\times$  \\
& $\epsilon$ & $\{1\mathrm{e}{-3}, 1\mathrm{e}{-5} \}$ $\times$ \\
& Weight Decay & $\{0, 0.1\}$ \\
\bottomrule
\end{tabular}
}
\caption{The hyperparameter grid used in our ablation experiments. For simplicity, we use a constant learning rate schedule.}
\label{tab:ablation_hyperparameters}
\end{table*}

\subsection{Prompting}\label{app_sec:prompt}
We study if adding a prompt is crucial to the ability of \mezo{} to optimize the network.  
We use prompts from~\citet{gao-etal-2021-making}.
\citet{malladi2023kernelbased} claimed the prompt makes the optimization trajectory well-behaved, though we note that the current paper considers RoBERTa-large and large autoregressive models while the previous work only studied RoBERTa-base.
We note the similarity between kernel behavior and our theoretical setting in \Cref{sec:theory}.
\mezo{} succeeds on tasks that are reported to not exhibit kernel behavior in \citet{malladi2023kernelbased}, so we investigate whether or not the prompt is necessary.

\begin{table*}[h]
\centering
\begin{tabular}{lccc}
\toprule
 & SST-2 & SNLI & TREC \\
\midrule
Prompt & 89.6 (1.2) & 65.1 (6.2) & 66.7 (6.2) \\
No Prompt & 51.9 (2.9) & 34.8 (2.1) & 19.5 (9.0) \\
\bottomrule
\end{tabular}
\caption{Experiments using \mezo{} to fine-tune models with and without a prompt.}
\label{tab:prompt_vs_no_prompt}
\end{table*}

Both experiments followed the grid in \Cref{tab:ablation_hyperparameters}, but we also expanded the grid to include a learning rate of $1\mathrm{e}-4$ for the no prompt case.
As a result of these experiments, we fix the setting to prompt-based fine-tuning for all of the below experiments. 
\subsection{Sample schedules}\label{app_sec:samp_sched}
One can sample $n_t$ noise vectors at the $t$th step and use $n_t$-SPSA to compute the gradient estimate.
Similar ideas were proposed in~\citet{bollapragada2018adaptive,cai2022zoro}.
We study the effect of linearly increasing and constant sampling schedules in the ablation setting. 
The intuition for the linearly increasing schedule is that the optimizer may need a higher fidelity gradient as it approaches the minimum.
Increasing the number of $z$s can speed up optimization by reducing the gradient variance, but doing so also increases the number of forward passes required for each optimization step, so there is a trade-off to study.
We note that increasing the number of $z$s should be accompanied by a proportional scaling of the learning rate, analogous to the linear scaling rule proposed in~\citep{goyal2017accurate} (theoretical justification can follow the SDE technique~\citep{li2021on}).
\Cref{tab:samp_sched} shows no consistent benefit in one schedule over the other, and it demonstrates that increasing the $n$ in $n$-SPSA while fixing the number of forward passes allowed results in only marginal gains at best.

\begin{table*}[h]
\centering
\begin{tabular}{llccc}
\toprule
$n$ & Schedule & SST-2 & SNLI & TREC \\
\midrule
$n=1$ & Constant & 89.6 (1.2) & 65.1 (6.2) & \textbf{66.7 (6.2)} \\ 
$n=4$ & Constant & 89.5 (1.1) & \textbf{68.6 (3.2)} & 62.3 (5.6) \\
$n=4$ & Linear & 89.6 (1.4) & 65.3 (6.4) & 66.1 (5.5)\\
$n=16$ & Constant & \textbf{90.4 (0.7)} & 67.0 (3.4) & 62.8 (6.3) \\
$n=16$ & Linear & 88.9 (1.2) & 62.8 (5.9) & 64.2 (5.3) \\
\bottomrule
\end{tabular}
\caption{Experiments using \mezo{} with different schedules for $n$. We scale the learning rate proportionally to the number of $\vz$'s sampled.}
\label{tab:samp_sched}
\end{table*}

\iffalse
\subsection{Square Loss}\label{app_sec:sq_loss}
\todo{Not sure if this is necessary}
Our theoretical analysis relies on using a square loss instead of the standard cross-entropy loss.
Here, we verify that fine-tuning using square loss does not hurt performance, though the rest of the experiments in our paper use a cross-entropy loss to match common practice.
\citet{hui2021evaluation} provided similar evidence that optimizing square loss does not hurt fine-tuning performance for BERT models on binary classification tasks.
\fi

\section{\mezo{} Variants}\label{sec:alg_variants}
There is a rich history of transferring ideas from first order optimization to enhance ZO algorithms.
Below, we highlight several variants of \mezo{} that did not achieve as high performance as the algorithm presented in Algorithm~\ref{alg:zo_sgd}.

\subsection{Memory-efficient n-SPSA}
We highlight how \mezo{} can perform $n$-SPSA (\Cref{def:spsa}) efficiently for $n>1$ in Algorithm~\ref{alg:zo_sgd_multin}.
In particular, if sampling $n$ $\vz$ vectors and averaging the projected gradients, we require storing $2n$ additional scalars: the random seeds and the projected gradients.
The same caveat about perturbing individual weights versus entire weight matrices still applies here (see~\Cref{sec:prelims}).

\begin{figure}[!htbp] % {1.0\textwidth}
\centering
\begin{algorithm}[H]
  \SetKwFunction{perturb}{PerturbParameters}
  \SetKwProg{sub}{Subroutine}{}{}
  \SetKwComment{Comment}{$\triangleright$\ }{}
  
  %\KwIn{Initial parameter $\theta$, perturbation size $\epsilon$, and number of iterations $L$}
  %\KwOut{Optimal parameter $\theta^*$}
  \textbf{Require}: parameters $\vtheta\in\RR^d$, 
  loss $\cL:\RR^d\to\RR$, 
  step budget $T$,
  % loss $\cL\in \RR$, 
  perturbation scale $\epsilon$, 
  batch size $B$,
  learning rate schedule $\{\eta_t\}$, 
  $n$ for $n$-SPSA estimate (\Cref{def:spsa}) \\
  % \tnote{here loss is not necessarily $\RR^d\to \RR$?} \\
  \vspace{0.2cm}
  \For{$t=1,...,T$} { %\danqi{$t = 1, \ldots, T$?}
  	$\texttt{seeds, projected\_grads} \gets\texttt{[]}$ \Comment*[f]{Will each contain $n$ scalars} \\
    \For{$j=1,...,n$} {
	    Sample batch $\cB\subset \cD^B$ and random seed $s$ \\
	    %\danqi{Why do you need a random seed every time?} \\
	    $\vtheta\gets$ \perturb{$\vtheta, \epsilon, s$} \\
	    $\ell_+\gets\cL(\vtheta;\cB)$ \\
	    $\vtheta\gets$ \perturb{$\vtheta, -2\epsilon, s$} \\
	    $\ell_-\gets\cL(\vtheta;\cB)$ \\
	    $\vtheta\gets$ \perturb{$\vtheta, \epsilon, s$} \Comment*[f]{Reset parameters} \\
	    $\texttt{projected\_grad}\gets (\ell_+ - \ell_-) / (2\epsilon)$ \\
	    $\texttt{projected\_grads[j]}\gets \texttt{projected\_grad}$ \\
	    $\texttt{seeds[j]}\gets s$ \\
	    }
	    \BlankLine
	     % \\ % \danqi{You don't need this line anymore, as long as $s$ is combined with sampling $z$.} \\ 
	    \For{$j=1,...,n$} {
	    	Reset random number generator with seed $\texttt{seeds[j]}$ \\ %\Comment*[f]{For sampling $z$} \\
		    \For{$\theta_i\in\vtheta$}{
		        $z\sim\cN(0,1)$ \\
		        %\danqi{I didn't see you reset the random seed earlier. I think you should just put $s$ in $\cN(\cdot)$} \\ 
		        $\theta_i\gets\theta_i - (\eta_t/n) * \texttt{projected\_grads[j]} * z$  \Comment*[f]{Avg grad for $\vz_1,...,\vz_n$}
		        %\Comment*[f]{\todo{Technically this can also be a call to perturb parameters, but not sure if it hurts clarity} \tnote{i think this is fine}}
		 }
	    
    }
  } 
  
  \vspace{0.2cm}
  \sub{\perturb{$\vtheta$, $\epsilon$, $s$}}{
  Reset random number generator with seed $s$  \Comment*[f]{For sampling $z$} \\
  %\Comment*[h]{$\theta_i$ is a parameter group that can be set at any scale. Setting a larger parameter group will increase the memory overhead but reduce the time.}\\
    \For{$\theta_i\in\vtheta$ }{  
      %\hskip\algorithmicindent \Comment*[r]{$\theta_i$ is a parameter group} 
        $z\sim\cN(0,1)$ \\
        $\theta_i\gets\theta_i+\epsilon z$ \Comment*[f]{Modify parameters in place} 
    }
    \Return $\vtheta$
  }
  
  \caption{\mezo{} with $n>1$}

  \label{alg:zo_sgd_multin}
\end{algorithm}
\end{figure}

\subsection{Augmenting \mezo{} with Gradient History}\label{app_sec:adam_zo}
The $n$-SPSA algorithm merely provides a gradient estimate that can subsequently be used in place of the gradient in any gradient-based optimizer. 
Many popular optimizers, such as Adam and SGD with momentum, require storing some historical information about gradients (e.g., a moving average).
This requirement causes such algorithms to require $2\times$ or $3\times$ the memory that is needed for SGD.

However, one advantage of \mezo{} is that the gradient history can be recomputed at each step without requiring much additional memory.
In reference to Algorithm~\ref{alg:zo_sgd}, note that the gradient only needs \texttt{projected\_grad} and the random seed $s$ used to compute the perturbation $\vz$, so we need to only store $2$ scalars per step to reproduce the gradient history (i.e., up to $2T$ scalars during training).
This is a substantial reduction in added memory overhead that is usually needed for using Adam or momentum instead of vanilla SGD.

\Cref{tab:roberta} illustrates that \mezo{}-Adam can sometimes improve the performance of \mezo{}, though each gradient step requires additional computation (but no additional forward passes).
We leave it to future work to investigate when \mezo{}-Adam may be more useful than \mezo{}.

\begin{table*}[h]
\centering
\resizebox{0.75\columnwidth}{!}{
\begin{tabular}{lrc}
\toprule
Experiment & Hyperparameters & Values \\
\midrule
\mezo{}-Adam & Batch size & $64$ \\
& Learning rate & $\{1\mathrm{e}{-6}, 1\mathrm{e}{-5}, 1\mathrm{e}{-4}, 5\mathrm{e}{-4}, 1\mathrm{e}{-3} \}$ \\
& $\epsilon$ & $1\mathrm{e}{-3}$ \\
& Weight Decay & $0$ \\
\bottomrule
\end{tabular}
}
\caption{The hyperparameter grid used for \mezo{}-Adam. For simplicity, we use a constant learning rate schedule. }
\label{tab:adam_hyperparameters}
\end{table*}

\subsection{Modifying the Variance of \mezo{}}
Our theory in~\Cref{sec:theory} sketches the well-known fact that the variance of the stochastic gradient estimate can impact the rate of optimization.
ZO methods can be combined with standard variance reduction techniques to possibly improve optimization speed.
For example, \citet{liu2018zeroth} designed a variance reduced ZO algorithm, analogous to SVRG~\citep{johnson2013accelerating}, to improve the speed of convergence.
Below, we show that several variance reduction methods (e.g., using the gradient norm) can be implemented in a memory-efficient manner. 
However, when controlling for the total budget of forward passes (i.e., function queries), these methods are not as performant as \mezo{}.
We nevertheless present them to demonstrate the ease with which \mezo{} can be adapted, and we suggest these methods may be useful for optimizing more complex objectives.

First, we define a general SPSA estimate that has the same expectation (i.e., the true gradient) but has a scaled variance. 
\begin{definition}[Variance-Modified SPSA]
	Given a matrix $D = \diag(\vd)$, the variance modified SPSA computes
	\begin{align*}
		\tilde\nabla\cL(\vtheta;\cB) &= \frac{\cL(\vtheta + \epsilon (\vd^{-1}\odot\vz); \cB) - \cL(\vtheta - \epsilon (\vd^{-1}\odot\vz); \cB)}{2\epsilon}(\vd\odot\vz)
	\end{align*}
	where $\vd\in\RR^d$ has nonzero entries and $\vd^{-1}$ denotes the coordinatewise reciprocal.
	\label{def:variance_modified_spsa}
\end{definition}
The above SPSA variant is an unbiased estimator of the gradient, because $\E[\tilde\nabla\cL(\vtheta;\cB)] = \E[D^{-1} \vz\vz^\top D\nabla\cL(\vtheta;\cB)] = \E[\nabla\cL(\vtheta;\cB)]$.
We will draw inspiration from classical methods (i.e., ``control variates'') and choose $\vd$ to be a block vector with gradient norms or parameter norms~\citep{wang2013variance}. 
To select the parameter groups, we split the model by layer, keeping the embedding and the head separate (i.e., RoBERTa-large has $24+2=26$ parameter groups).
It is straightforward to measure the parameter norms without consuming additional memory. 
We can measure the gradient norms without performing backpropagation, as shown below.
\begin{proposition}[ZO Estimate of Gradient Norm of $\ell$th Layer]
	Define $\vz_\ell$ to have $z\sim\cN(0,1)$ in each coordinate corresponding to parameters in the $\ell$th layer and $0$ everywhere else. Then, we can estimate the norm of the gradient of the loss w.r.t. the $\ell$th layer $\nabla_{\vtheta_\ell}$ as
	$$ \left\|\nabla_{\vtheta_\ell}\cL(\vtheta;\cB)\right\|_2 \approx \left| \frac{\cL(\vtheta + \epsilon\vz_\ell;\cB) - \cL(\vtheta - \epsilon\vz_\ell;\cB)}{2\epsilon} \right| $$
	\label{def:zo_norm_estimate}
\end{proposition}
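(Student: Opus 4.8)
The plan is to combine a central-difference Taylor expansion, restricted to the layer-$\ell$ coordinates, with the elementary fact that an isotropic Gaussian directional derivative has the layer gradient norm as its standard deviation. Fix the minibatch and write $f(\vtheta) = \cL(\vtheta;\cB)$. Since $\vz_\ell$ vanishes outside the coordinates of layer $\ell$, the perturbations $\vtheta \pm \epsilon\vz_\ell$ move $\vtheta$ only within that block, so only the block gradient $\nabla_{\vtheta_\ell}f$ and the block Hessian enter the analysis.

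First I would Taylor-expand $f(\vtheta \pm \epsilon\vz_\ell)$ about $\vtheta$. The zeroth- and second-order terms are identical in the two expansions and cancel upon subtraction, leaving
\[
  \frac{f(\vtheta+\epsilon\vz_\ell) - f(\vtheta-\epsilon\vz_\ell)}{2\epsilon} = \inner{\nabla_{\vtheta_\ell}f(\vtheta)}{\vz_\ell} + O(\epsilon^2).
\]
Under only the $\ell$-smoothness already assumed elsewhere in the paper (rather than three-times differentiability), the same manipulation yields the weaker but sufficient estimate that the left-hand side differs from $\inner{\nabla_{\vtheta_\ell}f(\vtheta)}{\vz_\ell}$ by at most $\tfrac{\epsilon\ell}{2}\norm{\vz_\ell}^2$; in either case the discrepancy vanishes as $\epsilon\to 0$. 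This is exactly the argument behind the SPSA estimate \eqref{eq:spsa}, minus the final multiplication by $\vz$ and with an absolute value taken.

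Next I would identify the distribution of the limiting quantity. The nonzero entries of $\vz_\ell$ are i.i.d.\ $\cN(0,1)$, so $\inner{\nabla_{\vtheta_\ell}f(\vtheta)}{\vz_\ell}$ is a one-dimensional Gaussian with mean $0$ and variance $\norm{\nabla_{\vtheta_\ell}f(\vtheta)}_2^2$. Hence $\E_{\vz_\ell}\!\big[\inner{\nabla_{\vtheta_\ell}f(\vtheta)}{\vz_\ell}^2\big] = \norm{\nabla_{\vtheta_\ell}f(\vtheta)}_2^2$ exactly, and $\E_{\vz_\ell}\!\big[\abssm{\inner{\nabla_{\vtheta_\ell}f(\vtheta)}{\vz_\ell}}\big] = \sqrt{2/\pi}\,\norm{\nabla_{\vtheta_\ell}f(\vtheta)}_2$. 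Thus a single draw of the absolute difference quotient estimates the layer gradient norm up to an absolute constant, while its root mean square over draws recovers the norm exactly in the $\epsilon\to 0$ limit; this is the meaning of the ``$\approx$'' in the statement.

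The only point that needs (mild) care — the ``main obstacle,'' such as it is — is making the approximation quantitative: controlling the Taylor remainder uniformly in $\vz_\ell$, which is where the smoothness hypothesis is invoked, and being explicit that ``$\approx$'' is an $\epsilon\to 0$, up-to-$\sqrt{2/\pi}$ (or exact, in root mean square) statement rather than a pointwise identity. Everything else reduces to routine Gaussian moment identities.
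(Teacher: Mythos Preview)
Your proposal is correct and follows the same approach as the paper, which offers only a one-sentence rationale: ``for any vector $\vv$, $\E_\vz [ ( \langle \vv, \vz \rangle )^2 ] = \|\vv\|_2^2$ for Gaussian $\vz$.'' Your expansion of this into the Taylor step plus the Gaussian moment identity is exactly right, and your explicit mention of the $\sqrt{2/\pi}$ constant for the single-draw absolute value (versus exact recovery in root mean square) is a useful clarification the paper leaves implicit.
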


As is true for SPSA, increasing the number of $\vz_\ell$'s sampled for each value of $\ell$ and averaging the result reduces the variance of the estimate.
The rationale for this estimate is that for any vector $\vv$, $\E_\vz [ ( \langle \vv, \vz \rangle )^2 ] = \|\vv\|_2^2$ for Gaussian $\vz$.
It is clear that this estimate can be computed in a memory efficient way, although it requires $2L$ forward passes to compute gradient norms for $L$ parameter groups.

We show the experimental results for modifying the variance below. We follow the ablation setting and use a fixed budget of $10,000$ steps (\Cref{app_sec:ablations}).
Generally, using the gradient norm to reduce the variance substantially hurts performance (\Cref{tab:grad_variance_modified}).
If we ``cheat'' and allow one backpropagation through the network to estimate the gradient norm, then we see that reducing the variance using the gradient norm does not substantially hurt or help performance.
Modifying the variance using the parameter norm, analogous to layerwise adaptive rate methods, does not substantially impact the performance of \mezo{} (\Cref{tab:param_variance_modified}).

Our observation is that decreasing the variance by setting $\vd$ as the gradient norm does not improve optimization.
This empirical result agrees with the exposition in \Cref{sec:theory} that the straightforward variance analysis (which yields a dependence on the number of parameters $d$) is not the best lens to study the rate of optimization when fine-tuning with \mezo{}.
Our effective rank view in \Cref{thm:rate_comparison} and \Cref{lem:global_ZO-SGD} is likely a better characterization of fine-tuning dynamics.
We leave it to future work to explore if these methods can be useful for other more complex objectives.

\begin{table*}[h]
\centering
\begin{tabular}{ccccc}
\toprule
 Recompute $\vd$ & ZO estimate of $\vd$ & SST-2 & SNLI & TREC \\
\midrule
\multicolumn{2}{c}{Baseline \mezo{} (Algorithm~\ref{alg:zo_sgd})} & \multicolumn{1}{c}{89.6 (1.2)} & \multicolumn{1}{c}{65.1 (6.2)} & \multicolumn{1}{c}{66.7 (6.2)} \\
\xmark & \xmark & 89.7 (0.8) & 65.2 (5.2) & 64.3 (6.4) \\
\xmark & \cmark & 87.0 (2.5) & 49.6 (9.2) & 32.6 (7.7) \\
\cmark & \cmark & 79.0 (10.3) & 48.9 (2.2) & 38.7 (7.5) \\
\bottomrule
\end{tabular}
\caption{Experiments modifying the variance of \mezo{} using $\vd$ as the gradient norm (see \Cref{def:variance_modified_spsa}). We sometimes recompute $\vd$ at the start of each epoch or use \Cref{def:zo_norm_estimate} to estimate $\vd$ without requiring backpropagation.}
\label{tab:grad_variance_modified}
\end{table*}

\begin{table*}[h]
\centering
\begin{tabular}{cccc}
\toprule
 Recompute $\vd$ & SST-2 & SNLI & TREC \\
\midrule
\multicolumn{1}{c}{Baseline \mezo{} (Algorithm~\ref{alg:zo_sgd})} & \multicolumn{1}{c}{89.6 (1.2)} & \multicolumn{1}{c}{65.1 (6.2)} & \multicolumn{1}{c}{66.7 (6.2)} \\
\xmark & 89.2 (2.1) & 65.4 (4.2) & 64.8 (5.6) \\
\cmark & 88.2 (4.7) & 65.2 (4.0) & 64.7 (5.5) \\
\bottomrule
\end{tabular}
\caption{Experiments modifying the variance of \mezo{} using $\vd$ as the parameter norm (see \Cref{def:variance_modified_spsa}). We sometimes recompute $\vd$ at the start of each epoch.}
\label{tab:param_variance_modified}
\end{table*}

\subsection{Modifying the Expectation of \mezo{}}
The above experiments show that modifying the variance of \mezo{} cannot consistently accelerate its convergence.
However, a simple modification of \Cref{def:variance_modified_spsa} allows us to change the expectation of \mezo{} as well.
This can be used to efficiently estimate coordinate-wise normalized gradient-based optimizer updates (e.g., Adam).

\begin{definition}[Expectation-Modified SPSA]
	Given a matrix $D = \diag(\vd)$, the variance modified SPSA computes
	\begin{align*}
		\tilde\nabla\cL(\vtheta;\cB) &= \frac{\cL(\vtheta + \epsilon (\vd^{-1}\odot\vz); \cB) - \cL(\vtheta - \epsilon (\vd^{-1}\odot\vz); \cB)}{2\epsilon}\vz
	\end{align*}
	where $\vd\in\RR^d$.
	\label{def:exp_modified_spsa}
\end{definition}

Now, we see that $\tilde\nabla\cL(\vtheta;\cB) = \E[D^{-1}\vz\vz^\top \nabla\cL(\vtheta;\cB)]$ so the SPSA estimate is no longer an unbiased estimator for $\nabla\cL(\vtheta)$. 
If we choose $\vd$ to be the gradient norm, for example, then SPSA can estimate the normalized gradient.
Concurrent work in~\citet{tang2023zerothorder} gives another ZO estimate of the normalized gradient while assuming access to only rankings of inputs (instead of the noisy function evaluations available in our setting).
We find that estimating the normalized gradient does not perform as well as directly estimating the gradient (\Cref{tab:grad_exp_modified}).
Regardless, we present this algorithm as a way to highlight that any coordinate-wise operation to the gradient can be applied in a memory-efficient manner.

\begin{table*}[h]
\centering
\begin{tabular}{cccc}
\toprule
 Method & SST-2 & SNLI & TREC \\
\midrule
\multicolumn{1}{c}{Baseline \mezo{} (Algorithm~\ref{alg:zo_sgd})} & \multicolumn{1}{c}{89.6 (1.2)} & \multicolumn{1}{c}{65.1 (6.2)} & \multicolumn{1}{c}{66.7 (6.2)} \\
Estimate of normalized gradient (\Cref{def:exp_modified_spsa}) & 88.0 (1.2) & 60.0 (2.4) & 44.0 (14.0) \\
\bottomrule
\end{tabular}
\caption{Experiments modifying the expectation of \mezo{} using $\vd$ as the gradient norm (see \Cref{def:exp_modified_spsa}). We use the ZO estimate of the gradient norm (\Cref{def:zo_norm_estimate}).}
\label{tab:grad_exp_modified}
\end{table*}

\subsection{One-point estimate}
\label{sec:one_point_estimate}
Here, we investigate the efficacy of one-point gradient estimators in place of the two-point SPSA method. 
Using a one-point estimator instead of SPSA can reduce the \mezo{} running time by half. 
Many one-point estimators have been proposed in the past~\citep{flaxman2005online,spall1997one,vakhitov2009algorithm}. 
For simplicity, we focus on one estimator~\citep{zhang2022new} that has a form reminiscent of SPSA but requires only one forward pass to estimate the gradient at each step.

\begin{definition}[One-Point Gradient Estimate, \citet{zhang2022new}]
	For a loss function $\cL$ evaluated on a batch $\cB_t$ with parameters $\vtheta_t$ at step $t$, we can draw random noise $\vz_t\sim\cN(0, I_d)$ and compute the gradient estimate using hyperparameter $\epsilon$ as written below.
	\begin{equation*}
		\hat\nabla\cL(\vtheta_t;\cB_t) = \frac{\cL(\vtheta_t+\epsilon \vz_t; \cB_t) - \cL(\vtheta_{t-1} + \epsilon \vz_{t-1}; \cB_{t-1})}{\epsilon}
	\end{equation*}
\end{definition}

Notably, this one-point gradient estimate uses the loss at the previous iterate instead of evaluating the loss again at the current iterate. 
As such, this estimator requires only one forward pass at each iterate to estimate the gradient.
For well-behaved loss functions and slow-moving optimization, these two formulas are intuitively similar.
However, \Cref{tab:onepoint} finds this estimator to be much less efficient than SPSA when fixing the number of forward passes.

\begin{table*}[!htbp]
% \captionsetup[wrapfigure]{name=Table}

%     \begin{wrapfigure}{r}{0.65\textwidth}
%         \begin{minipage}{0.65\textwidth}
\centering
% \small
\resizebox{0.98\textwidth}{!}{
    \setlength{\tabcolsep}{0.3cm}
    \begin{tabular}{lccccccc}
    \toprule
       & \multicolumn{1}{c}{Steps} &  \multicolumn{1}{c}{\textbf{SST-2}} & \multicolumn{1}{c}{\textbf{SST-5}} & \multicolumn{1}{c}{\textbf{SNLI}}  & \multicolumn{1}{c}{\textbf{MNLI}} & \multicolumn{1}{c}{\textbf{RTE}} & \multicolumn{1}{c}{\textbf{TREC}} \\
     & & \multicolumn{2}{c}{------ sentiment ------} & \multicolumn{3}{c}{------ natural language inference ------} & \multicolumn{1}{c}{--- topic ---}\\
    % eNTK solves task & \checkmark & \checkmark & \checkmark & \checkmark & (\checkmark) & \checkmark & & \checkmark & & & (\checkmark) & \checkmark & \checkmark & \checkmark\\
    % Linearization & \checkmark & \checkmark & \checkmark & \checkmark & & \checkmark & & & & & \checkmark & \checkmark & \checkmark & \checkmark  \\
    % Fixed Features & \checkmark & \checkmark & \checkmark & \checkmark & \checkmark & \checkmark & & & \checkmark & \checkmark & \checkmark & \checkmark & \checkmark & \checkmark \\
    % \midrule
    % Kernel behavior & \checkmark & \checkmark & \checkmark & \checkmark & & \checkmark & & & & & (\checkmark) & \checkmark & \checkmark & \checkmark \\
    \midrule
    SPSA~\citep{spall1992multivariate} & 20K	& \tf{92.8} (0.5) &\tf{51.3} (0.9)&\tf{82.9} (1.0)&\tf{74.4} (0.8)&	\tf{76.7} (1.7)&\tf{92.7} (0.6) \\
    One-point estimate~\citep{zhang2022new} & 20K	 &90.0 (0.4)&44.6 (2.0)&70.1 (1.5)&57.2 (0.9)&	64.1 (1.0)&	57.3 (5.7) \\
    One-point estimate~\citep{zhang2022new} & 40K	 & 91.8 (0.5)&45.9 (1.7)&74.4 (0.8)	& 61.0 (1.0)	& 68.7 (1.2)&73.0 (3.1)\\
    \bottomrule
    \end{tabular}}
    \caption{
        Comparison between SPSA and a one-point estimate~\citet{zhang2022new}. 
        One-point estimate only does one forward pass per step, thus is twice as fast as two-point estimate per step.
        As such, the number of forward passes after 40K steps with the one-point estimate is the same as the number of forward passes with SPSA after 20K steps.
        The results show that two-point estimate is much more effective than one-point estimate.
    }
    \label{tab:onepoint}
\end{table*}
\section{Memory Analysis}\label{app_sec:theory_memory}
The compute-memory tradeoff of backpropagation is complex to analyze.
\citet{griewank2008evaluating} provides a rigorous theoretical treatment of the problem.
We empirically measure the memory consumption of different methods for commonly used large language models, but here we hope to provide a more rigorous comparison of different gradient estimation algorithms, independent of the software used to implement them.
Below, we summarize some key points that may help readers to understand how the \mezo{} compute-memory tradeoff compares to backpropagation.

Given a network, the first step to perform backpropagation is to decompose the model into easily differentiable blocks. 
We note that this decomposition is not unique. 
For each block, one can choose to cache the resulting output during the forward pass (thereby consuming memory) or instead recompute the output when it is needed (thereby consuming compute).
The below proposition, adapted from Rule 21 in \citet{griewank2008evaluating}, captures this tradeoff. 
%\todo{Rewrite, recompute all should go to $c=\log N$}
\begin{proposition}[Time-Memory Tradeoff for Backpropagation, \citet{griewank2008evaluating}]
    Consider a network containing $N$ bits. For any time-memory tradeoff hyperparameter $c=O(1)$, there exists a backpropagation algorithm that runs in time $O(cN)$ and consumes memory proportional to $O(N^{1/c})$.
    %\todo{Probably needs some ceiling or floor}
\end{proposition}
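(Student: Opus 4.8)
The plan is to reduce the statement to the classical recursive-checkpointing analysis of reverse-mode differentiation. First I would fix a concrete computational model: regard the forward pass as a chain of $N$ elementary operations $x_{i+1} = f_i(x_i)$, each producing a bounded amount of state, so that naive reverse-mode AD stores all $N$ intermediate states, costing $O(N)$ time and $O(N)$ memory. This chain picture is exactly the regime in which Griewank--Walther's ``Rule 21'' is stated; a fully general treatment would phrase everything in terms of the computational DAG, but the chain already captures the tradeoff and is what the proposition's ``$N$ bits'' is meant to abstract.

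Next I would set up one level of checkpointing: partition the chain into $m$ contiguous segments of length $N/m$, perform a single forward sweep storing a checkpoint of the live state at each of the $m$ segment boundaries, and then sweep the segments in reverse order. To differentiate through segment $i$, restart the forward computation from its stored boundary checkpoint, regenerating and storing only that segment's internal intermediates (cost $O(N/m)$ extra forward work and $O(N/m)$ memory, the latter reused from one segment to the next), and then run the local backward pass. This already gives time $O\!\left(N + m\cdot (N/m)\right) = O(N)$ with a small constant blow-up and peak memory $O(m + N/m)$.

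The key step is to iterate this construction rather than use naive AD inside each segment. Writing $T(N,c)$ and $M(N,c)$ for the time and memory of the scheme with recursion budget $c$, I would establish the recurrences $T(N,c) = O(N) + m\,T(N/m,\,c-1)$ and $M(N,c) = O(m) + M(N/m,\,c-1)$ with base case $T(N,1) = M(N,1) = O(N)$, choosing the branching factor $m = N^{1/c}$ at every level so that after $c-1$ subdivisions a segment has length exactly $N^{1/c}$. Unrolling, each of the $c$ levels contributes one $O(N)$ term to the time (the $N^{1/c}$ branching cancels against the shrinking segment lengths), giving $T(N,c) = O(cN)$; and each level contributes one $O(N^{1/c})$ term to the live memory, giving $M(N,c) = O(c\,N^{1/c}) = O(N^{1/c})$ since $c = O(1)$. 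Finally I would note the endpoints: $c = 1$ recovers plain reverse-mode AD, and (though outside the $c = O(1)$ regime) $c \sim \log N$ recovers the logarithmic-memory, $O(N\log N)$-time binomial ``treeverse'' schedule.

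The main obstacle is the bookkeeping in the recursion rather than any deep idea: one has to check that the checkpoint sets held simultaneously across recursion levels sum to only $O(c\,N^{1/c})$ --- i.e., that a segment completed at a given level frees its checkpoints before the next sibling is processed, so that only one ``active'' checkpoint set per level ever sits on the call stack --- and that the forward recomputations are charged once per level so the time does not compound multiplicatively. A secondary issue is that the proposition is stated loosely (``$N$ bits'', ``memory proportional to''); to make it precise one fixes the chain model above and defers the exact constants to the corresponding analysis in \citet{griewank2008evaluating}, remarking that real networks are DAGs with non-uniform layer widths, which the general version of the same argument handles at the cost of heavier notation.
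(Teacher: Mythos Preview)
Your sketch is the standard recursive-checkpointing derivation and is correct in outline. However, the paper does not actually prove this proposition: it is stated as a citation (``adapted from Rule 21 in \citet{griewank2008evaluating}'') and used only as background for the memory comparison, with no accompanying argument. So there is nothing to compare against; your proposal simply fills in what the paper defers to the reference.
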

\citet{grimm1996optimal} also gave sharp bounds for the memory-time product.
Note that the popular gradient checkpointing~\citep{chen2016training} method allows one to tune $c$ with limited precision (i.e., one cannot always further split a differentiable block and observe savings). 
Experiments in~\citet{chen2016training} choose $c=2$ to achieve $O(\sqrt{N})$ memory while consuming $O(2N)$ computation.
%The automatic differentiation algorithm used in PyTorch likely chooses an intermediate $c$ based on the model architecture.
In the extreme case, gradient checkpointing allows one to use $O(N\log N)$ computation and $O(\log N)$ memory.

\mezo{} always consumes $2N$ compute and $O(1)$ memory, so it is more compute-efficient at the same memory cost as gradient checkpointing.
Our exposition in \Cref{sec:prelims} discusses that we can perturb groups of parameters together to save time while consuming additional memory. 
However, we do not consider that variant here because it is somewhere in the middle of the compute-memory pareto curve, where we cannot reason about what backpropagation will do. 
In particular, \mezo{} can split groups differently than backpropagation can, since \mezo{} does not require that each parameter group is easily differentiable, so it is hard to compare the two algorithms along the entire pareto curve.
%  \todo{Should it be 1}
%The SPSA algorithm does not allow any caching, since the parameters need to be perturbed for each forward pass.
%Thus, there is no way to trade off memory for time, but that is mitigated by the fact that the only memory needed for the algorithm is equivalent to the cost of doing a forward pass.

We also compare backpropagation for the $c=1$ case (i.e., storing everything during the forward pass). 
When storing everything, backpropagation consumes $O(N)$ time and $O(N)$ memory. Hence, SPSA consumes slightly more time and substantially less memory than backpropagation at this end of the tradeoff.

Unlike gradient checkpointing, \mezo{} computes only an approximation of the gradient.
This approximation is only useful for fine-tuning with a prompt, making it less broadly useful than gradient checkpointing.
There are other methods that approximate the gradient with less memory consumption than gradient checkpointing (see the Related Work section), though it is unclear how the memory consumption of those algorithms compare to \mezo{}.

\iffalse
Consider a single token for ease of analysis.
Every operation in the transformer is applied token-wise except for the attention layer.

\subsection{Single Linear Layer}
\begin{definition}[Gradient of Linear Layer]
	Suppose a linear layer is defined by weight $\mW\in\RR^{M\times N}$ and bias $\vb\in\RR^M$ and computes $\vy=\mW\vx +\vb$ on an input $\vx\in\RR^{N\times 1}$. Then, the gradient wrt $\mW$ is
	$$ \partial \mW = \vy\vx^\top$$
\end{definition}

We are interested in the excess memory besides what is needed for the weight ($MN$) and the input ($N$).
To be fair, suppose we compute only one element of the gradient at a time and directly apply it without storing it to make something big. 
So we need to do $\partial W_{ij} = y_ix_j$. You at least need an excess memory cost of $2$ to do this.

\textbf{Recompute-all}: If you are recomputing $\vy$, then you can do one element at a time as $y_i = \langle \mW_i, \vx\rangle$ which requires no extra memory besides keeping $y_i$. Then you can do the scalar product to get $\partial W_{ij}$. So excess memory cost $1$.

\textbf{Store-all}: Storing $\vy$ costs $M$ excess memory.

\textbf{ZO}: Only has an excess memory cost $1$ to perturb each element of $\mW$.
\fi
\section{Forward Auto-Differentiation}\label{app_sec:fwd_ad}
We discuss the merits of using forward auto-differentiation instead of two forward passes to construct a gradient estimate for fine-tuning.
As $\epsilon\to 0$, the SPSA gradient estimate (\Cref{def:spsa}) can be written as $\vz\vz^\top \nabla\cL(\vtheta;\cB)$. 
The term $\vz^\top\nabla\cL(\vtheta;\cB)$ is a Jacobian-vector product (JVP), and it is well-known that this can be computed in parallel with a single forward pass while consuming additional memory equivalent to that of the largest activation in the model.
To fully compute the gradient estimate, one must store $\vz$ on the GPU while performing inference, so we observe that this algorithm requires more memory than MeZO.

We note that implementation of the forward auto-differentiation algorithm is not well-supported in PyTorch at the time of writing.
The autograd JVP function computes the JVP in a memory-inefficient way, as noted in the documentation, and the other available methods to compute a JVP are not sophisticated enough to easily scale to a complex LLM.
Computing the JVP is straightforward when using JAX, so we profile the memory consumption of inference and the JVP for RoBERTa-large when using JAX. 
We use batch size 16 with the MultiRC task. Note that JAX may automatically use rematerialization to avoid out of memory errors so we focus on settings in which the memory utilization remains below 50\%. The resulting memory usage during inference, backpropagation, and forward auto-differentiation are reported in ~\Cref{tab:fwd_ad}.

We see that forward auto-differentiation is substantially more memory efficient than backpropagation but less memory efficient than inference.
Furthermore, forward auto-differentiation selects $\epsilon=0$, which removes potentially beneficial third-and-higher order Taylor expansion terms from the estimate. 

\begin{table*}[h]
\centering
% \small
\resizebox{0.95\textwidth}{!}{
    \setlength{\tabcolsep}{0.3cm}
    \begin{tabular}{lccc}
    \toprule
     Task  &  \multicolumn{1}{c}{\textbf{Inference} (and \textbf{\mezo{}})} & \multicolumn{1}{c}{\textbf{Backpropagation}} & \multicolumn{1}{c}{\textbf{Forward Auto-Differentiation}} \\
    \midrule
        Excess Memory (MB)  &  327.50  & 24156.23  & 830.66  \\
    \bottomrule
    \end{tabular}}
    \caption{
        Memory consumption of RoBERTa-large when using batch size 16 with the MultiRC task. The reported memory does not include the cost of storing the model on the GPU, which is required for all three cases.
    }
    \label{tab:fwd_ad}
\end{table*}

\section{Experiment setup}
\label{app_sec:expsetup}

\subsection{Datasets}
\label{app_sec:datasets}
For RoBERTa-large, we consider classification datasets: SST-2~\citep{socher2013recursive_sst-2}, SST-5~\citep{socher2013recursive_sst-2},  TREC~\citep{voorhees2000building_trec}, MNLI~\citep{williams2018broad_mnli}, SNLI~\citep{bowman2015large}, and RTE~\citep{dagan2005pascal_rte1,bar2006second,giampiccolo2007third_rte3,bentivogli2009fifth_rte4}.
We follow~\citet{malladi2023kernelbased} in limiting the test set to $1,000$ examples for fast iteration. For training and validation, we have two settings: $k=16$ and $k=512$, which mean that we have 16 or 512 examples per class for both training and validation.

For OPT experiments, we consider the SuperGLUE dataset collection~\citep{wang2019superglue}, including: 
BoolQ~\citep{clark-etal-2019-boolq}, CB~\citep{de2019commitmentbank}, COPA~\citep{roemmele2011choice}, MultiRC~\citep{khashabi2018looking}, ReCoRD~\citep{zhang2018record}, RTE~\citep{dagan2005pascal_rte1,bar2006second,giampiccolo2007third_rte3,bentivogli2009fifth_rte4}, WiC~\citep{pilehvar-camacho-collados-2019-wic}, and WSC~\citep{levesque2012winograd}.
We also include SST-2~\citep{socher2013recursive_sst-2} and two question answering (QA) datasets, SQuAD~\citep{rajpurkar-etal-2016-squad} and DROP~\citep{dua-etal-2019-drop}.
We randomly sample 1,000 examples for training, 500 examples for validation, and 1,000 examples for testing.

\subsection{Prompts}
\label{app_sec:our_prompt}
Table \ref{tab:dataset_statistics} shows the set of downstream tasks and prompts with which we fine-tune RoBERTa-large, which are adapted from \cite{gao-etal-2021-making}.

\newcommand{\cls}{\ttt{[CLS]}}
\newcommand{\tableindent}{~~}
\newcommand{\mask}{\texttt{[MASK]}}
\newcommand{\firstsent}{\ttt{<}$S_1$\ttt{>}}
\newcommand{\secondsent}{\ttt{<}$S_2$\ttt{>}}
\newcommand{\sent}{\ttt{<}$S_1$\ttt{>}}
\newcommand\sys[1]{\textsc{#1}}
\newcommand\ti[1]{\textit{#1}}
\newcommand\ts[1]{\textsc{#1}}
\newcommand\ttt[1]{\texttt{#1}}

\begin{table*}[h]
\centering
%\resizebox{1\columnwidth}{!}{%
\small
\begin{tabular}{lllll}
\toprule
 \tf{Dataset} & $C$  & \tf{Type} & \tf{Prompt} & \tf{Label words} \\
 \midrule
 SST-2 & 2 & sentiment  cls.& {\sent} It was {\mask} . & \{great, terrible\} \\
 SST-5 & 5  & sentiment cls.& {\sent} It was {\mask} . & \{great, good, okay, bad, terrible\} \\
 TREC & 6   & topic cls. & {\mask} : {\sent} & \{Description, Expression, Entity, \\
 & & & & Human, Location, Number\}\\
  MNLI & 3  &  NLI & {\firstsent} ? {\mask} , {\secondsent}  & \{Yes, Maybe, No\}\\
SNLI & 3   & NLI  & {\firstsent} ? {\mask} , {\secondsent} & \{Yes, Maybe, No\} \\
 RTE & 2  & NLI & {\firstsent} ? {\mask} , {\secondsent} & \{Yes, No\}  \\
\bottomrule
\end{tabular}
%}
\caption{The  prompts of the datasets we used in our RoBERTa-large experiments (\Cref{tab:roberta} and \Cref{fig:rob}). 
The prompts are adapted from \cite{gao-etal-2021-making} and include a template and a set of label words that can fill in the \mask token. {\firstsent} and {\secondsent} refer to the first and the second (if any) input sentence.}
\label{tab:dataset_statistics}
\end{table*}

Table \ref{tab:prompt_opt} demonstrates the prompts we use for OPT. 
Note that in OPT experiments we have three types of tasks: classification, multiple-choice, and question answering.
Prompts are adopted from GPT-3~\citep{brown2020language} and PromptSource with minor changes~\citep{bach2022promptsource}.

\newcommand{\nnn}{\textbackslash{}n}
\newcommand{\lword}[1]{{\color{blue}#1}}
\newcommand{\inp}[1]{\ttt{#1}}

\begin{table*}[h]
\centering
%\resizebox{1\columnwidth}{!}{%
\small
\begin{tabular}{lll}
\toprule
 \tf{Dataset}  & \tf{Type} & \tf{Prompt}\\
 \midrule
 SST-2  &  cls.& {\inp{<text>}} It was \lword{terrible}/\lword{great} \\
 RTE & cls. & \inp{<premise>}\\
 &&Does this mean that "\inp{<hypothesis>}" is true? Yes or No?\\
 && \lword{Yes}/\lword{No} \\
CB & cls. & Suppose \inp{<premise>} Can we infer that "\inp{<hypothesis>}"? Yes, No, or Maybe? \\
&& \lword{Yes}/\lword{No}/\lword{Maybe}\\
BoolQ & cls. & \inp{<passage>} \inp{<question>}? \\
&& \lword{Yes}/\lword{No} \\
WSC & cls. & \inp{<text>}\\
&& In the previous sentence, does the pronoun "\inp{<span2>}" refer to \inp{<span1>}? Yes or No? \\
&& \lword{Yes}/\lword{No}\\
WIC & cls. & Does the word "\inp{<word>}" have the same meaning in these two sentences? Yes, No? \\
&& \inp{<sent1>}\\
&& \inp{<sent2>}\\
&& \lword{Yes}/\lword{No} \\
MultiRC & cls. & \inp{<paragraph>}\\
&& Question: \inp{<question>} \\
&& I found this answer "\inp{<answer}". Is that correct? Yes or No? \\
&& \lword{Yes}/\lword{No} \\
COPA & mch. &  \inp{<premise>} so/because \inp{<candidate>}\\
ReCoRD & mch. & \inp{<passage>}\\
& & \inp{<query>.replace("@placeholder", <candidate>)} \\
SQuAD & QA & Title: \inp{<title>}\\
&& Context: \inp{<context>}\\
&& Question: \inp{<question>}\\
&& Answer: \\
DROP & QA& Passage: \inp{<context>}\\
&& Question: \inp{<question>}\\
&& Answer: \\
\bottomrule
\end{tabular}
%}
\caption{
The prompts of the datasets we used in our OPT experiments.  
There are three types of tasks: classification (cls.), multiple-choice (mch.), and question answering (QA).
Prompts are adopted from GPT-3~\citep{brown2020language} and PromptSource~\citep{bach2022promptsource} with minor changes.
\inp{<text>} represents input from the dataset and \lword{Yes} represents label words. 
For inference on multiple choice tasks, we put in different candidates in the prompt and calculate the average log-likelihood for each candidate, and choose the candidate with the highest score. For inference on QA tasks, we use greedy decoding to generate the answer. 
}
\label{tab:prompt_opt}
\end{table*}

\subsection{Hyperparameters}
\label{app_sec:hyper}
We use the hyperparameters in \Cref{tab:mezo_hyperparameters} for \mezo{} experiments on RoBERTa-large (\Cref{tab:roberta} and \Cref{fig:rob}).
Experiments in \Cref{app_sec:ablations} informed the grid; in particular, the choice of $\epsilon$ seemed to not significantly impact performance, and using a larger batch size consistently yielded faster optimization.
We use the hyperparameters in \Cref{tab:opt_hyper} for \mezo{} experiments on OPT. 

Regarding learning rate scheduling and early stopping, we use linear learning scheduling for all fine-tuning with backpropagation experiments and constant learning rate for all \mezo{} experiments. 
For RoBERTa experiments, we evaluate the model on validation sets every 1/10 of total training steps and save the best validation checkpoint. 
All \ft{} experiments use 1K steps and \mezo{} experiments use $100$K steps.
For OPT experiments, we evaluate the model on validation sets every 1/5 of the total training steps and save the best validation checkpoint. 
All FT experiments train for 5 epochs and all \mezo{} experiments use $20$K steps.
Note that FT experiments mostly converge within 5 epochs but we observe that 
\mezo{} performance can still improve with more training steps.

\begin{table*}[h]
\centering
\small
% \resizebox{0.8\columnwidth}{!}{
\begin{tabular}{lrc}
\toprule
Experiment & Hyperparameters & Values \\
\midrule
\mezo{} & Batch size & $64$ \\
& Learning rate & $\{1\mathrm{e}{-7}, 1\mathrm{e}{-6}, 1\mathrm{e}{-5} \}$ \\
& $\epsilon$ & $1\mathrm{e}{-3}$ \\
& Weight Decay & $0$ \\
\midrule
\mezo{} (prefix) & Batch size & $64$ \\
& Learning rate & $\{1\mathrm{e}{-2}, 5\mathrm{e}{-3}, 1\mathrm{e}{-3} \}$ \\
& $\epsilon$ & $1\mathrm{e}{-1}$ \\
& Weight Decay & $0$ \\
& \# prefix tokens &$5$\\
\midrule
\mezo{} (LoRA) & Batch size & $64$ \\
& Learning rate & $\{1\mathrm{e}{-5}, 5\mathrm{e}{-5}, 1\mathrm{e}{-4} \}$ \\
& $\epsilon$ & $1\mathrm{e}{-3}$ \\
& Weight Decay & $0.1$ \\
& $(r, \alpha)$ & $(8, 16)$ \\
\midrule\midrule
FT with Adam & Batch size ($k=16$) & $\{2,4,8\}$ \\
 & Batch size ($k=512$) & $\{8,16,32\}$ \\
& Learning Rates &  $\{1\mathrm{e}{-5}, 3\mathrm{e}{-5}, 5\mathrm{e}{-5} \}$ \\
& Weight Decay & $0$\\
\midrule
FT with SGD & Batch size ($k=16$) & $\{2,4,8\}$ \\
 & Batch size ($k=512$) & $\{8,16,32\}$ \\
& Learning Rates & $\{1\mathrm{e}{-4}, 5\mathrm{e}{-4}, 1\mathrm{e}{-3},5\mathrm{e}{-3},1\mathrm{e}{-2} \}$ \\
& Weight Decay & $0$\\
\midrule 
FT (prefix) & Batch size & $\{8,16,32\}$ \\
& Learning Rates & $\{1\mathrm{e}{-2}, 3\mathrm{e}{-2}, 5\mathrm{e}{-2}\}$\\
& Weight Decay & $0$\\
& \# prefix tokens &$5$\\
\midrule
FT (LoRA) & Batch size & $\{4,8,16\}$ \\
& Learning Rates & $\{1\mathrm{e}{-4}, 3\mathrm{e}{-4}, 5\mathrm{e}{-4}\}$\\ 
& $(r, \alpha)$ & $(8, 16)$ \\
\bottomrule
\end{tabular}
% }
\caption{The hyperparameter grids used for RoBERTa-large experiments. \mezo{} uses a constant learning rate schedule, and FT uses linear scheduling. All \ft{} experiments use 1K steps and \mezo{} experiments use $100$K steps. We check validation performance every 1/10 total training steps.}
\label{tab:mezo_hyperparameters}
\end{table*}

\begin{table*}[h]
    \centering
    \small
    % \resizebox{0.8\columnwidth}{!}{
    \begin{tabular}{lrc}
    \toprule
    Experiment & Hyperparameters & Values \\
    \midrule
    \mezo{} & Batch size & $16$ \\
    & Learning rate & $\{1\mathrm{e}{-6}, 1\mathrm{e}{-7} \}$ or $\{1\mathrm{e}{-6}, 5\mathrm{e}{-7}, 1\mathrm{e}{-7} \}$ for SQuAD and DROP\\
    & $\epsilon$ & $1\mathrm{e}{-3}$ \\
    \midrule
    \mezo{} (prefix) & Batch size & $16$ \\
    & Learning rate & $\{1\mathrm{e}{-2}, 1\mathrm{e}{-3} \}$  or $\{5\mathrm{e}{-2}, 1\mathrm{e}{-2}, 5\mathrm{e}{-3} \}$ for SQuAD and DROP\\
    & $\epsilon$ & $1\mathrm{e}{-1}$ \\
& \# prefix tokens &$5$\\
    \midrule
    \mezo{} (LoRA) & Batch size & $16$ \\
    & Learning rate & $\{1\mathrm{e}{-4}, 5\mathrm{e}{-5} \}$  or $\{1\mathrm{e}{-4}, 5\mathrm{e}{-5}, 1\mathrm{e}{-5} \}$ for SQuAD and DROP\\
    & $\epsilon$ & $1\mathrm{e}{-2}$ \\
    & $(r, \alpha)$ & $(8, 16)$ \\
    \midrule\midrule
    FT with Adam & Batch size & $8$ \\
    & Learning Rates & $\{1\mathrm{e}{-5}, 5\mathrm{e}{-5}, 8\mathrm{e}{-5} \}$\\
    \bottomrule
    \end{tabular}
    % }
    \caption{The hyperparameter grids used for OPT experiments. All weight decay is set to $0$. FT uses 5 epochs and linear scheduled learning rates and \mezo{} uses $20$K steps and constant learning rates. 
    We check validation performance and save the best checkpoint every 1/5 total training steps.
    }
    \label{tab:opt_hyper}
    \end{table*}

\subsection{Modeling and implementation}
\label{app_sec:modeling}

For RoBERTa experiments, we follow~\citep{gao-etal-2021-making} for the prompt-based fine-tuning paradigm for masked language models. Please refer to the original paper for more details.

In OPT experiments, for classification tasks, we train the model similarly to~\citep{gao-etal-2021-making}, i.e., we take the logits corresponding to the label words and apply cross entropy loss on them; 
for multiple choice tasks and generation tasks (QA), we only keep the correct candidate and use teacher forcing to train on the correct examples. We only keep the loss on tokens in the candidate part and exclude the prompt part.

For OPT inference on classification and multiple-choice tasks, we use the model to get the average log-likelihood (by tokens) of all the candidates/label words, and predict the one with the highest average log-likelihood. For generation tasks, we use greedy decoding to generate the answer.

For in-context learning, we use 32 examples in the context. 
We also try filling in as many examples as possible in the context but this does not improve performance and sometimes leads to unstable results. Thus we keep the 32-example results.

For linear probing of classification tasks, 
we take the output feature and use \ttt{scipy} package to train a linear classifier.
For multiple-choice tasks and generation tasks, we found that 
this leads to poor results since the output space is the whole vocabulary;
instead, we do head-tuning, where the whole model is fixed except for the LM projection head. 
We use a batch size of $8$ and a learning rate of $\{1\mathrm{e}{-4}\, 5\mathrm{e}{-4}\}$, and train the head for $5$ epochs.

For experiments on 30B and 66B OPT models, we largely follow the OPT hyperparameters except that we do not evaluate the intermediate validation performance and directly use the last checkpoint for evaluation, due to the high storage cost of intermediate checkpoints of large models.

\subsection{Parameter-efficient fine-tuning}
\label{sec:peft}

Fine-tuning and storing a copy of the large language model for each downstream task is expensive.
Parameter-efficient fine-tuning (PEFT) techniques alleviate this problem:
instead of tuning all model parameters, PEFT only tunes a small number of additional parameters (usually less than 1\%) and can often achieve comparable or better performance~\citep{li-liang-2021-prefix,lester-etal-2021-power,ding2022delta}.
%to full fine-tuning.
%They are also found to have better generalization than full fine-tuning \citep{li-liang-2021-prefix,lester-etal-2021-power,ding2022delta}.
%In this work we explore ZO on the following two PEFT methods.
The ZO optimizer is compatible with PEFT methods, since ZO can operate on any subset of the model parameters.
We are interested in the following two common PEFT methods, designed for transformers~\citep{vaswani2017attention}.

\tf{LoRA} \citep{hu2021lora} adds a tunable low-rank delta to a linear layer during fine-tuning. 
Suppose a linear layer performed $\mW\vx+\vb$ during pre-training with $\mW\in\RR^{m\times n}$.
When fine-tuning, LoRA introduces two smaller matrices $\mA\in\RR^{m\times r}$ and $\mB\in\RR^{r\times n}$ such that $r\ll\min(m,n)$.
The linear layer is then computed as 
\begin{equation}
	\left(\mW + \frac{\alpha}{r}\mA\mB\right)\vx + \vb
\end{equation}
where $r$ and $\alpha$ are hyperparameters. 
$\mA$ and $\mB$ are trained on the downstream task while $\mW$ is frozen at its pre-trained value.  
%freezes the LLM's parameters and introduces a tunable low-rank delta to each linear
%to $\mf{W}_q$ and $\mf{W}_v$ in the attention layer. Taking $\mf{W}_q$ as an example, the new projection for $\mf{Q}$ is 
%\begin{equation}
%	\mf{W}_q'= \mf{W}_q + \Delta \mf{W}_q=\mf{W}_q+ \frac{\alpha}{r}  \mf{A}\mf{B},
%	\label{eq:lora}
%\end{equation}
%where $\mf{A}\in \mbb{R}^{d\times r}$, $\mf{B}\in \mbb{R}^{r\times n}$, $r\ll d$, and $\alpha$ is a constant. $\mf{A}$ is initialized by random Gaussian and $\mf{B}$ is initialized by zero, so that at the beginning $\mf{W}_q'= \mf{W}_q$. 
In transformers, this modification to the linear layer is applied to the query and value operations of each attention layer.
Empirically, $r$ can be very small, so the number of trainable parameters during fine-tuning is small. 
We choose $r=8$ and $\alpha=16$.
%We then scale ∆W x by αr , where α is a constant in r. When optimizing with Adam, tuning α is roughly the same as tuning the learning rate if we scale the initialization appropriately

\tf{Prefix-tuning} \citep{li-liang-2021-prefix}
%also freezes the original LLM's parameter and 
adds a prefix of $m$ tunable representations at each layer and freezes the rest of the model.
The representations are added as new keys and values and treated as additional context during the attention operation.    
%For each attention layer, prefix-tuning adds new parameters $\mf{K}_{\text{pre}}\in \mbb{R}^{m\times d}$ and $\mf{V}_{\text{pre}}\in \mbb{R}^{m\times d}$, and the new key and value matrices are (taking key as an example)
%\begin{equation}
%	\mf{K}'= \text{concat}(\mf{K}_{\text{pre}}, \mf{K}),
%	\label{eq:prefix}
%\end{equation}
%where $\text{concat}(\cdot)$ is the concatenation operation along the first axis and $\mf{K}'\in \mbb{R}^{m+n,d}$.
We initialize these tunable representations by randomly sampling tokens from the vocabulary and passing them through the LLM to get their keys and values at different attention layers. 
We found this crucial to make prefix tuning stable with \mezo{}, and this trick additionally boosts the performance of prefix tuning with backpropagation, as shown in \Cref{tab:prefixnoreal}.
We also tried the reparameterization trick in~\citep{li-liang-2021-prefix}, which does not help \mezo{} training.
In our experiments, we find $m=5$ to be sufficient to achieve good performance on most tasks.

We also show that \mezo{} is compatible with parameter-efficient fine-tuning methods, such as prefix tuning and LoRA. 
Surprisingly, the performance of \mezo{} does not improve substantially when tuning much fewer parameters, as one might expect from classical analyses (see~\Cref{sec:theory}). 
Accordingly, our theoretical analysis in \Cref{sec:theory} suggests that the convergence rate of ZO-SGD does not depend on the parameter dimension during fine-tuning.

\begin{table*}[!htbp]
% \captionsetup[wrapfigure]{name=Table}

%     \begin{wrapfigure}{r}{0.65\textwidth}
%         \begin{minipage}{0.65\textwidth}
\centering
% \small
\resizebox{0.95\textwidth}{!}{
    \setlength{\tabcolsep}{0.3cm}
    \begin{tabular}{lcccccc}
    \toprule
     Task  &  \multicolumn{1}{c}{\textbf{SST-2}} & \multicolumn{1}{c}{\textbf{SST-5}} & \multicolumn{1}{c}{\textbf{SNLI}}  & \multicolumn{1}{c}{\textbf{MNLI}} & \multicolumn{1}{c}{\textbf{RTE}} & \multicolumn{1}{c}{\textbf{TREC}} \\
    Type & \multicolumn{2}{c}{------ sentiment ------} & \multicolumn{3}{c}{------ natural language inference ------} & \multicolumn{1}{c}{--- topic ---}\\
    % eNTK solves task & \checkmark & \checkmark & \checkmark & \checkmark & (\checkmark) & \checkmark & & \checkmark & & & (\checkmark) & \checkmark & \checkmark & \checkmark\\
    % Linearization & \checkmark & \checkmark & \checkmark & \checkmark & & \checkmark & & & & & \checkmark & \checkmark & \checkmark & \checkmark  \\
    % Fixed Features & \checkmark & \checkmark & \checkmark & \checkmark & \checkmark & \checkmark & & & \checkmark & \checkmark & \checkmark & \checkmark & \checkmark & \checkmark \\
    % \midrule
    % Kernel behavior & \checkmark & \checkmark & \checkmark & \checkmark & & \checkmark & & & & & (\checkmark) & \checkmark & \checkmark & \checkmark \\
    \midrule
    \ft{} (prefix, random init)	& 90.7 (1.7)	&47.2 (2.0)	&70.7 (2.8)	&62.6 (3.3)	&63.5 (4.4)	&83.4 (4.7) \\
    \ft{} (prefix, real act init)	 & 91.9 (1.0)	&47.7 (1.1)	&77.2 (1.3)	&66.5 (2.5)	&66.6 (2.0)	&85.7 (1.3) \\
    \bottomrule
    \end{tabular}}
    \caption{
        Prefix-tuning ablations. We compare randomly-initialized prefixes and real word activation prefixes. Using real word activations significantly outperforms random initialization.
    }
    \label{tab:prefixnoreal}
\end{table*}

\subsection{Training with non-differentiable objectives}
\label{sec:detail_nondiff}
The experiments maximizing the accuracy of a RoBERTa-large model were all conducted using the same grid as \mezo{} in \Cref{tab:mezo_hyperparameters}. 

For OPT experiments on SQuAD with F1 as objective, we use a batch size of $16$.
For \mezo{}, we use 
a learning rate of $\{1\mathrm{e}{-6}, 5\mathrm{e}{-6}, 1\mathrm{e}{-5}\}$ and $\epsilon=1\mathrm{e}{-3}$.
For \mezo{} (prefix), we use
 a learning rate of $\{1\mathrm{e}{-1}, 5\mathrm{e}{-2}, 1\mathrm{e}{-2}\}$ and $\epsilon=1\mathrm{e}{-1}$.

\subsection{Memory profiling}
\label{sec:profiling}

In memory profiling,
we use standard implementation with Huggingface's \ttt{transformers} \citep{wolf-etal-2020-transformers} package.
We did not turn on any advance memory-saving options, e.g., gradient checkpointing.
We set the per-device batch size as $1$ to
test the minimum hardware requirement to run the model with specific optimization algorithms.
For multi-GPU backpropagation, we use fully sharded data parallel (FSDP) \citep{FairScale2021} provided by PyTorch~\citep{pytorch}.
For multi-GPU \mezo{}, we use \ttt{transformers} multi-GPU inference of large models.
We use Nvidia's \ttt{nvidia-smi} command to  monitor the GPU memory usage.
We call a run ``successful'' if there is no out of memory error from GPUs for at least 100 steps.
We also profile fine-tuning with LoRA, but find its memory usage similar to that of fine-tuning with prefix-tuning. Hence here we only show the analysis with prefix-tuning.
\section{More experiment results}
\label{app_sec:more_exp}

\subsection{RoBERTa-large experiments}
\Cref{tab:roberta} contains the detailed numbers corresponding to~\Cref{fig:rob} and also reports the performance of \mezo{}-Adam.
\label{app_sec:roberta_more}

\begin{table*}[!htbp]
% \captionsetup[wrapfigure]{name=Table}

%     \begin{wrapfigure}{r}{0.65\textwidth}
%         \begin{minipage}{0.65\textwidth}
\centering
% \small
\resizebox{0.95\textwidth}{!}{
    \setlength{\tabcolsep}{0.3cm}
    \begin{tabular}{lcccccc}
    \toprule
     Task  &  \multicolumn{1}{c}{\textbf{SST-2}} & \multicolumn{1}{c}{\textbf{SST-5}} & \multicolumn{1}{c}{\textbf{SNLI}}  & \multicolumn{1}{c}{\textbf{MNLI}} & \multicolumn{1}{c}{\textbf{RTE}} & \multicolumn{1}{c}{\textbf{TREC}} \\
    Type & \multicolumn{2}{c}{------ sentiment ------} & \multicolumn{3}{c}{------ natural language inference ------} & \multicolumn{1}{c}{--- topic ---}\\
    % eNTK solves task & \checkmark & \checkmark & \checkmark & \checkmark & (\checkmark) & \checkmark & & \checkmark & & & (\checkmark) & \checkmark & \checkmark & \checkmark\\
    % Linearization & \checkmark & \checkmark & \checkmark & \checkmark & & \checkmark & & & & & \checkmark & \checkmark & \checkmark & \checkmark  \\
    % Fixed Features & \checkmark & \checkmark & \checkmark & \checkmark & \checkmark & \checkmark & & & \checkmark & \checkmark & \checkmark & \checkmark & \checkmark & \checkmark \\
    % \midrule
    % Kernel behavior & \checkmark & \checkmark & \checkmark & \checkmark & & \checkmark & & & & & (\checkmark) & \checkmark & \checkmark & \checkmark \\
    \midrule
        Zero-shot & 79.0 & 35.5 & 50.2 & 48.8 & 51.4 & 32.0 \\
\midrule

    \multicolumn{7}{c}{Gradient-free methods: $k=16$}\\%16 examples per class} \\
    \midrule
    LP & 76.0 (2.8)	& 40.3 (1.9)	&66.0 (2.7)	&56.5 (2.5)	&59.4 (5.3)	&51.3 (5.5) \\
    \mezo{} & 90.5 (1.2)	& 45.5 (2.0)	&68.5 (3.9)	&58.7 (2.5) & 64.0 (3.3) &76.9 (2.7) \\
    \mezo{} (LoRA) & 91.4 (0.9) & 43.0 (1.6) & 69.7 (6.0)&64.0 (2.5) & 64.9 (3.6) &73.1 (6.5)\\
    \mezo{} (prefix) & 90.8 (1.7)	&45.8 (2.0)&	71.6 (2.5)	& 63.4 (1.8)& 65.4 (3.9)&80.3 (3.6) \\
    \mezo{}-Adam  & 90.4 (1.4)	& 45.4 (1.5) & 74.1 (2.7)	&64.3 (0.8)$\dagger$ &	59.2 (11.1)$\dagger$&	78.3 (1.4) \\
        \midrule
    \multicolumn{7}{c}{Gradient-based methods: $k=16$}\\% 16 examples per class} \\
    \midrule
    \ft{} & 91.9 (1.8)	&47.5 (1.9)&	77.5 (2.6)	&70.0 (2.3)	&66.4 (7.2)	&85.0 (2.5) \\
    \ft{} (LoRA)&91.4 (1.7)	&46.7 (1.1)&	74.9 (4.3)	&67.7 (1.4)&	66.1 (3.5)&	82.7 (4.1) \\
    \ft{} (prefix) & 91.9 (1.0)&47.7 (1.1) & 77.2 (1.3) & 66.5 (2.5)	&66.6 (2.0)	&85.7 (1.3) \\
    % \midrule
    % \midrule
    \midrule\midrule
    \multicolumn{7}{c}{Gradient-free methods: $k=512$}\\ % examples per class} \\
    \midrule
    LP & 91.3 (0.5) & 51.7 (0.5) & 80.9 (1.0) & 71.5 (1.1) & 73.1 (1.5) & 89.4 (0.5) \\
    \mezo{} & 93.3 (0.7) & 53.2 (1.4) & 83.0 (1.0) & 78.3 (0.5) & 78.6 (2.0) & 94.3 (1.3) \\
    \mezo{} (LoRA) & 93.4 (0.4) & 52.4 (0.8) & 84.0 (0.8) & 77.9 (0.6) & 77.6 (1.3) & 95.0 (0.7) \\
    \mezo{} (prefix) & 93.3 (0.1)	& 53.6 (0.5)	&84.8 (1.1) &	79.8 (1.2) &77.2 (0.8) & 94.4 (0.7)  \\
    \mezo{}-Adam & 93.3 (0.6) & 53.9 (0.8) & 85.3 (0.8) & 79.6 (0.4) & 79.2 (1.2)  & 95.1 (0.3)\\
    \midrule
    \multicolumn{7}{c}{Gradient-based methods: $k=512$}\\ % examples per class} \\
    \midrule
    \ft{} & 93.9 (0.7)	&55.9 (0.9)	&88.7 (0.8)	&84.4 (0.8)&	82.7 (1.4)	&97.3 (0.2)\\
    \ft{} (LoRA) & 94.2 (0.2)&55.3 (0.7)&	88.3 (0.5)	&83.9 (0.6)&	83.2 (1.3)	&97.0 (0.3)\\
    \ft{} (prefix) & 93.7 (0.3)&54.6 (0.7)&88.3 (0.7)&83.3 (0.5)&82.5 (0.8)&97.4 (0.2)\\
    % \midrule
    % \multicolumn{7}{c}{Gradient-free methods: 512 examples per class} \\
    % \midrule

    \bottomrule
    \end{tabular}}
    \caption{
    Experiments on RoBERTa-large (350M parameters). LP: Linear probing; \zosgds{}, \zosgds{} (LoRA), and \zosgds{} (prefix): our memory-efficient \zosgd{} (\Cref{sec:memory_efficient_zo}) with full-parameter tuning, LoRA, and prefix-tuning respectively; FT: fine-tuning with Adam. All reported numbers are averaged accuracy (standard deviation). 
    All experiments use prompts (\Cref{app_sec:our_prompt}).
    ZO outperforms zero-shot and LP by a large margin and approaches FT performance with much less memory cost. 
    % \todo{Change ZO-LoRA to ZO (LoRA) and mention in caption it's ZO-SGD}
    }
    \label{tab:roberta}
\end{table*}

\paragraph{LP-\mezo{}}
We also compare \mezo{} to performing linear probing and then subsequently performing fine-tuning via \mezo{}, following the analogous suggestion for fine-tuning in~\citet{kumar2022finetuning}.
We use the \mezo{} grid described in~\Cref{tab:mezo_hyperparameters}.
Note that the linear probing checkpoints used here have early stopping, unlike the ones reported in~\Cref{tab:roberta}.
We heuristically implement early stopping by limiting the number of iterations (from $5000$ to $1000$) and increasing the convergence tolerance (from $1\mathrm{e}{-4}$ to $0.01$) in the $\texttt{scipy}$ solver.
Experiments on a few settings show that LP-\mezo{} can sometimes improve performance without increasing the memory consumption (see~\Cref{tab:lp_mezo}).
However, sometimes, linear probing first can severely hurt performance.

\begin{table*}[h]
% \captionsetup[wrapfigure]{name=Table}

%     \begin{wrapfigure}{r}{0.65\textwidth}
%         \begin{minipage}{0.65\textwidth}
\centering
\small
% \resizebox{0.7\textwidth}{!}{
    \setlength{\tabcolsep}{0.3cm}
    \begin{tabular}{lccccc}
    \toprule
     Task  &  \multicolumn{1}{c}{\textbf{SST-2}} & \multicolumn{1}{c}{\textbf{SST-5}} & \multicolumn{1}{c}{\textbf{SNLI}} & \multicolumn{1}{c}{\textbf{TREC}} \\
    % eNTK solves task & \checkmark & \checkmark & \checkmark & \checkmark & (\checkmark) & \checkmark & & \checkmark & & & (\checkmark) & \checkmark & \checkmark & \checkmark\\
    % Linearization & \checkmark & \checkmark & \checkmark & \checkmark & & \checkmark & & & & & \checkmark & \checkmark & \checkmark & \checkmark  \\
    % Fixed Features & \checkmark & \checkmark & \checkmark & \checkmark & \checkmark & \checkmark & & & \checkmark & \checkmark & \checkmark & \checkmark & \checkmark & \checkmark \\
    % \midrule
    % Kernel behavior & \checkmark & \checkmark & \checkmark & \checkmark & & \checkmark & & & & & (\checkmark) & \checkmark & \checkmark & \checkmark \\
    \midrule
        Zero-shot & 79.0 & 35.5 & 50.2 & 32.0 \\
    FT & 91.9 (1.8) & 47.5 (1.9) & 77.5 (2.6) & 85.0 (2.5)\\
    \midrule
    \mezo{} & 90.5 (1.2) & \textbf{45.5 (2.0)} & 68.5 (3.9) & \textbf{76.9 (2.7)} \\
    LP-\mezo{} & \textbf{91.4 (1.4)} & 41.9 (3.3) & \textbf{70.7 (3.4)} & 54.0 (4.5) \\

    \bottomrule
    \end{tabular}
    %}
    \caption{
       Performing linear probing before fine-tuning with \mezo{}, as suggested previously~\citep{kumar2022finetuning}, can sometimes improve performance without increasing the memory overhead. We use $k=16$ for these experiments.
    }
    \label{tab:lp_mezo}
\end{table*}

\subsection{OPT experiments}
\label{app_sec:opt_more}

\Cref{tab:large_more} 
present the full results of OPT-30B and OPT-66B, with detailed \mezo{}  numbers.

\begin{table}[h]
    \centering
    % \small
    \resizebox{0.75\textwidth}{!}{
        \begin{tabular}{lcccccc}
        \toprule
         Task  & \tf{SST-2} & \tf{RTE} & \tf{BoolQ} & \tf{WSC} & \tf{WIC} & \tf{SQuAD} \\
        % Task type & \multicolumn{2}{c}{---- sentiment ----} & \multicolumn{3}{c}{------ entailment ------} & \multicolumn{1}{c}{-- topic clf. --}\\
        \midrule
        % \midrule
        30B zero-shot & 56.7 & 52.0 & 39.1 & 38.5 &	50.2 & 46.5\\
        30B ICL & 81.9 & 66.8 & 66.2 &56.7&	51.3 & 78.0\\
        30B \mezo{} & 90.6	&66.4& 67.2&	63.5&	56.3& 85.2\\
        30B \mezo{} (prefix) & 87.5	&72.6&		73.5&	55.8&	59.1&83.9\\
        % 30B \mezo{}/\mezo{} (prefix) &  \tf{90.6} & \tf{72.6} & \tf{73.5} & \tf{63.5} & \tf{59.1} & \tf{85.2}\\
        \midrule
        66B zero-shot & 57.5 & \tf{67.2} & 66.8&	43.3& 50.6&48.1\\
        66B ICL & 89.3 & 65.3&	62.8&	52.9&54.9&81.3\\
        66B \mezo{} & 91.2&	65.7&		72.7&	63.5&58.9& *\\
        66B \mezo{} (prefix) & 93.6&66.4&73.7&	57.7&	58.6&85.0\\
%        66B \mezo{}/\mezo{} (prefix) & \tf{93.6} & 66.4 & \tf{73.7} & \tf{63.5} &\tf{58.9} & \tf{85.0} \\
        \bottomrule
        \end{tabular}}
        \vspace{5pt}
        \caption{
            Experiments on OPT-30B and OPT-66B (with 1,000 examples). *: \mezo{} requires further tuning to successfully optimize.
            % $\dagger$: use fewer training steps due to compute budget limit, which will be updated in later revision.
            % \tnote{$\dagger$: limited steps (5000)}.
        }
        \label{tab:large_more}
\end{table}

\subsection{Convergence of \mezo{} with full-parameter and PEFT}
\label{app_sec:conv_zo_full_peft}

We demonstrate the convergence rate of 
\mezo{}, \mezo{} (LoRA) and \mezo{} (prefix) 
on SST-2 and SNLI for the first 5,000 steps in Figures~\ref{fig:peft_plot_convg}.
We see that despite the different number of parameters they optimize, 
\mezo{} demonstrates similar training speed on full parameter and PEFT. This agrees with our theory in \Cref{sec:theory}, which shows that \mezo{}'s optimization speed is independent of the number of parameters.

\begin{figure}[h]
    \centering
    \begin{minipage}[b]{0.45\textwidth}
      \centering
      \includegraphics[width=\textwidth]{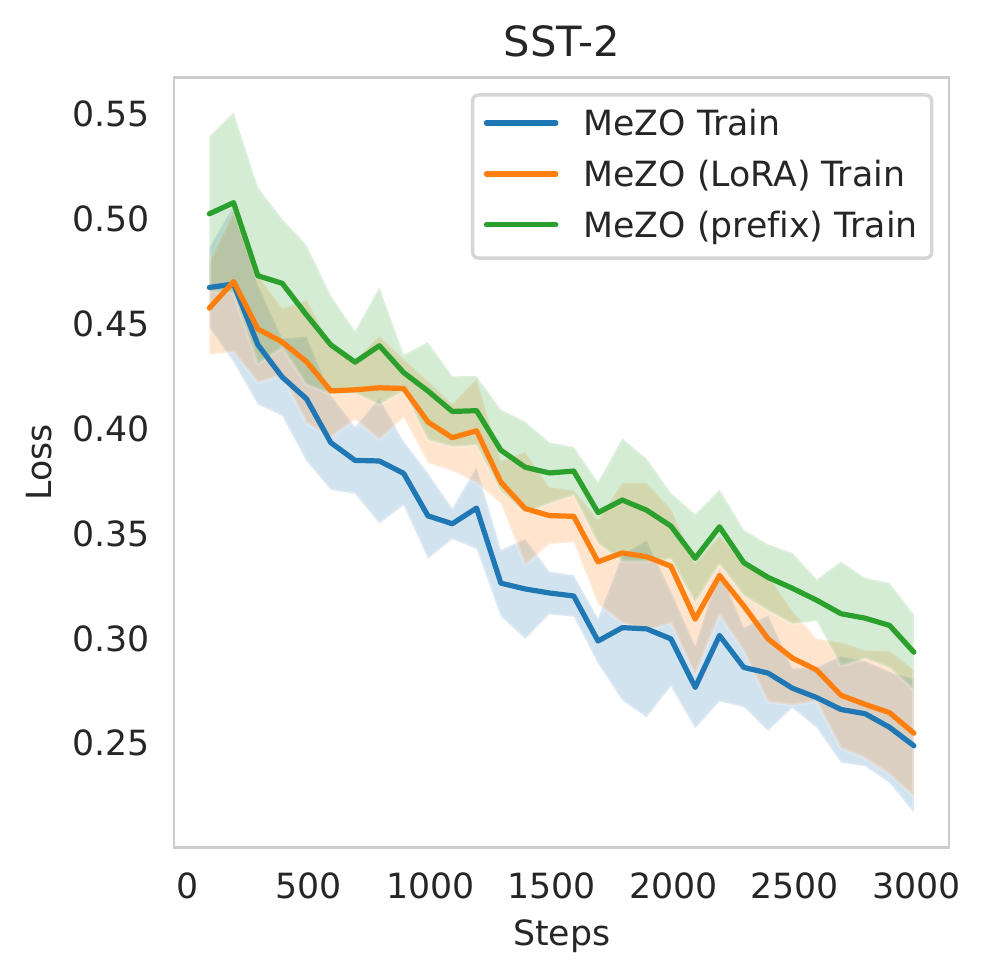}
      \label{fig:peft_plot_sst2}
    \end{minipage}
    % \hfill
    \begin{minipage}[b]{0.45\textwidth}
      \centering
        \includegraphics[width=\textwidth]{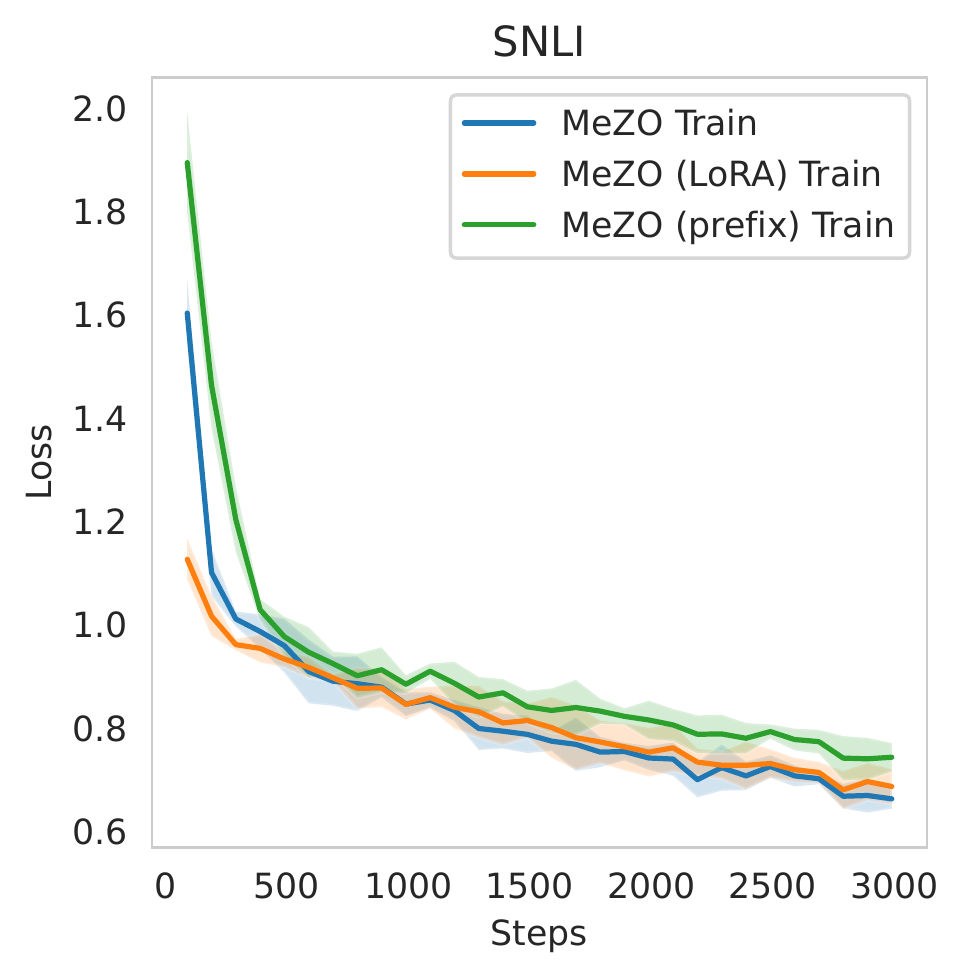}
      \label{fig:peft_plot_snli}
    \end{minipage} 
    \caption{\mezo{} does not optimize significantly faster when tuning fewer parameters, agreeing with our theory in~\Cref{sec:theory}.}
    \label{fig:peft_plot_convg}
  \end{figure}

\subsection{ZO vs BBTv2}
\label{app_sec:bbtv2}

We compare ZO with BBTv2~\citep{sun2022bbtv2}  on mutually assessed tasks in Table~\ref{tab:bbtv2}. 
% Numbers of BBTv2 are taken from~\citet{sun2022bbtv2}. 
ZO significantly outperform BBTv2. 
Furthermore, BBTv2 is limited to optimize in low-dimensional space and requires
prefix-tuning and a down-projection to reduce the number of optimized parameters. 
BBTv2 also employs an iterative scheme which only optimizes one layer at a time. 
In contrast, ZO works with both full-parameter tuning and PEFT, as shown in our experiments (\Cref{sec:exp}) and theory~(\Cref{sec:theory}).

\begin{table*}[h]
% \captionsetup[wrapfigure]{name=Table}

%     \begin{wrapfigure}{r}{0.65\textwidth}
%         \begin{minipage}{0.65\textwidth}
\centering
% \small
\resizebox{0.7\textwidth}{!}{
    \setlength{\tabcolsep}{0.3cm}
    \begin{tabular}{lcccccc}
    \toprule
     Task  &  \multicolumn{1}{c}{\textbf{SST-2}} & \multicolumn{1}{c}{\textbf{SNLI}} & \multicolumn{1}{c}{\textbf{RTE}} \\
    Task type & \multicolumn{1}{c}{------ sentiment ------} & \multicolumn{2}{c}{-- natural language inference --} \\
    % eNTK solves task & \checkmark & \checkmark & \checkmark & \checkmark & (\checkmark) & \checkmark & & \checkmark & & & (\checkmark) & \checkmark & \checkmark & \checkmark\\
    % Linearization & \checkmark & \checkmark & \checkmark & \checkmark & & \checkmark & & & & & \checkmark & \checkmark & \checkmark & \checkmark  \\
    % Fixed Features & \checkmark & \checkmark & \checkmark & \checkmark & \checkmark & \checkmark & & & \checkmark & \checkmark & \checkmark & \checkmark & \checkmark & \checkmark \\
    % \midrule
    % Kernel behavior & \checkmark & \checkmark & \checkmark & \checkmark & & \checkmark & & & & & (\checkmark) & \checkmark & \checkmark & \checkmark \\
    \midrule
        Zero-shot & 79.0  & 50.2  & 51.4  \\
\midrule

    BBTv2 & 90.3 (1.7) &  57.3 (2.3) & 56.7 (3.3)\\
    \mezo{} & 90.5 (1.2)	&68.5 (3.9)	& 64.0 (3.3)  \\
    % ZO \\
    % LP-ZO-SGD \\
    \mezo{} (LoRA) & 91.4 (0.9)  & 69.7 (6.0) & 64.9 (3.6) \\
    \mezo{} (prefix) & 90.8 (1.7)	&	71.6 (2.5) & 65.4 (3.9)	\\

    \bottomrule
    \end{tabular}}
    \caption{
        ZO vs BBTv2 with RoBERTa-large. BBTv2 performance is from \citet{sun2022bbtv2}.
    }
    \label{tab:bbtv2}
\end{table*}

\subsection{Memory profiling}
\label{sec:more_memory}

We show the detailed numbers of memory profiling results 
\Cref{tab:memory_detail}, which also corresponds to \Cref{fig:memory_fig}.
For how we profile the memory usage, 
please refer to \Cref{sec:profiling}.

\begin{table}[h]
\centering
\small
%\resizebox{0.99\textwidth}{!}{
    \begin{tabular}{lcccccc}
    \toprule
    Method  & \tf{Zero-shot / \mezo{}} & \tf{ICL} &   \tf{Prefix FT} & \tf{Full-parameter FT} \\
    % Task type & \multicolumn{2}{c}{---- sentiment ----} & \multicolumn{3}{c}{------ entailment ------} & \multicolumn{1}{c}{-- topic clf. --}\\
    \midrule
    % \midrule
    1.3B & 1xA100 (4GB)  & 1xA100 (6GB)  & 1xA100 (19GB) & 1xA100 (27GB)\\
    2.7B & 1xA100 (7GB)  & 1xA100 (8GB)  & 1xA100 (29GB) & 1xA100 (55GB)\\
    6.7B & 1xA100 (14GB) & 1xA100 (16GB) & 1xA100 (46GB) & 2xA100 (156GB)\\
    13B & 1xA100 (26GB)  & 1xA100 (29GB) & 2xA100 (158GB) & 4xA100 (316GB)\\
    30B & 1xA100 (58GB)  & 1xA100 (62GB) & 4xA100 (315GB) & 8xA100 (633GB)\\
    66B & 2xA100 (128GB) & 2xA100 (134GB)& 8xA100 & 16xA100\\
    \bottomrule
    \end{tabular}%}
    \vspace{5pt}
    \caption{
        Memory usage on the MultiRC (avg \#tokens=400) dataset.
    }
    \label{tab:memory_detail}
\end{table}

\subsection{Wallclock time efficiency}
\label{sec:time}

In this section, we measure the wallclock time efficiency of MeZO compared to full-parameter FT, with respect to different model sizes. 
We conduct our experiments with 80GB A100s connected by NVLink and InfiniteBand, which are state-of-the-art solutions for distributed training. 
As shown in Table~\ref{tab:wallclock}, 
on the MultiRC datasets, 
training with MeZO brings 7.74$\times$ speedup per step compared to full-parameter FT on a 30B model. 
This is due to (1) MeZO does not require costly backpropagation and (2) MeZO requires fewer GPUs and reduces the multi-GPU communication overhead. 
We can see that MeZO has a bigger advantage when training larger models---the multi-GPU overhead for fine-tuning is larger.

Note that even though  MeZO has better per-step wallclock efficiency, it requires significantly more steps than standard FT. 
Taking our OPT-30B experiments as an example: MeZO takes 32$\times$ more steps than standard FT, while FT takes 8$\times$ more GPUs and 7.74$\times$ more time per step. Overall, MeZO  requires only half as many GPU-hours as FT for a 30B model.

% \begin{table}[h]
% \centering
% \small
% %\resizebox{0.99\textwidth}{!}{
%     \begin{tabular}{lcc}
%     \toprule
%     Method  & \tf{\mezo{}} &  \tf{Full-parameter FT} \\
%     % Task type & \multicolumn{2}{c}{---- sentiment ----} & \multicolumn{3}{c}{------ entailment ------} & \multicolumn{1}{c}{-- topic clf. --}\\
%     \midrule
%     % \midrule
%     13B & 1$\times$ (2.70s) & 5.05$\times$ \\
%     30B & 1$\times$ (5.90s) & 7.74$\times$ \\
%     \bottomrule
%     \end{tabular}%}
%     \vspace{5pt}
%     \caption{
%         Wallclock time per step of different training methods. It is measured on 80GB A100s with NVLink and InfiniteBand connections. The wallclock time is averaged over 100 steps. It is measured on the MultiRC task with the OPT family. 
%         We use a batch size of 8 for FT and 16 for MeZO (consistant with our main experiment setting). 
%         13B and 30B OPT require 1 A100 with MeZO and 4 and 8 A100s with FT respectively.
%     }
%     \label{tab:wallclock}
% \end{table}

\begin{table}[h]
\centering
\small
%\resizebox{0.99\textwidth}{!}{
    \begin{tabular}{lccccc}
    \toprule
    & \tf{1.3B} & \tf{2.7B} & \tf{13B} & \tf{30B} & \tf{66B} \\
    \midrule
 MeZO (bsz=16)& 0.815s (1)& 1.400s (1)& 2.702s (1)& 5.896s (1) &12.438s (4)\\
 MeZO (bsz=8)& 0.450s (1)& 0.788s (1)& 1.927s (1)& 4.267s  (1)&7.580s (2)\\
 FT (bsz=8)& 0.784s (1)& 1.326s (1)& 13.638s (4)& 45.608s (8)&84.098s (20)\\
 & bspd=2, ga=4 & bspd=2, ga=4 & bspd=1, ga=2 & bspd=1, ga=1 & bspd=1, ga=1\\
 \bottomrule
    \end{tabular}%}
    \vspace{5pt}
    \caption{
        Wallclock time per step of different training methods. Numbers in brackets are numbers of GPUs required. It is measured on 80GB A100s with NVLink and InfiniteBand connections. The wallclock time is averaged over 100 steps. It is measured on the MultiRC task with the OPT family. 
        We use a batch size (``bsz'') of 8 for FT and 16 for MeZO (consistant with our main experiment setting). 
        For comparison we also add MeZO with a batch size of 8.
        For FT (FSDP), we show the following additional information. ``bspd'': batch size per device. ``ga'': gradient accumulation steps. The effective batch size is bspd$\times$ga$\times$ \#GPUs.
        Note that for FT 66B, the effective batch size 20.% 13B and 30B OPT require 1 A100 with MeZO and 4 and 8 A100s with FT respectively.
    }
    \label{tab:wallclock}
\end{table}

\clearpage
\section{Proofs}

\begin{proof}[Proof of \Cref{lem:covariance}]
    We first note that in the $\epsilon \rightarrow 0$ limit, we have
    \begin{align*}
        \hat \nabla \cL(\btheta; \mathcal{B}) = \frac{1}{Bn}\sum_{(\bx, \by) \in \cB}\sum_{i \in [n]}\bz_i\bz_i^\top\nabla \cL(\btheta; \{(\bx, \by)\}).
    \end{align*}
    Taking expectation over the batch $\cB$ and the $\bz_i$, we have $\E[\hat \nabla \cL(\btheta; \mathcal{B})] = \nabla \cL(\btheta)$, so $\hat \nabla \cL(\btheta; \mathcal{B})$ is an unbiased estimator of the gradient. 

    Computing the second moment, we get
    \begin{align*}
        &\E\left[\hat \nabla \cL(\btheta; \mathcal{B})\hat \nabla \cL(\btheta; \mathcal{B})^\top\right]\\
        &\quad = \frac{1}{B^2n^2}\sum_{(\bx_1, \by_1),(\bx_2, \by_2) \in \cB}\sum_{i, j \in [n]}\E\left[(\bz_i\bz_i^\top\nabla \cL(\btheta; \{(\bx_1, \by_1)\}))(\bz_j\bz_j^\top\nabla \cL(\btheta; \{(\bx_2, \by_2)\}))^\top\right]
    \end{align*}

    Let $\bu, \bv$ be two arbitrary vectors. We have that
    \begin{align*}
        \E_{\bz_i, \bz_j}[\bz_i\bz_i^\top\bu\bv^\top\bz_j\bz_j^\top] = \bu\bv^\top
    \end{align*}
    when $i \neq j$, and
    \begin{align*}
        \E_{\bz_i}[\bz_i\bz_i^\top\bu\bv^\top\bz_i\bz_i^\top] &= \E_{\bz}[\bz^{\otimes 4}](\bu, \bv)\\
        &= \frac{3d}{d+2}\Sym(\bI^{\otimes 2})(\bu, \bv)\\
        &= \frac{d}{d+2}\cdot\bu^\top\bv\cdot\bI + \frac{2d}{d+2}\cdot\bu\bv^\top.
    \end{align*}
    Therefore
    \begin{align*}
        &\E\left[\hat \nabla \cL(\btheta; \mathcal{B})\hat \nabla \cL(\btheta; \mathcal{B})^\top\right]\\
        &= \frac{1}{B^2}\sum_{(\bx_1, \by_1),(\bx_2, \by_2) \in \cB} \left(\frac{n - 1}{n} + \frac{2d}{n(d+2)}\right)\E\left[\cL(\btheta; \{(\bx_1, \by_1)\})\cL(\btheta; \{(\bx_2, \by_2)\})^\top \right]\\
        &\qquad + \frac{d}{n(d+2)}\cdot \E\left[\cL(\btheta; \{(\bx_1, \by_1)\})^\top\cL(\btheta; \{(\bx_2, \by_2)\}) \right] \bI.
    \end{align*}

    Next, note that when $(\bx_1, \by_1) \neq (\bx_2, \by_2)$, we have
    \begin{align*}
        \E\left[\cL(\btheta; \{(\bx_1, \by_1)\})\cL(\btheta; \{(\bx_2, \by_2)\})^\top \right] = \nabla \cL(\btheta)\nabla \cL(\btheta)^\top,
    \end{align*}
    and when $(\bx_1, \by_1) = (\bx_2, \by_2)$ we have
    \begin{align*}
        \E\left[\cL(\btheta; \{(\bx_1, \by_1)\})\cL(\btheta; \{(\bx_2, \by_2)\})^\top \right] = \nabla \cL(\btheta)\nabla \cL(\btheta)^\top + \bSigma_{MB}(\btheta).
    \end{align*}
    Therefore
    \begin{align*}
        \frac{1}{B^2}\sum_{(\bx_1, \by_1),(\bx_2, \by_2) \in \cB} \E\left[\cL(\btheta; \{(\bx_1, \by_1)\})\cL(\btheta; \{(\bx_2, \by_2)\})^\top \right] = \nabla \cL(\btheta)\nabla \cL(\btheta)^\top + \frac{1}{B}\bSigma(\btheta),
    \end{align*}
    and plugging this yields
    \begin{align}
        \begin{aligned}\label{eq:ZO_cov}
        \E\left[\hat \nabla \cL(\btheta; \mathcal{B})\hat \nabla \cL(\btheta; \mathcal{B})^\top\right] & = \left(1 + \frac{d-2}{n(d+2)}\right)\cdot\left(\nabla \cL(\btheta)\nabla \cL(\btheta)^\top + \frac{1}{B}\bSigma(\btheta)\right)\\
        &\qquad + \frac{d}{n(d+2)}\bI \cdot\left(\norm{\nabla \cL(\btheta)}^2 + \frac{1}{B}\tr(\bSigma(\btheta)) \right).
        \end{aligned}
    \end{align}
    Finally, we have
    \begin{align*}
        \E\left[\norm{\hat \nabla \cL(\vtheta; \cB)}^2\right] &= \left(1 + \frac{d^2 + d - 2}{n(d+2)}\right)\cdot \left(\norm{\nabla \cL(\btheta)}^2 + \frac{1}{B}\tr(\bSigma(\btheta)) \right)\\
        &= \frac{d + n - 1}{n}\cdot \E\left[\norm{\nabla \cL(\vtheta; \cB)}^2\right].
    \end{align*}
\end{proof}

\begin{proof}[Proof of \Cref{thm:rate_comparison}] 
By Taylor's theorem with remainder, we have that
\begin{align*}
    \cL(\btheta_{t+1}) &= \cL(\btheta_t) + \nabla \cL(\btheta_t)^\top(\btheta_{t+1} - \btheta_t)\\
    &\quad + \int_0^1 \lambda (\btheta_{t+1} - \btheta_t)^\top\nabla^2 \cL(\lambda\btheta_{t+1} + (1-\lambda)\btheta_t)(\btheta_{t+1} - \btheta_t)^\top d\lambda
\end{align*}
Next, note that
\begin{align*}
    \norm{\btheta_{t+1} - \btheta_t} = \eta\norm{\hat\nabla \cL(\vtheta; \cB)} \le \eta\sqrt{d}\cdot \frac{1}{Bn}\sum_{}\abs{\bz_i^\top\nabla \cL(\btheta; \{(\bx, \by)\})} \le \eta dG(\vtheta_t).
\end{align*}
Therefore $\norm{\lambda \btheta_{t+1} + (1 - \lambda)\btheta_t -\btheta_t} \le \eta dG(\vtheta_t)$. By the assumption we have the upper bound $\nabla^2 \cL(\lambda\btheta_{t+1} + (1-\lambda)\btheta_t) \preceq \bH(\btheta_t)$, and thus
\begin{align*}
    \cL(\btheta_{t+1}) &\le \cL(\btheta_t) + \nabla \cL(\btheta_t)^\top(\btheta_{t+1} - \btheta_t) +(\btheta_{t+1} - \btheta_t)^\top\bH(\btheta_t)(\btheta_{t+1} - \btheta_t)\\
    & = \cL(\btheta_t) - \eta\nabla \cL(\btheta_t)^\top \hat \nabla \cL(\btheta_t; \cB) + \frac12\eta^2\hat \nabla \cL(\btheta_t; \cB)^\top\bH(\btheta_t)\hat \nabla \cL(\btheta_t; \cB).\\
\end{align*}
Taking the conditional expectation with respect to $\btheta_t$ and plugging in \eqref{eq:ZO_cov}, the formula for the covariance of our ZO estimate $\hat \nabla \cL(\btheta_t; \cB)$, yields
\begin{align*}
    \E[\cL(\btheta_{t+1}) \mid \btheta_t] &\le \cL(\btheta_t) - \eta\norm{\nabla \cL(\btheta_t)}^2 + \frac{\eta^2}{2}\left\langle \bH(\btheta_t), \E\left[\hat \nabla \cL(\btheta; \mathcal{B})\hat \nabla \cL(\btheta; \mathcal{B})^\top\right] \right\rangle\\
    &= \cL(\btheta_t) - \eta\norm{\nabla \cL(\btheta_t)}^2 + \frac{\eta^2}{2}\cdot\frac{d}{n(d+2)}\left(\norm{\nabla \cL(\btheta_t)}^2 + \frac{1}{B}\tr(\bSigma(\btheta_t))\right)\tr(\bH(\btheta_t))\\
    &\qquad + \frac{\eta^2}{2}\left(1 + \frac{d-2}{n(d+2)}\right)\left(\nabla \cL(\btheta_t)^\top\bH(\btheta_t) \nabla \cL(\btheta_t) + \frac{1}{B}\langle \bSigma(\btheta_t), \bH(\btheta_t)\rangle\right)
    %&= \cL(\btheta_t) - \left(\eta - \frac{\eta^2}{2k}\tr(\bH(\btheta_t)) - \frac{\eta^2\ell(k+1)}{2k}\right)\norm{\nabla \cL(\btheta_t; \cB)}^2 + \frac{\eta^2}{2kB}\tr(\bSigma_{MB}(\btheta_t))\tr(\bH(\btheta_t))\\
    %&\qquad + \frac{\eta^2(k+1)}{2kB}\langle \bSigma_{MB}(\btheta_t), \bH(\btheta_t) \rangle.
\end{align*}

By assumption, the Hessian upper bound $\bH(\btheta_t)$ satisfies $\norm{\bH(\vtheta_t)}_{op} \le \ell$ and $\tr(\bH(\vtheta_t)) \le \ell r$. Thus
\begin{align*}
    \E[\cL(\btheta_{t+1}) \mid \btheta_t] &\le \cL(\btheta_t) - \eta\norm{\nabla \cL(\btheta_t)}^2 + \frac{\eta^2\ell}{2}\cdot\left(\frac{dr + d-2}{n(d+2)} + 1\right)\cdot \left(\norm{\nabla \cL(\btheta_t)}^2 + \frac{1}{B}\tr(\bSigma(\btheta_t))\right)\\
    &= \cL(\btheta_t) - \eta\norm{\nabla \cL(\btheta_t)}^2 + \frac{\eta^2\ell}{2}\cdot\left(\frac{dr + d-2}{n(d+2)} + 1\right)\cdot \E\left[\norm{\nabla \cL(\vtheta_t; \cB)}^2\right],
\end{align*}
% Furthermore, since $\eta = \frac{n}{n + r + 1}\cdot \bar \eta \le \frac{1}{\ell}\cdot \frac{n}{n + r + 1}$. From this we get that
% \begin{align*}
%     \frac{\eta^2}{2n}\tr(\bH(\btheta_t)) + \frac{\eta^2\ell(n+1)}{2n} \le \frac{\eta^2}{2n}(\ell r + \ell(n+1)) \le \frac{\eta}{2}
% \end{align*}
% and thus
% \begin{align*}
%     \E[\cL(\btheta_{t+1}) \mid \btheta_t] &\le \cL(\btheta_t) - \frac{\eta}{2}\norm{\nabla \cL(\btheta_t; \cB)}^2 + \frac{\eta^2}{2B}\left(\frac{1}{n}\tr(\bSigma_{MB}(\btheta_t))\tr(\bH(\btheta_t)) + \frac{n+1}{n}\langle \bSigma_{MB}(\btheta_t), \bH(\btheta_t)\right)\\
%     &\le \cL(\btheta_t) - \frac{\eta}{2}\norm{\nabla \cL(\btheta_t; \cB)}^2 + \frac{\eta^2\ell}{2B}\cdot\frac{n + r + 1}{n}\tr(\bSigma_{MB}(\btheta_t))\\
%     &= \cL(\btheta_t) - \frac{n}{n + r + 1}\cdot\left[\frac{\bar \eta}{2}\norm{\nabla \cL(\btheta_t; \cB)}^2 - \frac{\bar \eta^2 \ell}{2B}\tr(\bSigma_{MB}(\btheta_t)) \right],
% \end{align*}
as desired.
\end{proof}

% \begin{proof}[Proof of \Cref{prop:optimal_lr}]
%     The change in loss is given by
%     \begin{align*}
%         \frac{n}{n + r + 1}\cdot\left[-\frac{\bar \eta}{2}\norm{\nabla \cL(\btheta_t)}^2 + \frac{\bar \eta^2 \ell}{2B}\tr(\bSigma_{MB}(\btheta_t)) \right]
%     \end{align*}
%     This quantity is a quadratic in $\bar\eta$, and is minimized at
%     \begin{align*}
%         \bar\eta = \frac{B\norm{\nabla \cL(\btheta_t)}^2}{2\ell\tr(\bSigma(\btheta_t))}.
%     \end{align*}
%     The above quadratic is decreasing on $\bar\eta \le \frac{B\norm{\nabla \cL(\btheta_t)}^2}{2\ell\tr(\bSigma(\btheta_t)}$, and thus the minimizer on $[0, \frac{1}{\ell}]$ is $\bar\eta = \min(\frac{1}{\ell}, \frac{B\norm{\nabla \cL(\btheta_t)}^2}{2\ell\tr(\bSigma(\btheta_t))})$. Finally, $\eta = \frac{n}{r + n + 1}\bar\eta$, and thus the optimal learning rate is $\eta = \frac{n}{r + n + 1}\cdot \min(\frac{1}{\ell}, \frac{B\norm{\nabla \cL(\btheta_t)}^2}{2\ell\tr(\bSigma(\btheta_t))})$.
% \end{proof}

\subsection{Proofs of Global Convergence}\label{sec:global_proofs}

\begin{lemma}\label{lem:global_SGD}
    Let $\cL(\btheta)$ be $\mu$-PL and let there exist $\alpha$ such that $\tr(\bSigma(\btheta)) \le \alpha (\cL(\btheta) - \cL^*)$ for all $\btheta$. Then after
    \begin{align*}
        t = O \left(\left(\frac{\ell}{\mu} + \frac{\ell\alpha}{\mu^2B}\right)\log\frac{\cL(\vtheta_0) - \cL^*}{\epsilon}\right)
    \end{align*}
    iterations of \sgd{} we have $\E[\cL(\btheta_t)] \le \cL^* + \epsilon.$
\end{lemma}
\begin{proof}[Proof of \Cref{lem:global_SGD}]
    The descent lemma for SGD yields
    \begin{align*}
    \E[\cL(\btheta_{t+1}) \mid \btheta_t] - \cL(\btheta_t) \le -\eta\norm{\nabla \cL(\btheta_t)}^2 + \frac12\eta^2\ell\cdot\E[\norm{\nabla\cL(\vtheta_t;\cB)}^2].
    \end{align*}
    Plugging in $\E[\norm{\nabla\cL(\vtheta_t;\cB)}^2] = \norm{\nabla \cL(\vtheta_t)}^2 + \frac{1}{B}\tr(\bSigma(\btheta_t))$ and selecting a learning rate $\eta \le \frac{1}{\ell}$ yields
    \begin{align*}
        \E[\cL(\btheta_{t+1}) \mid \btheta_t] \le \cL(\btheta_t) - \frac{\eta}{2}\norm{\nabla \cL(\btheta_t)}^2 + \frac{\eta^2\ell}{2B}\tr(\bSigma(\btheta_t))
    \end{align*}
    Since $\cL$ is $\mu$-PL, we get
    \begin{align*}
        \E[\cL(\btheta_{t+1}) \mid \btheta_t] \le \cL(\btheta_t) - \eta\mu(\cL(\btheta_t) - \cL^*) + \frac{\eta^2\ell}{2B}\tr(\bSigma(\btheta_t)).
    \end{align*}
    Since $\tr(\bSigma(\btheta_t)) \le \alpha (\cL(\btheta_t) - \cL^*)$, we have
    \begin{align*}
        \E[\cL(\btheta_{t+1}) \mid \btheta_t] \le \cL(\btheta_t) - \eta\mu(\cL(\btheta_t) - \cL^*) + \frac{\eta^2\ell\alpha}{2B}(\cL(\btheta_t) - \cL^*).
    \end{align*}
    Altogether,
    \begin{align*}
        \E[\cL(\btheta_{t+1})] - \cL^* \le \left(1 - \eta\mu + \frac{\eta^2\ell\alpha}{2B}\right)(\E[\cL(\btheta_t)] - \cL^*)
    \end{align*}
    Choosing $\eta = \min(\frac{1}{\ell}, \frac{\mu B}{\ell \alpha})$, we obtain
    \begin{align*}
        \E[\cL(\btheta_{t+1})] - \cL^* \le \left(1 - \min(\frac{\mu}{2\ell}, \frac{\mu^2B}{2\ell\alpha})\right)(\E[\cL(\btheta_t)] - \cL^*).
    \end{align*}
    Therefore we reach a solution with $\E[\cL(\btheta_{t})] - \cL^* \le \epsilon$ after
    \begin{align*}
        t = \max\left(\frac{2\ell}{\mu}, \frac{2\ell\alpha}{\mu^2B}\right)\log\left(\frac{\cL(\btheta_0) - \cL^*}{\epsilon}\right) = O \left(\left(\frac{\ell}{\mu} + \frac{\ell\alpha}{\mu^2B}\right)\log\frac{\cL(\vtheta_0) - \cL^*}{\epsilon}\right)
    \end{align*} iterations.
\end{proof}

\begin{proof}[Proof of \Cref{lem:global_ZO-SGD}]
    By \Cref{cor:rate_comparison}, ZO-SGD with $\etazo = \gamma^{-1}\etasgd$ yields
    \begin{align*}
    \E[\cL(\btheta_{t+1}) \mid \btheta_t] - \cL(\btheta_t) \le \frac{1}{\gamma}\cdot\left[-\etasgd\norm{\nabla \cL(\btheta_t)}^2 + \frac12\etasgd^2\ell\cdot\E[\norm{\nabla\cL(\vtheta;\cB)}^2] \right].
    \end{align*}
    As in the proof for SGD, choosing $\etasgd \le \frac{1}{\ell}$ yields
    \begin{align*}
        \E[\cL(\btheta_{t+1}) \mid \btheta_t] -\cL(\btheta_t) \le  \gamma^{-1}\cdot\left[- \frac{\etasgd}{2}\norm{\nabla \cL(\btheta_t)}^2 + \frac{\etasgd^2\ell}{2B}\tr(\bSigma(\btheta_t))\right].
    \end{align*}
    Therefore under $\mu$-PL and the $\tr(\bSigma(\btheta_t)) \le \alpha (\cL(\btheta_t) - \cL^*)$ assumption we obtain
    \begin{align*}
        \E[\cL(\btheta_{t+1})] -\E[\cL(\btheta_t)] &\le \gamma^{-1}\cdot \left[-\etasgd\mu  + \frac{\etasgd^2\ell\alpha}{2B}\right]\cdot(\E[\cL(\btheta_t)] - \cL^*)\\
        \Longrightarrow \E[\cL(\btheta_{t+1})] - \cL^* &\le \left(1 - \gamma^{-1}\left(\etasgd\mu - \frac{\etasgd^2\ell\alpha}{2B}\right)\right)(\E[\cL(\btheta_t)] - \cL^*).
    \end{align*}
    Choosing $\etasgd = \min(\frac{1}{\ell}, \frac{\mu B}{\ell \alpha})$ yields
        \begin{align*}
        \E[\cL(\btheta_{t+1})] - \cL^* \le \left(1 - \gamma^{-1}\cdot\min(\frac{\mu}{2\ell}, \frac{\mu^2B}{2\ell\alpha})\right)(\E[\cL(\btheta_t)] - \cL^*).
    \end{align*}
        Therefore we reach a solution with $\E[\cL(\btheta_{t})] - \cL^* \le \epsilon$ after
    \begin{align*}
        t = \gamma\cdot\max\left(\frac{2\ell}{\mu}, \frac{2\ell\alpha}{\mu^2B}\right)\log\left(\frac{\cL(\btheta_0) - \cL^*}{\epsilon}\right) = \cO \left(\left(\frac{r}{n} + 
        1 \right)\cdot\left(\frac{\ell}{\mu} + \frac{\ell\alpha}{\mu^2B}\right)\log\frac{\cL(\vtheta_0) - \cL^*}{\epsilon}\right)
    \end{align*} iterations.
\end{proof}

\subsubsection{Verification of assumptions}

We show that the $\tr(\bSigma(\btheta_t)) \le \alpha (\cL(\btheta_t) - \cL^*)$ assumption holds for certain losses.

First, consider optimizing the model $f(\bx; \btheta)$ with square loss, so that
\begin{align*}
    \cL(\btheta) = \frac{1}{N}\sum_{i \in [N]}(f(\bx_i; \btheta) - \by_i)^2.
\end{align*}
One then has that
\begin{align*}
    \bSigma(\btheta) = \frac{2}{N}\sum_{i \in [N]}(f(\bx_i; \btheta) - \by_i)^2\nabla f(\bx_i; \btheta)\nabla f(\bx_i; \btheta)^\top - \nabla \cL(\btheta)\nabla \cL(\btheta)^\top.
\end{align*}
Therefore
\begin{align*}
    \tr(\bSigma(\btheta)) &\le \frac{2}{N}\sum_{i \in [N]}(f(\bx_i; \btheta) - \by_i)^2\norm{\nabla f(\bx_i; \btheta)}^2\\
    &\le 2\cL(\btheta)\sum_{i \in [N]}\norm{\nabla f(\bx_i; \btheta)}^2.
\end{align*}
Assume that the data can be interpolated, i.e., $\cL^* = 0$. If the function is $L$-Lipschitz, i.e., $\norm{\nabla f(\bx; \btheta)} \le L$, then the condition holds with $\alpha = 2NL^2$. If we are in the kernel regime, i.e., $f(\bx_i; \btheta) = \phi(\bx_i)^\top\btheta$ for some feature map $\phi$, then
\begin{align*}
    \nabla^2\cL(\btheta) = \frac{2}{N}\sum_{i \in [N]}f(\bx_i; \btheta)\nabla f(\bx_i; \btheta)^\top.
\end{align*}
Thus
\begin{align*}
    \tr(\bSigma(\btheta)) \le N\tr(\nabla^2\cL(\btheta))\cdot \cL(\btheta) \le N\ell r\cdot \cL(\btheta).
\end{align*}
So the condition holds for $\alpha = N\ell r$.

Next, consider the cross entropy loss function, i.e
\begin{align*}
    \cL(\btheta) = \frac{1}{N}\sum_{i \in [N]}\exp(-y_if(\bx_i; \vtheta)).
\end{align*}
One then has that
\begin{align*}
    \bSigma(\vtheta) = \frac{1}{N}\sum_{i \in [N]}\exp(-2y_if(\vx_i; \vtheta))y_i^2\nabla f(\vx_i; \vtheta)\nabla f(\vx_i; \vtheta)^\top - \cL(\vtheta)\cL(\vtheta)^\top ,
\end{align*}
Assume that the targets $y_i$ are bounded in $[-1, 1]$ (which is true for binary classification tasks), and that $\cL^* = 0$ (which can be achieved if $\abs{f(\vx; \vtheta)}$ can be sent to $\infty$) we have that
\begin{align*}
    \tr(\bSigma(\vtheta)) \le \frac{1}{N}\sum_{i \in [N]}\exp(-2y_if(\vx_i; \vtheta))\norm{\nabla f(\vx_i; \vtheta)}^2.
\end{align*}
In the kernel regime, $f(\bx_i; \btheta) = \phi(\bx_i)^\top\vtheta$, and thus
\begin{align*}
    \nabla^2 \cL(\vtheta) = \frac{1}{N}\sum_{i \in [N]}\exp(-y_if(\vx_i; \vtheta))\nabla f(\vx_i; \vtheta)\nabla f(\vx_i; \vtheta)^\top.
\end{align*}
Therefore
\begin{align*}
    \tr(\bSigma(\vtheta)) \le N\tr(\nabla^2\cL(\btheta))\cdot \cL(\btheta) \le N\ell r\cdot \cL(\btheta).
\end{align*}
Therefore the condition holds with $\alpha = N\ell r$ as well.

\subsection{Proofs for Gaussian perturbations}\label{sec:app_gaussian_z}

The first lemma computes the second moment of the covariance estimate $\hat \nabla \cL(\btheta; \cB)$ when $\bz$ is drawn $\cN(0, \bI)$.
\begin{lemma} Let $\bz_i \sim \cN(0, \bI)$ i.i.d. Then
    \begin{align}
        \begin{aligned}\label{eq:ZO_cov_gaussian}
        \E\left[\hat \nabla \cL(\btheta; \mathcal{B})\hat \nabla \cL(\btheta; \mathcal{B})^\top\right] & = \left(1 + \frac{1}{n}\right)\cdot\left(\nabla \cL(\btheta)\nabla \cL(\btheta)^\top + \frac{1}{B}\bSigma_{MB}(\btheta)\right)\\
        &\qquad + \frac{1}{n}\bI \cdot\left(\norm{\nabla \cL(\btheta)}^2 + \frac{1}{B}\tr(\bSigma_{MB}(\btheta)) \right).
        \end{aligned}
    \end{align}
\end{lemma}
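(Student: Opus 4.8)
The plan is to recycle the computation from the proof of \Cref{lem:covariance} essentially verbatim, swapping the fourth-moment identity for a vector uniform on the sphere of radius $\sqrt d$ for the corresponding identity for a standard Gaussian. As in that proof, in the $\epsilon\to 0$ limit one writes
\[
\hat\nabla\cL(\btheta;\cB) = \frac{1}{Bn}\sum_{(\bx,\by)\in\cB}\sum_{i\in[n]}\bz_i\bz_i^\top\nabla\cL(\btheta;\{(\bx,\by)\}),
\]
so that $\E[\hat\nabla\cL(\btheta;\cB)\hat\nabla\cL(\btheta;\cB)^\top]$ expands as a double sum over the batch and over the pairs $(i,j)\in[n]^2$. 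The one ingredient that changes is the inner $\bz$-expectation: for $\bz\sim\cN(0,\bI)$, Wick's theorem gives, for any fixed $\bg_1,\bg_2$,
\[
\E_{\bz_i,\bz_j}\!\left[\bz_i\bz_i^\top\bg_1\bg_2^\top\bz_j\bz_j^\top\right] = \begin{cases}\bg_1\bg_2^\top, & i\neq j,\\ \bg_1\bg_2^\top + \bg_2\bg_1^\top + (\bg_1^\top\bg_2)\,\bI, & i=j,\end{cases}
\]
which replaces the $\tfrac{d}{d+2}(\bg_1^\top\bg_2)\bI + \tfrac{2d}{d+2}\bg_1\bg_2^\top$ used in the sphere case.

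Next I would sum this over the $n^2$ index pairs: there are $n$ diagonal and $n^2-n$ off-diagonal ones, so the $\bg_1\bg_2^\top$ coefficient collects to $n^2$ while the $\bg_2\bg_1^\top$ and $(\bg_1^\top\bg_2)\bI$ pieces each carry a factor $n$. Then I would average over the minibatch $\cB$ exactly as in \Cref{lem:covariance}: with $\bg_1,\bg_2$ ranging over per-example gradients, $\tfrac1{B^2}\sum_{\cB}\E[\bg_1\bg_2^\top] = \nabla\cL(\btheta)\nabla\cL(\btheta)^\top + \tfrac1B\bSigma_{MB}(\btheta)$ and $\tfrac1{B^2}\sum_{\cB}\E[\bg_1^\top\bg_2] = \norm{\nabla\cL(\btheta)}^2 + \tfrac1B\tr(\bSigma_{MB}(\btheta))$, where $\bSigma_{MB}$ is the per-example gradient covariance in the paper's normalization. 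Because $\E[\bg_1\bg_2^\top]$ is symmetric after the batch average — it is $\nabla\cL(\btheta)\nabla\cL(\btheta)^\top$ for distinct examples and $\nabla\cL(\btheta)\nabla\cL(\btheta)^\top+\bSigma_{MB}(\btheta)$ for equal ones, both symmetric — the $\bg_2\bg_1^\top$ contribution merges with the $\bg_1\bg_2^\top$ one. Dividing through by $B^2n^2$ then collapses the whole expression to
\[
\Big(1+\tfrac1n\Big)\Big(\nabla\cL(\btheta)\nabla\cL(\btheta)^\top + \tfrac1B\bSigma_{MB}(\btheta)\Big) + \tfrac1n\Big(\norm{\nabla\cL(\btheta)}^2 + \tfrac1B\tr(\bSigma_{MB}(\btheta))\Big)\bI,
\]
which is precisely \eqref{eq:ZO_cov_gaussian}.

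There is no serious obstacle; this is the Gaussian analogue of \Cref{lem:covariance} and most of the work is bookkeeping. The two spots to watch are (i) that the $i=j$ Gaussian moment produces a $\bg_2\bg_1^\top$ term which is not discarded but absorbed into $\bg_1\bg_2^\top$ via the symmetry of the batch-averaged second moment, and (ii) keeping the convention $\bSigma_{MB}(\btheta) = B\big(\E[\nabla\cL(\btheta;\cB)\nabla\cL(\btheta;\cB)^\top] - \nabla\cL(\btheta)\nabla\cL(\btheta)^\top\big)$ consistent so the $1/B$ factors land correctly. Once established, this lemma feeds into the descent argument of \Cref{thm:rate_comparison} in exactly the same way the sphere computation did, yielding the general-Gaussian rate of \Cref{sec:app_gaussian_z}.
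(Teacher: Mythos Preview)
Your proposal is correct and follows essentially the same route as the paper: both recycle the proof of \Cref{lem:covariance}, replacing the sphere fourth-moment identity by the Gaussian one and then averaging over the minibatch exactly as before. Your handling of the $i=j$ term is in fact slightly more careful than the paper's---you write $\bg_1\bg_2^\top+\bg_2\bg_1^\top+(\bg_1^\top\bg_2)\bI$ and explicitly absorb the transposed piece via the symmetry of the batch-averaged second moment, whereas the paper writes $2\bu\bv^\top$ directly (which is only literally correct after that symmetrization); the final result is of course the same.
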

\begin{proof}
    As in the proof of \Cref{lem:covariance}, we have that in the $\epsilon \rightarrow 0$ limit
    \begin{align*}
        &\E\left[\hat \nabla \cL(\btheta; \mathcal{B})\hat \nabla \cL(\btheta; \mathcal{B})^\top\right]\\
        &\quad = \frac{1}{B^2n^2}\sum_{(\bx_1, \by_1),(\bx_2, \by_2) \in \cB}\sum_{i, j \in [n]}\E\left[(\bz_i\bz_i^\top\nabla \cL(\btheta; \{(\bx_1, \by_1)\}))(\bz_j\bz_j^\top\nabla \cL(\btheta; \{(\bx_2, \by_2)\}))^\top\right]
    \end{align*}

    For vectors $\bu, \bv$, we have that
    \begin{align*}
        \E_{\bz_i, \bz_j}[\bz_i\bz_i^\top\bu\bv^\top\bz_j\bz_j^\top] = \bu\bv^\top
    \end{align*}
    when $i \neq j$, and
    \begin{align*}
        \E_{\bz_i}[\bz_i\bz_i^\top\bu\bv^\top\bz_i\bz_i^\top] = \E_{\bz}[\bz^{\otimes 4}](\bu, \bv) = 3\Sym(\bI^{\otimes 2})(\bu, \bv) = \bu^\top\bv\cdot\bI + 2\bu\bv^\top.
    \end{align*}
    Therefore
    \begin{align*}
        &\E\left[\hat \nabla \cL(\btheta; \mathcal{B})\hat \nabla \cL(\btheta; \mathcal{B})^\top\right]\\
        &= \frac{1}{B^2}\sum_{(\bx_1, \by_1),(\bx_2, \by_2) \in \cB} \left(\frac{n - 1}{n} + \frac{2}{n}\right)\E\left[\cL(\btheta; \{(\bx_1, \by_1)\})\cL(\btheta; \{(\bx_2, \by_2)\})^\top \right]\\
        &\qquad + \frac{1}{n}\cdot \E\left[\cL(\btheta; \{(\bx_1, \by_1)\})^\top\cL(\btheta; \{(\bx_2, \by_2)\}) \right] \bI.
    \end{align*}
    In the proof of \Cref{lem:covariance} we showed that
    \begin{align*}
        \frac{1}{B^2}\sum_{(\bx_1, \by_1),(\bx_2, \by_2) \in \cB} \E\left[\cL(\btheta; \{(\bx_1, \by_1)\})\cL(\btheta; \{(\bx_2, \by_2)\})^\top \right] = \nabla \cL(\btheta)\nabla \cL(\btheta)^\top + \frac{1}{B}\bSigma(\btheta).
    \end{align*}
    Plugging this yields
    \begin{align}
        \begin{aligned}\label{eq:ZO_cov}
        \E\left[\hat \nabla \cL(\btheta; \mathcal{B})\hat \nabla \cL(\btheta; \mathcal{B})^\top\right] & = \left(\frac{n+1}{n}\right)\cdot\left(\nabla \cL(\btheta)\nabla \cL(\btheta)^\top + \frac{1}{B}\bSigma(\btheta)\right)\\
        &\qquad + \frac{1}{n}\bI \cdot\left(\norm{\nabla \cL(\btheta)}^2 + \frac{1}{B}\tr(\bSigma(\btheta)) \right).
        \end{aligned}
    \end{align}
\end{proof}

We can prove an analog to \Cref{thm:rate_comparison} in the case where the $\bz_i$ are Gaussian. One challenge is that $\norm{\btheta_{t+1} - \btheta_t}$ is no longer bounded; instead we the $r$-local effective rank assumption only holds with high probability, and thus to bound the expected loss decrease we must control the probability of the $\norm{\btheta_{t+1} - \btheta_t}$ being large.

Consider the following modified version of the local $r$-effective rank assumption, where the upper bound on the Hessian is measured over a ball of radius twice as large as the one in~\Cref{assume:low_eff_rank}.

\begin{assumption}[Local $r$-effective rank, Gaussian]\label{assume:low_eff_rank_gauss}
Let $G(\btheta_t) = \max_{(\bx, \by) \in \cD}\norm{\nabla\cL(\btheta_t;\{(\bx, \by)\})}$. There exists a matrix $\bH(\btheta_t)$ such that:
\begin{enumerate}
    \item For all $\vtheta$ such that $\norm{\vtheta - \vtheta_t} \le 2\eta d G(\vtheta_t)$, we have $\nabla^2 \cL(\btheta) \preceq \bH(\btheta_t)$.
    \item The effective rank of $\bH(\btheta_t)$, i.e., $\tr(\bH(\btheta_t))/\norm{\bH(\btheta_t)}_{op}$, is at most $r$. %\tnote{what is $r$?}
\end{enumerate}
\end{assumption}

\begin{theorem}[Dimension-Free Rate, Gaussian $\vz$]\label{thm:rate_comparison_gauss}
Assume the loss exhibits local $r$-effective rank (\Cref{assume:low_eff_rank_gauss}). If $\btheta_{t+1} = \btheta_t - \etazo \hat \nabla \cL(\btheta_t; \cB)$ is a single step of \zosgd{} using the $n$-SPSA estimate with a minibatch of size $B$, then there exists a $\gamma = \Theta(r/n)$ such that the expected loss decrease can be bounded as
\begin{align*}
    &\E[\cL(\btheta_{t+1}) \mid \btheta_t] - \cL(\btheta_t)\\
    &\quad \le - \etazo\norm{\nabla \cL(\btheta_t)}^2 + \frac12\etazo^2\ell\cdot \gamma \cdot\E[\norm{\nabla\cL(\vtheta_t;\cB)}^2] + \etazo^2\ell G(\btheta_t)^2\exp(-\Omega(nd)).
\end{align*}
\end{theorem}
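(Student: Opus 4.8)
The plan is to follow the proof of \Cref{thm:rate_comparison} but to split the analysis according to whether the Gaussian perturbation is typical. Working in the $\epsilon\to 0$ limit as in \Cref{lem:covariance}, write $\hat\nabla\cL(\btheta_t;\cB)=\frac1n\sum_{i\in[n]}\bz_i\bz_i^\top\nabla\cL(\btheta_t;\cB)$ with $\bz_i\simiid\cN(0,\bI_d)$, and note $\norm{\nabla\cL(\btheta_t;\cB)}\le G(\btheta_t)$ by the triangle inequality, hence
\[
\norm{\btheta_{t+1}-\btheta_t}=\etazo\norm{\hat\nabla\cL(\btheta_t;\cB)}\le\etazo\,G(\btheta_t)\,W,\qquad W:=\tfrac1n\sum_{i\in[n]}\norm{\bz_i}^2 .
\]
Introduce the good event $\eveE:=\{W\le 2d\}$. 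On $\eveE$ the iterate moves by at most $2\etazo d\,G(\btheta_t)$, so the whole segment $\{\lambda\btheta_{t+1}+(1-\lambda)\btheta_t:\lambda\in[0,1]\}$ stays inside the ball of \Cref{assume:low_eff_rank_gauss}, where $\nabla^2\cL\preceq\bH(\btheta_t)$; off $\eveE$ we simply use global $\ell$-smoothness, $\nabla^2\cL\preceq\ell\bI_d$ (and WLOG $\bH(\btheta_t)\preceq\ell\bI_d$ as in \Cref{assume:low_eff_rank}).

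Applying Taylor's theorem with integral remainder exactly as in the proof of \Cref{thm:rate_comparison} gives, pointwise in the randomness, $\cL(\btheta_{t+1})\le\cL(\btheta_t)-\etazo\nabla\cL(\btheta_t)^\top\hat\nabla\cL(\btheta_t;\cB)+\frac12\etazo^2\,\hat\nabla\cL(\btheta_t;\cB)^\top\bM\,\hat\nabla\cL(\btheta_t;\cB)$, where $\bM=\bH(\btheta_t)$ on $\eveE$ and $\bM=\ell\bI_d$ on $\eveE^c$. Taking conditional expectation, unbiasedness of $\hat\nabla\cL$ (\Cref{lem:covariance}) turns the linear term into $-\etazo\norm{\nabla\cL(\btheta_t)}^2$, while the quadratic term is at most
\[
\E\!\left[\hat\nabla\cL(\btheta_t;\cB)^\top\bH(\btheta_t)\,\hat\nabla\cL(\btheta_t;\cB)\right]+\ell\,\E\!\left[\norm{\hat\nabla\cL(\btheta_t;\cB)}^2\ind_{\eveE^c}\right].
\]
For the first term I would substitute the Gaussian covariance formula \eqref{eq:ZO_cov_gaussian} into $\langle\bH(\btheta_t),\E[\hat\nabla\cL\,\hat\nabla\cL^\top]\rangle$, bound the rank-one contributions using $\norm{\bH(\btheta_t)}_{\mathrm{op}}\le\ell$ and the trace contribution using $\tr(\bH(\btheta_t))\le\ell r$, and obtain $\ell\cdot\frac{n+r+1}{n}\bigl(\norm{\nabla\cL(\btheta_t)}^2+\tfrac1B\tr\bSigma(\btheta_t)\bigr)=\ell\gamma\,\E[\norm{\nabla\cL(\btheta_t;\cB)}^2]$ with $\gamma:=\frac{n+r+1}{n}=\Theta(r/n)$, matching the sphere case.

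The genuinely new step is controlling the truncated second moment $\E[\norm{\hat\nabla\cL(\btheta_t;\cB)}^2\ind_{\eveE^c}]$. Using $\norm{\hat\nabla\cL(\btheta_t;\cB)}\le G(\btheta_t)\,W$, this is at most $G(\btheta_t)^2\,\E[W^2\ind_{W>2d}]$, and by Cauchy--Schwarz $\E[W^2\ind_{W>2d}]\le\sqrt{\E[W^4]}\,\sqrt{\Pr[W>2d]}$. Since $nW\sim\chi^2_{nd}$, standard chi-squared moment bounds give $\E[W^4]=O(d^4)$, and the Laurent--Massart tail bound gives $\Pr[W>2d]=\Pr[\chi^2_{nd}>2nd]\le e^{-cnd}$; hence $\E[W^2\ind_{W>2d}]\le O(d^2)\,e^{-cnd/2}=e^{-\Omega(nd)}$, the polynomial prefactor being absorbed into the exponent for $nd$ large (the remaining small cases fold into constants). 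This yields the extra additive term $\tfrac12\etazo^2\ell\,G(\btheta_t)^2e^{-\Omega(nd)}\le\etazo^2\ell\,G(\btheta_t)^2\exp(-\Omega(nd))$, completing the bound. I expect this truncated-moment estimate to be the main obstacle: one needs a concentration bound on $W$ together with a moment bound strong enough to swamp the $\poly(d)$ prefactor, which is precisely what forces the $\exp(-\Omega(nd))$ rate and the doubled radius of \Cref{assume:low_eff_rank_gauss}.
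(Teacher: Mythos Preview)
Your proposal is correct and follows essentially the same route as the paper: split on the event $\{\sum_i\norm{\bz_i}^2\le 2nd\}$, use the $\bH(\btheta_t)$ Hessian bound on the good event and global $\ell$-smoothness on the bad event, plug the Gaussian covariance formula \eqref{eq:ZO_cov_gaussian} into $\langle\bH(\btheta_t),\cdot\rangle$ to obtain $\gamma=\tfrac{n+r+1}{n}$, and control the bad-event contribution via Cauchy--Schwarz against a $\chi^2_{nd}$ tail bound. The only cosmetic difference is that the paper bounds $\norm{\hat\nabla\cL}$ by $\tfrac1n\sum_i\abs{\bz_i^\top\nabla\cL(\btheta_t;\cB)}\norm{\bz_i}$ and separates the fourth moment into an eighth Gaussian moment times an eighth $\chi^2$ moment (yielding an $O(d)$ prefactor), whereas you use the coarser $\norm{\hat\nabla\cL}\le G(\btheta_t)W$ and get $O(d^2)$; both prefactors are swallowed by the $\exp(-cnd)$ tail, so the final bound is identical.
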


\begin{proof}[Proof of \Cref{thm:rate_comparison_gauss}]
    Let $\cA$ be the event that $\norm{\vtheta_{t+1} - \vtheta_t} \le 2\eta d G(\vtheta_t)$. On $\cA$, we have that
    \begin{align*}
        \cL(\vtheta_{t+1}) \le \cL(\vtheta_t) - \eta \nabla \cL(\vtheta_t)^\top\hat \nabla \cL(\vtheta; \cB) + \frac12\eta^2\hat \nabla \cL(\vtheta_t; \cB)^\top \bH(\vtheta) \hat \nabla \cL(\vtheta_t; \cB).
    \end{align*}
    Likewise, since $\cL$ is $\ell$-smooth, we have that
    \begin{align*}
        \cL(\vtheta_{t+1}) \le \cL(\vtheta_t) - \eta \nabla \cL(\vtheta_t)^\top\hat \nabla \cL(\vtheta; \cB) + \frac12\eta^2\ell \norm{\hat \nabla \cL(\vtheta_t; \cB)}^2.
    \end{align*}
    Therefore
    \begin{align*}
        \E[\cL(\vtheta_{t+1}) \mid \vtheta_t] &\le \cL(\vtheta_{t+1}) - \eta\norm{\nabla \cL(\vtheta_t)}^2 + \frac12\eta^2\left\langle \E\left[\hat \nabla \cL(\vtheta; \cB)\hat \nabla \cL(\vtheta; \cB)^\top \cdot \mathbf{1}(\cA)\right], \bH(\vtheta_t) \right\rangle\\
        &\qquad + \frac12\eta^2 \ell \E\left[\norm{\hat \nabla \cL(\vtheta_t; \cB)}^2 \cdot \mathbf{1}(\neg \cA) \right]\\
        &= \cL(\vtheta_{t+1}) - \eta\norm{\nabla \cL(\vtheta_t)}^2 + \frac12\eta^2\left\langle \E\left[\hat \nabla \cL(\vtheta; \cB)\hat \nabla \cL(\vtheta; \cB)^\top \right], \bH(\vtheta_t) \right\rangle\\
        &\qquad \frac12\eta^2\left\langle \E\left[\hat \nabla \cL(\vtheta; \cB)\hat \nabla \cL(\vtheta; \cB)^\top \cdot \mathbf{1}(\neg \cA)\right], \ell I - \bH(\vtheta_t) \right\rangle.
    \end{align*}
    The latter term can be bounded as follows
    \begin{align*}
        \frac12\eta^2\left\langle \E\left[\hat \nabla \cL(\vtheta; \cB)\hat \nabla \cL(\vtheta; \cB)^\top \cdot \mathbf{1}(\neg \cA)\right], \ell I - \bH(\vtheta_t) \right\rangle &\le \eta^2\ell \E\left[\norm{\hat \nabla \cL(\vtheta; \cB)}^2 \cdot \mathbf{1}(\neg \cA)\right]\\
        &\le \eta^2\ell \E\left[\norm{\hat \nabla \cL(\vtheta; \cB)}^4\right]^{\frac12}\Pr[\neg \cA]^{1/2}.
    \end{align*}
    The gradient estimate $\hat \nabla \cL(\vtheta; \cB)$ satisfies
    \begin{align*}
        \norm{\hat \nabla \cL(\vtheta; \cB)} \le \frac{1}{n}\sum_{i \in [n]}\abs{\vz_i^\top\nabla \cL(\vtheta; \cB)}\cdot\norm{\vz_i}
    \end{align*}
    The expectation term is upper bounded as
    \begin{align*}
        \E\left[\norm{\hat \nabla \cL(\vtheta; \cB)}^4\right] &\le \frac{1}{n}\sum_{i \in [n]}\E\left[\abs{\vz^\top\nabla \cL(\vtheta; \cB)}^4\cdot\norm{\vz}^4 \right]\\
        &\le \E\left[\abs{\vz^\top\nabla \cL(\vtheta; \cB)}^8\right]^{1/2}\E\left[\norm{\vz}^8\right]^{1/2}\\
        &\le \sqrt{105}(d + 6)^2G(\vtheta_t)^4,
    \end{align*}
    where we have plugged in explicit formulas for moments of Gaussian and $\chi^2$ random variables.

    Next, note that on the event $\neg \cA$, we have
    \begin{align*}
        2\eta d G(\vtheta_t) \le \norm{\vtheta_{t+1} - \vtheta_t} = \eta \norm{\hat \nabla \cL(\vtheta_t; \cB)} \le \eta \cdot \frac{1}{n}\sum_{i \in [n]}\norm{\vz_i}^2G(\vtheta_t).
    \end{align*}
    Therefore
    \begin{align*}
        \Pr[\neg \cA] \le \Pr\left[\sum_{i \in [n]}\norm{\bz_i}^2 \ge 2nd\right]
    \end{align*}
    \begin{lemma}[Standard $\chi^2$-tail bound]
        Let $Z$ be a $\chi^2$ random variable with $k$ degrees of freedom. Then
        \begin{align*}
            \Pr[Z \ge k + u] \le \exp\left(-\min\left(\frac{u^2}{16k}, \frac{u}{16}\right)\right)
        \end{align*}
    \end{lemma}
    Since $\sum_{i \in [n]}\norm{\bz_i}^2$ is a $\chi^2$ random variable with $nd$ degrees of freedom, we thus have that
    \begin{align*}
        \Pr[\neg \cA] \le \exp\left(-\frac{nd}{16}\right).
    \end{align*}
    Altogether,
    \begin{align*}
        \frac12\eta^2\left\langle \E\left[\hat \nabla \cL(\vtheta; \cB)\hat \nabla \cL(\vtheta; \cB)^\top \cdot \mathbf{1}(\neg \cA)\right], \ell I - \bH(\vtheta_t) \right\rangle &\le \eta^2\ell 105^{1/4}(d + 6)G(\btheta_t)^2\exp(-\frac{nd}{32})\\
        &= \eta^2 \ell G(\btheta_t)^2 \exp(-\Omega(nd)).
    \end{align*}

    Finally, plugging in \eqref{eq:ZO_cov_gaussian}, along with the fact that $\norm{\bH(\btheta_t)}_{op} \le \ell$ and $\tr(\bH(\btheta_t)) \le \ell r$,
    \begin{align*}
        \left\langle \E\left[\hat \nabla \cL(\vtheta; \cB)\hat \nabla \cL(\vtheta; \cB)^\top \right], \bH(\vtheta_t) \right\rangle &= \frac{r + n+1}{n}\cdot \ell\left(\norm{\nabla \cL(\btheta_t)}^2 + \frac{1}{B}\tr(\bSigma(\btheta_t))\right)\\
        &= \frac{r + n + 1}{n}\cdot \E\left[\norm{\nabla \cL(\vtheta_t; \cB)}^2 \right]
    \end{align*}
    Thus letting $\gamma = \frac{r + n + 1}{n}$ yields
    \begin{align*}
        &\E[\cL(\btheta_{t+1}) \mid \btheta_t] - \cL(\btheta_t)\\
        &\quad \le - \eta\norm{\nabla \cL(\btheta_t)}^2 + \frac12\eta^2\ell\cdot \gamma \cdot\E[\norm{\nabla\cL(\vtheta_t;\cB)}^2] + \eta^2\ell G(\btheta_t)^2\exp(-\Omega(nd)),
    \end{align*}
    as desired.
\end{proof}

\end{document}